\documentclass{article}
\usepackage{graphicx}

\title{The Importance of Being Smoothly Calibrated}

\author{Parikshit Gopalan\\
Apple
\and Konstantinos Stavropoulos\thanks{Work done during an internship at Apple.}\\
UT Austin\and Kunal Talwar\\
Apple
\and Pranay Tankala\samethanks\\
Harvard}
\date{}
\usepackage[utf8]{inputenc}
\usepackage[T1]{fontenc}
\usepackage{amsmath, amssymb, amsthm, bbm, graphicx, url}
\usepackage{amsfonts,fullpage}
\usepackage{float}
\usepackage{graphicx}

\usepackage{bbm}
\usepackage{xcolor}
\usepackage{natbib}
\usepackage[boxruled,linesnumbered,vlined]{algorithm2e}
\usepackage[pagebackref]{hyperref}
\hypersetup{
    colorlinks=true,
    linktocpage=true,
    linkcolor=blue!50!black,  
    citecolor=blue!50!black,  
    urlcolor=blue!50!black    
}
\usepackage[capitalize]{cleveref}
\usepackage{dsfont}
\usepackage{xspace}
\usepackage{comment}
\usepackage{tikz}

\usepackage{thmtools}
\usepackage{thm-restate}

\newcommand{\eat}[1]{}

\newcommand{\samethanks}{\footnotemark[\value{footnote}]}

\newtheorem*{definition*}{Definition}
\newtheorem*{proposition*}{Proposition}
\newtheorem*{corollary*}{Corollary}

\newtheorem{theorem}{Theorem}[section]
\newtheorem{lemma}[theorem]{Lemma}
\newtheorem{definition}[theorem]{Definition}

\newtheorem{corollary}[theorem]{Corollary}

\newtheorem{remark}[theorem]{Remark}
\newtheorem{claim}[theorem]{Claim}

\newcommand{\R}{\mathbb{R}}
\newcommand{\X}{\mathcal{X}}

\newcommand{\ty}{\mathbf{\tilde{y}}}

\newcommand{\lt}{\left}
\newcommand{\rt}{\right}

\newcommand{\zo}{\ensuremath{\{0,1\}}}

\newcommand{\dem}{W}
\newcommand{\demc}{\mathsf{dEMC}}
\newcommand{\dtc}{\mathsf{dCE}}

\newcommand{\eps}{\varepsilon}

\newcommand{\abs}[1]{\ensuremath \Bigl\lvert #1 \Bigr\rvert}

\newcommand{\fr}[1]{\ensuremath{\frac{1}{#1}}}

\newcommand{\ind}[1]{\ensuremath{\mathbf{1}[#1]}}

\DeclareMathOperator*{\E}{\mathbf{E}}

\newcommand{\Lip}{\mathsf{Lip}}

\newcommand{\smCE}{\mathsf{smCE}}

\newcommand{\sign}{\mathrm{sign}}

\newcommand{\Ber}{\mathrm{Ber}}

\newcommand{\calib}{\mathsf{Cal}}

\newcommand{\ECE}{\mathsf{ECE}}

\newcommand{\PX}{\mathsf{P}(\mathcal{X})}
\newcommand{\mQ}{\mathcal{Q}}

\newcommand{\uCE}{\overline{\mathsf{dCE}}}
\newcommand{\lCE}{\underline{\mathsf{dCE}}}
\newcommand{\dCE}{\mathsf{dCE}}

\newcommand{\cl}{\mathsf{Cal}}
\newcommand{\D}{\mathcal{D}}
\newcommand{\J}{\mathsf{Pld}}

\newcommand{\mS}{\mathcal{S}}

\newcommand{\mL}{\mathcal{L}}

\newcommand{\mC}{\mathcal{C}}
\newcommand{\mD}{\mathcal{D}}

\newcommand{\x}{\mathbf{x}}
\newcommand{\z}{\mathbf{z}}

\newcommand{\y}{\mathbf{y}}
\newcommand{\p}{\mathbf{p}}
\newcommand{\q}{\mathbf{q}}

\newcommand{\Pred}{PLD\xspace}
\newcommand{\pred}{PLD\xspace}

\newcommand{\preds}{PLDs\xspace}

\newcommand{\cdl}{\mathsf{CDL}}

\newcommand{\smce}{\mathsf{smCE}}
\newcommand{\ext}{\mathrm{lift}}

\usepackage{bm}
\usepackage[shortlabels]{enumitem}
\renewcommand{\abs}[1]{\lvert#1\rvert}

\usepackage{subcaption}
\usepackage{tabularray}
\usepackage{csquotes}
\begin{document}

\maketitle

\begin{abstract}
Recent work has highlighted the centrality of smooth calibration \cite{kakadeF08} as a robust measure of calibration error. 
We generalize, unify, and extend previous results on smooth calibration, both as a robust calibration measure, and as a step towards omniprediction, which enables predictions with low regret for downstream decision makers seeking to optimize some proper loss unknown to the predictor. 

\begin{itemize}
    \item We present a new omniprediction guarantee for smoothly calibrated predictors, for the class of all bounded proper losses. We smooth the predictor by adding some noise to it, and compete against smoothed versions of any benchmark predictor on the space, where we add some noise to the predictor and then post-process it arbitrarily. The omniprediction error is bounded by the smooth calibration error of the predictor and the earth mover's distance from the benchmark. We exhibit instances showing that this dependence cannot, in general, be improved. We show how this unifies and extends prior results \cite{FosterV98, HartlineWY25} on omniprediction from smooth calibration.
    
    \item We present a crisp new characterization of smooth calibration in terms of the earth mover's distance to the closest perfectly calibrated joint distribution of predictions and labels. This also yields a simpler proof of the relation to the {\em lower distance to calibration} from \cite{BlasiokGHN23}.
    
    \item We use this to show that the {\em upper distance to calibration} cannot be estimated within a quadratic factor with sample complexity independent of the support size of the predictions. This is in contrast to the {\em distance to calibration}, where the corresponding problem was known to be information-theoretically impossible: no finite number of samples suffice \cite{BlasiokGHN23}.
\end{itemize}

\end{abstract}
\thispagestyle{empty}
\newpage

\tableofcontents
\thispagestyle{empty}
\newpage

\setcounter{page}{1}

\section{Introduction}

Consider the setting of binary classification, where we wish to learn a predictor $p:\X \to [0,1]$  based on labeled samples $(\x, \y)$ drawn from a distribution $\mD$ on $\X \times \zo$. The prediction $p(x)$ represents our estimate of the conditional probability at $x$ that the label is $1$. Perfect \emph{calibration}, a classical notion which originates in the forecasting literature \cite{Dawid}, requires that $\E[\y|p(\x)] = p(\x)$.
Calibration guarantees several desirable properties to any downstream decision maker who uses these predictions to minimize a \emph{proper} loss function. These guarantees have the flavor of ensuring that the decision maker can {\em trust} the predictor, as if it were the Bayes optimal predictor. 

The property of calibration that is most relevant to our work is that it ensures no-regret with respect to all post-processings; this was first shown in the classic work of Foster and Vohra \cite{FosterV98}. Indeed, a predictor $p$ is calibrated if and only if $p$ has lower expected loss than $\kappa \circ p$ for all post-processing functions $\kappa : [0, 1] \to [0, 1]$ and all proper loss functions $\ell : [0, 1] \times \{0, 1\} \to \R$. Restated in the language of omniprediction (introduced in \cite{omni} and applied in the context of calibration by \cite{HartlineWY25, GopalanSTT26}), a calibrated predictor is a perfect $(\mL, \mC)$-omnipredictor, where the loss family $\mL$ comprises all proper loss functions, and the hypothesis class $\mC$ comprises all post-processings of $p$ itself.

Perfect calibration is not an achievable goal in practice for both computational and information-theoretic reasons. This has led to a long line of work  that aims at finding notions of approximate calibration that are more computationally tractable to achieve and to verify, and which preserve the desirable properties of calibration for decision making. Much of this work focuses on two main questions:
\begin{itemize}
    \item How should we measure the calibration error of our predictor, so as to have a measure that is both robust and efficient? This question has been studied in \cite{kakadeF08, KSJ18, BlasiokGHN23, when-does, BlasiokN24}.
    \item What kind of loss minimization guarantees for downstream decision makers can we get from approximate notions of calibration? This question has been studied in \cite{when-does, HuWu24, GopalanSTT26}.
\end{itemize}
We refer the reader to the recent survey \cite{GH-survey} for more details.

The work of \cite{BlasiokGHN23} suggested a property testing-inspired approach to the first question: measure the calibration error of predictor by the distance from the nearest perfectly calibrated predictor. They refer to this notion as the \emph{distance to calibration}. However, implementing this approach immediately runs into some basic questions: How do we measure distance between predictors? What family of perfectly calibrated predictors should we consider? Tackling these questions leads to a surprisingly subtle and intricate theory of distance to calibration \cite{BlasiokGHN23, GH-survey}; we will present formal statements later in the paper. In particular, the notion of \emph{smooth calibration error}, introduced in the early work of \cite{kakadeF08}, plays a key role in this theory. The main result of \cite{BlasiokGHN23} is that smooth calibration error is equivalent (up to constants) to a measure they call the {\em lower distance to calibration}, and this is information-theoretically the best efficient approximation to the distance to calibration that one can hope to achieve.

Smooth calibration has several desirable properties as a calibration measure: it is efficient to estimate and Lipschitz continuous in the predictions. However, smooth calibration by itself does not give the type of omniprediction guarantees that one gets from perfect calibration; there are fairly simple examples of smoothly calibrated predictors $p$ and proper losses where $p$ incurs significantly worse expected loss than some post-processing  $\kappa \circ p$. An elegant result of \cite{HartlineWY25} shows that this  situation can be remedied by adding some noise to the predictions before making decisions. They show that this results in an omniprediction guarantee for all proper loss functions, but where the hypothesis class is the space of all calibrated predictors for the distribution $\mD$, rather than post-processings. Another difference is that the quality of the omniprediction guarantee decays as the distance to the calibrated predictor being used as a baseline increases. Thus, smooth calibration gives some omniprediction guarantees, provided we add random noise to smooth the predictions. 

Other relaxations of perfect calibration that yield no-regret for post-processing have been studied in the literature. It is known that the expected calibration error ($\ECE$) is an upper bound on the regret \cite{KLST23}. Recently, Hu and Wu \cite{HuWu24} introduced the notion of calibration decision loss ($\cdl$)  which exactly captures this regret. But both these notions are known to be inefficient to estimate in the predictions-only access model  where we are given random samples $(p(\x), \y)$ for $(\x, \y) \sim \mD$ \cite{GopalanSTT26}. 

\subsection{Preliminaries}
\label{sec:intro-prelim}

We begin with a key new definition: \emph{prediction-label distributions (\preds)}.

\begin{definition}[Prediction-Label Distribution]
\label{def:pld-1}
    Consider the space $\mS = [0,1] \times \zo$, where we view the first coordinate as a predicted probability $p \in [0,1]$ and the second as a label $y \in \zo$. A \emph{Prediction-Label Distribution (\pred)} is a probability distribution $\mu$ over $\mS$. In the case that $\mu$ has a density function, which we also denote $\mu: \mS \to \R^+$, it satisfies
    \[ \sum_{y \in \zo} \int_0^1 \mu(p, y)dp  = 1.  \]
    We denote the set of all \preds by $\J$. The space $\mS \subseteq \R^2$ is equipped with the standard $\ell_1$ distance. Given distributions $\mu, \nu \in \J$, we denote the earth mover's distance between them by $\dem(\mu, \nu)$. (More precisely, $W(\mu, \nu) = \inf_\pi \E_{(\p,\y,\p',\y')\sim\pi}[d((\p,\y),(\p',\y'))]$, where the infimum is over all couplings $\pi$ of $\mu$ with $\nu$, and the metric is $d((p,y),(p',y'))=\lvert p-p'\rvert + \lvert y-y'\rvert$.)

\end{definition}

The notion of a \pred has been implicit in prior work on calibration (see for instance \cite{BlasiokGHN23}), but the terminology is new, and reasoning directly about this space, and the earth mover's distance over it will play a key part in our results. A typical \pred is depicted in Figure~\ref{fig:generic-pld}.

\begin{figure}
    \centering
    \includegraphics[height=1.5in]{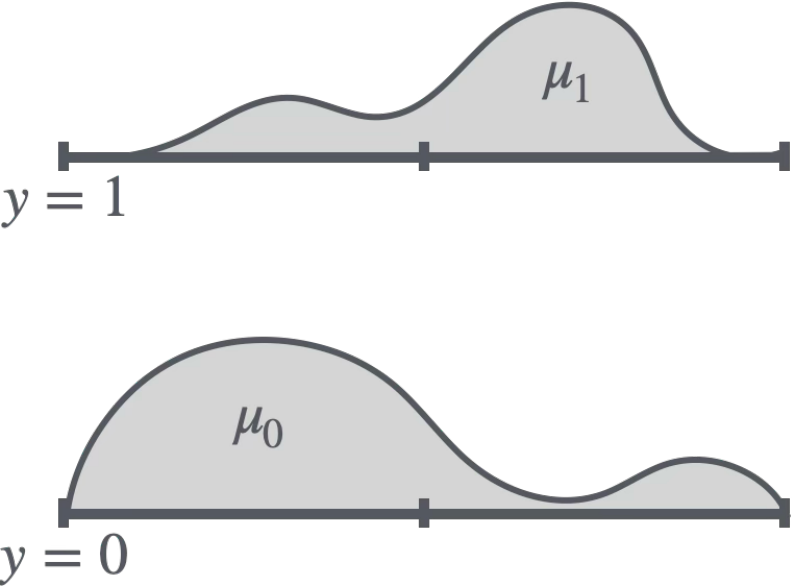}
    \caption{A typical \pred $\mu$, with $\mu_b = \Pr_{(\p, \y)\sim\mu}[\y=b]$. Note that $\mu_0 + \mu_1=1$.}
    \label{fig:generic-pld}
\end{figure}

\begin{definition}[Calibration]
\label{def:pld-2}
    The \pred $\mu$ is (perfectly) \emph{calibrated} if it satisfies the condition
    \[\mu(p,0)p = \mu(p,1)(1- p)\]
    for all $p \in [0,1]$.\footnote{Strictly speaking, we need this condition to hold for a set of measure $1$, but we will ignore measure-theoretic subtleties throughout this paper.}
    We denote the set of perfectly calibrated \preds by $\cl \subseteq \J$.
\end{definition}

Let $\PX = \{p: \X \to [0,1]\}$ denote the space of predictors on $\X$.  Given a distribution $\mD$ on $\X \times \zo$, we associate $p$ with a \pred $p \circ \mD$, defined to be the distribution of the prediction-label pair $(p(\x), \y)$ when $(\x, \y) \sim \mD$. We define $\calib(\mD) \subseteq \J$ to be the set of \preds of the form $p \circ \mD$, where $p \in \PX$ is perfectly calibrated for the distribution $\mD$, meaning that $\E[\y|p(\x)]=p(\x)$. We define $\tau(\mD) = \Pr_{(\x,\y)\sim\mD}[\y=1]$ to be the expected label under $\mD$. Given $\tau \in [0, 1]$, we define $\J^{=\tau} \subseteq \J$ to be the subset of \preds $\mu$ so that $\Pr_\mu[\y =1] = \tau$. We further define $\cl^{=\tau} = \J^{=\tau} \cap \cl$ to be the subset of calibrated \preds. We have
\[
    \cl(\mD) \subseteq \cl^{=\tau(\mD)} \subseteq \cl.
\]

Calibration error measures and loss functions are typically defined as the expectation of a suitable function on $\mS$ under the \pred  $p\circ \mD$. For instance, writing ($\p, \y) \sim \mu$ to denote sampling according to $\mu$, the expected calibration error (ECE) is defined as 
\[ \ECE(\mu) = \E_{(\p, \y)\sim \mu} \bigl\lvert\E[\y|\p] - \p\bigr\rvert,\]
whereas the smooth calibration error \cite{kakadeF08} is defined as
\begin{equation} 
\smCE(\mu) = \max_{\psi \in \Lip}\E_{(\p, \y) \sim \mu}[\psi(\p)(\y - \p)] \label{eq:smce}
\end{equation}
where $\Lip$ is the family of $1$-Lipschitz functions $\psi : [0, 1] \to [-1, 1]$.  Being a property only of \preds ensures that loss functions and calibration measures are defined uniformly for binary classification tasks across domains, regardless of whether the data are images, text, or numeric.  

\subsection{Prior Work}
To set the stage for our work, we discuss the most relevant prior work in detail.  

\subsubsection{Distance to Calibration}
Given a distribution $\D$ on $\X \times \zo$ and predictors $p, q:\X \to [0,1]$, we define the expected $\ell_1$ distance as
 \[ \ell^\D_1(p,q) = \E_{\x \sim \D}\bigl\lvert p(\x) - q(\x)\bigr\rvert.\]

\cite{BlasiokGHN23} define the {\em true distance to calibration} as the expected $\ell_1$ distance to the closest calibrated predictor in $\calib(\mD)$. However, it is hard to reason about such predictors without knowing the underlying space $\X$. In going from the predictor $p$ to the \pred $p\circ \mD$, we have lost information about the space $\X$ over which the predictor $p$ is defined, which makes measuring distance to calibration challenging. The solution proposed in \cite{BlasiokGHN23} was, given the joint distribution $\mu = p \circ \mD$, to consider all spaces $\X'$, predictors $p':\X' \to [0,1]$ and distributions $\mD'$ such that $p' \circ \mD' = \mu$. By considering the minimum and maximum distance to calibration over all such spaces, they define two new calibration measures, the lower and upper distance to calibration, which sandwich the true distance to calibration between them. Formally:

\begin{definition}
 \label{def:dtc}
 Given a \pred $\mu$, let $\ext(\mu)$ be the set of all predictor-distribution pairs $(p',\D')$ such that $p'\circ\D' = \mu$.\footnote{
 The underlying {\em feature space} $\X'$ over which $p'$ and $\D'$ are defined can vary for the elements in $\ext(\mu)$.}
Given a predictor $p:\X \to [0,1]$ and a distribution $\D$ on $\X \times \zo$,
\begin{itemize}
\item The (true) distance to calibration is defined as
 \[ \dtc_\D(p) = \inf_{\substack{q : \X \to [0, 1] \\ q \circ \mD \in \cl(\D)}}\ell_1^\D(p,q). \]

 \item The upper distance to calibration is defined as
 \[ \uCE(p \circ \D) = \sup_{(p', \D') \in \ext(p \circ \D)}\dtc_{\D'}(p').\]

\item The lower distance to calibration is defined as 
 \[ \lCE(p \circ \D) = \inf_{(p', \D') \in \ext(p \circ \D)}\dtc_{\D'}(p').\]
\end{itemize}
\end{definition}

Observe that unlike $\dCE$, both $\uCE$ and $\lCE$ are only functions of the \pred $p \circ \mD$, and not the underlying domain $\X$. From the definitions, it follows that
\[ \lCE(p\circ D) \leq \dtc_\mD(p) \leq \uCE(p \circ \mD). \]
\cite{BlasiokGHN23} ask how easy it is to compute each of the quantities $\lCE$, $\dCE$, and $\uCE$ from sample access to $p \circ \mD$. They show that the notion of smooth calibration error from \cite{kakadeF08} (defined in Equation \eqref{eq:smce}) holds the key to the answer.

\begin{theorem}[\cite{BlasiokGHN23}]
\label[theorem]{thm:ldce-smce}
    \(
        1/2 \le \underline{\dtc}(\mu)/\smce(\mu) \le 2
    \)
    for every \pred $\mu$.
\end{theorem}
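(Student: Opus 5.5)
The abstract promises a simpler proof via an EMD characterization, so the plan is to route both inequalities through an intermediate quantity:
\[
\dem(\mu,\cl) = \inf_{\nu\in\cl}\dem(\mu,\nu),
\]
the earth mover's distance from $\mu$ to the set of perfectly calibrated \preds. The proof splits into three comparisons: $\smce(\mu)\le \dem(\mu,\cl)$, $\dem(\mu,\cl)\le 2\,\smce(\mu)$, and $\dem(\mu,\cl)\le\lCE(\mu)\le 2\,\dem(\mu,\cl)$. Up to constant bookkeeping these give $1/2\le \lCE(\mu)/\smce(\mu)\le 2$; the constants may need care to land exactly on $2$.

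\emph{Lower bound on $\lCE$.} Take any $(p',\D')\in\ext(\mu)$ and any $q$ calibrated for $\D'$. The coupling $(p'(\x),\y)$ versus $(q(\x),\y)$ moves no labels, so $\dem(\mu, q\circ\D')\le \ell_1^{\D'}(p',q)$. Hence $\lCE(\mu)\ge \dem(\mu,\cl)$.

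\emph{Comparing $\smce$ and $\dem(\mu,\cl)$.}
\begin{itemize}
\item $\smce\le\dem$: the function $f(p,y)=\psi(p)(y-p)$ is $2$-Lipschitz on $\mS$ with the $\ell_1$ metric for $\psi\in\Lip$ (since $|\psi|\le1$ and $\psi$ is $1$-Lipschitz). It has mean zero under any $\nu\in\cl$. Kantorovich duality then gives $\smce(\mu)\le 2\dem(\mu,\cl)$, which yields $\lCE\ge \smce/2$.
\item $\dem\lesssim\smce$: this is the harder direction, and I would use LP/minimax duality. Write $\dem(\mu,\cl)$ as an infimum over $\nu\in\cl$ and couplings, and dualize the linear calibration constraint $\E_\nu[\phi(\p)(\y-\p)]=0$ for all $\phi$ by a multiplier function $\phi$. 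Combining this with Kantorovich duality, the distance becomes
\[
\sup_{\phi} \E_\mu[g] \quad\text{over } g \text{ with } g\le \phi(p)(y-p)+ (\text{1-Lipschitz part}).
\]
One then shows the optimal $\phi$ can be taken Lipschitz and bounded, recovering $\smce$ up to a constant.
\end{itemize}

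\emph{Upper bound on $\lCE$.} Given an optimal coupling $\pi$ between $\mu$ and $\nu\in\cl$, I would build a lift: the space $\X'=\mS\times\mS$ with distribution $\pi$, and the predictor $p'(x)$ equal to the first prediction coordinate. The label is a problem, because $\pi$ may also move labels. I plan to handle it in two steps. First, convert any label-moving coupling into one that only moves predictions, at a cost factor of $2$. Moving mass from $(p,0)$ to $(p',1)$ costs at least $1$. One can instead keep the label and adjust $\nu$ slightly, using the fact that a calibrated mass at $p'$ can absorb a label swap by shifting its prediction by at most $1$. Second, set $q$ equal to the second prediction coordinate. This $q$ is calibrated for $\D'$, so $\lCE(\mu)\le \E|p'-q|\le 2\dem(\mu,\cl)$.

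\emph{Main obstacle.} The hardest step is the duality argument that $\dem(\mu,\cl)\le C\,\smce(\mu)$. One has to show the dual multiplier can be chosen $1$-Lipschitz and bounded in $[-1,1]$, and also get the constants to match the stated factor of $2$. I would first try an explicit primal construction instead: a greedy transport of $\mu$ to a calibrated distribution, which \cite{BlasiokGHN23} essentially analyze via isotonic-like arguments. I would fall back to the minimax formulation, with compactness of $\Lip$ justifying the exchange of $\inf$ and $\sup$, if the explicit construction does not close.
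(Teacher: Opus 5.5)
Your lower bound is sound and is essentially the paper's argument. The lift argument gives $\lCE(\mu)\ge\dem(\mu,\cl)$. Since $\psi(p)(y-p)$ is $2$-Lipschitz on $\mS$ and has mean zero under every calibrated \pred, you get $\smce(\mu)\le 2\,\dem(\mu,\cl)$.

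The upper bound $\lCE(\mu)\le 2\,\smce(\mu)$ is not established. First, the step $\dem(\mu,\cl)\lesssim\smce(\mu)$ is only a plan, and its crux (that the optimal multiplier $\phi$ can be taken in $\Lip$) is asserted, not proved. The missing idea is that you need not optimize over $\cl$ at all. Fix the single calibrated target $\nu$, the law of $(\p,\ty)$ with $\ty\mid\p\sim\Ber(\p)$, and apply Kantorovich--Rubinstein duality to $W(\mu,\nu)$ alone. For any $1$-Lipschitz $f$ on $\mS$, $\E_\mu f-\E_\nu f=\E[(f(\p,1)-f(\p,0))(\y-\p)]$. The function $\tfrac12\bigl(f(\cdot,1)-f(\cdot,0)\bigr)$ is $1$-Lipschitz with values in $[-\tfrac12,\tfrac12]$, so it lies in $\Lip$, and hence $\dem(\mu,\cl)\le W(\mu,\nu)\le 2\,\smce(\mu)$. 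An explicit label-only transport to this $\nu$ (keep $\p$, resample the label) costs $\ECE(\mu)$, which can be far larger. The optimal coupling also moves predictions, which is why the bound comes from duality rather than from a direct primal construction.

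Second, even granting that step, your chain $\lCE\le 2\,\dem\le 4\,\smce$ loses a factor of $2$, so it does not prove the stated constant. Your label-conversion argument ("absorb a label swap by shifting its prediction by at most $1$") is also not made precise. What is needed is the exact identity $\lCE(\mu)=\dem(\mu,\cl)$, i.e.\ forbidding label changes costs nothing. The paper proves this by an exchange argument:
\begin{enumerate}
\item Factor any plan to $\cl$ into moves within $[0,1]\times\{0\}$ and $[0,1]\times\{1\}$, followed by vertical moves between $(p,0)$ and $(p,1)$.
\item If masses $m_0,m_1$ sit at $(p,0),(p,1)$ after the first phase, calibration forces a net vertical shipment $c=pm_0-(1-p)m_1$, costing $|c|$.
\item When $c>0$, instead ship $m'=c/p=m_0-(1/p-1)m_1$ from $(p,0)$ to the corner $(0,0)$. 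This costs $pm'=c$ and leaves $p$ calibrated. It keeps $(0,0)$ calibrated because calibrated \preds place no mass on $(0,1)$.
\item When $c<0$, ship $-c/(1-p)$ from $(p,1)$ to $(1,1)$ symmetrically.
\end{enumerate}
The new plan never changes labels and is no costlier. Your own lift construction then yields $\lCE\le\dem(\mu,\cl)$, and with the first fix this gives $\lCE(\mu)\le 2\,\smce(\mu)$.
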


Further, \cite{BlasiokGHN23} show that the upper and lower distance are within a quadratic factor of each other:
\begin{equation} 
\label{eq:lower-upper}
\uCE(\mu) \leq 4\sqrt{\lCE(\mu)},
\end{equation}
and that it is information theoretically impossible to estimate the true distance better than within a quadratic factor. This tells us that $\smCE(p \circ \mD)$ is essentially the best approximation one can get to both the true distance $\dtc_\mD(p)$, and the lower distance $\lCE(p \circ \mD)$ (up to constant factors). They leave open the question of whether we can compute the upper distance $\uCE(p \circ \mD)$ any better.

\subsubsection{Omniprediction From Smooth Calibration.}

A (bounded) loss function is a function $\ell:\mS \to [-1,1]$, and a proper loss is one where if the label $\y \sim \Ber(p)$, then $\E[\ell(q, \y)]$ is minimized at $q = p$. We let $\mL$ denote the family of all bounded proper losses, and define the notion of omnipredictors, specialized to this family.

\begin{definition}[Omnipredictor, \cite{omni}]
    Let $\D$ be a distribution on $\X \times \zo$, let $\mL$ be the family of bounded proper losses, and $\mQ \subseteq \PX$ be a set of predictors.  
    A predictor $p:\X \to [0,1]$ is an $(\mL, \mQ, \alpha)$-omnipredictor if for every $\ell \in \mL$ and $q \in \mQ$, it holds that
    \[ \E[\ell(p(\x), \y)] \leq \E[\ell(q(\x), \y)] + \alpha.\]
\end{definition}
The setting $\alpha = 0$ corresponds to perfect omniprediction, where $p$ is optimal compared to $\mQ$, while $\alpha =2$ is trivial since our losses are bounded in $[-1,1]$. So we will  consider $\alpha \in [0,2)$. We are interested in settings where one can have $\alpha$ bounded away from $2$, ideally tending to $0$.\footnote{Like calibration, expected loss is a property of a \pred, rather than the predictor itself. However, in omniprediction, the underlying distribution $\mD$ is fixed, which fixes the marginal distribution on $\y$. Hence, we will think of omniprediction both as a property of \preds that lie in $\J^{=\tau(\mD)}$, and of predictors $p \in \PX$, with the associated \pred $p\circ \mD$.}

Foster and Vohra \cite{FosterV98} showed that a perfectly calibrated predictor $p$ is a perfect omnipredictor for all proper losses.

\begin{theorem}[\cite{FosterV98}]
\label{thm:fv}
    Let $p$ be perfectly calibrated. Let $K = \{\kappa\mid\kappa: [0,1] \to [0,1]\}$  be  all possible post-processings. Then for any $\ell \in \mL$, and $\kappa \in K$ we have $\E[\ell(\p, \y)] \leq \E[\ell(\kappa(\p), \y)]$.\footnote{They prove this for the squared loss, but it extends straightforwardly to all proper losses.} 
\end{theorem}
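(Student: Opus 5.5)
The plan is to condition on the value of the prediction and reduce everything to the defining property of a proper loss. Fix a bounded proper loss $\ell \in \mL$ and a post-processing $\kappa \in K$. Both $\E[\ell(\p,\y)]$ and $\E[\ell(\kappa(\p),\y)]$ depend only on the \pred $\mu = p\circ\mD$. So it suffices to compare them after conditioning on $\p$:
\[
\E[\ell(\p,\y)] = \E_{\p}\bigl[\E[\ell(\p,\y)\mid \p]\bigr], \qquad \E[\ell(\kappa(\p),\y)] = \E_{\p}\bigl[\E[\ell(\kappa(\p),\y)\mid\p]\bigr].
\]
It is then enough to show the inner conditional inequality pointwise for every value $p$ in the support of $\p$.

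The key step uses perfect calibration. In the language of Definition~\ref{def:pld-2}, $\mu(p,0)p = \mu(p,1)(1-p)$, which is the same as $\E[\y\mid\p=p]=p$. Since $\y\in\zo$, the conditional distribution of $\y$ given $\p=p$ is exactly $\Ber(p)$. Then:
\begin{itemize}
\item $\E[\ell(\p,\y)\mid\p=p] = \E_{\y\sim\Ber(p)}[\ell(p,\y)]$;
\item $\E[\ell(\kappa(\p),\y)\mid\p=p] = \E_{\y\sim\Ber(p)}[\ell(\kappa(p),\y)]$, since $\kappa(p)$ is a fixed number $q\in[0,1]$ once $p$ is fixed.
\end{itemize}
Properness of $\ell$ says that $q\mapsto \E_{\y\sim\Ber(p)}[\ell(q,\y)]$ is minimized at $q=p$. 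Hence the first conditional expectation is at most the second, for every $p$. Integrating over the marginal of $\p$ gives the theorem.

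The only place needing care is the measure-theoretic side, which is not a real obstacle given the paper's convention of ignoring such subtleties. Three points should be checked:
\begin{itemize}
\item We need a regular conditional distribution of $\y$ given $\p$. This is trivial here, since $\y$ is binary: it is determined by $\E[\y\mid\p]$.
\item Calibration holds only $\mu$-almost surely. The pointwise inequality therefore holds for almost every $p$, which is enough after integration.
\item The function $\kappa$ should be measurable so that $\E[\ell(\kappa(\p),\y)]$ is defined, and $\ell$ should be measurable in its first argument. Boundedness of $\ell$ in $[-1,1]$ guarantees all expectations are finite, so the tower property applies.
\end{itemize}
As the footnote indicates, nothing about squared loss is needed beyond properness. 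The argument is therefore uniform over all of $\mL$.
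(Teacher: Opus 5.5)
Your argument is correct: conditioning on $\p$, using calibration to identify the law of $\y$ given $\p=p$ as $\Ber(p)$, and applying properness pointwise before integrating is the standard Foster--Vohra argument. The paper does not write out a proof of this cited result. However, it uses exactly this Bayes-optimality step in \eqref{eq:bayes-opt} in the proof of \Cref{theorem:omniprediction-intro}, so your approach matches the paper's reasoning.
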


One cannot replace perfect calibration with smooth calibration in their statement: For every $\eps > 0$, we will give an example of a \pred $\mu$ with $\smCE(\mu) \leq \eps$, a loss $\ell \in \mL$, and post-processing $\kappa$ so that
\[ \E_\mu[\ell(\p, \y)] = 1, \quad \E_\mu[\ell(\kappa(\p), \y)] = 0,\] 
This raises the question of whether it is at all possible to show omniprediction guarantees against a rich class of post-processings for a smoothly calibrated predictor. 

A novel omniprediction guarantee for smooth calibration error was shown in the recent work of \cite{HartlineWY25}. There are two important ways in which their guarantee differs from the Foster-Vohra result:
\begin{enumerate}
\item {\bf Smoothed predictions.} Rather than using the predictions $\p$ of a smoothly calibrated predictor directly, they add some noise $\z$ sampled uniformly from the interval $[-\sigma, \sigma]$ and truncate the result to $[0,1]$.  We will denote this noise distribution by $\z \sim [\pm \sigma]$ and the truncated predictor by $\p_\z$.
\item {\bf The baseline class.} The omniprediction guarantee holds for the baseline class of all calibrated \preds $\mu \in \calib^{=\tau(\mD)}$, which satisfy accuracy in expectation. This is a rich class that includes the  Bayes optimal \pred $\mu^* = (\E[\y|\x], \y)$, and it is impossible to hope for an omniprediction guarantee with any fixed $\alpha < 1$. Instead, they allow $\alpha$ to degrade with the earth mover's distance between $p \circ \mD$ and $\mu$.  
\end{enumerate}

Formally, \cite{HartlineWY25} show that smoothing $p$ gives an omnipredictor for $\calib^{=\tau(\mD)}$ with the following guarantee:\footnote{
The exact statement of Theorem \ref{thm:hwy} does not appear in \cite{HartlineWY25}, but it can be derived from Theorem 3.1 therein. There are some differences between their statement and ours. They sample the noise from a suitably chosen DP mechanism (Gaussian or Laplace), but we prefer random shifts because they are simpler, and as we will see, they correspond to random bucketing which is commonly used in practice. Although their bound is not stated in terms of $W(\cdot, \cdot)$, the way we have stated it is equivalent by \cref{thm:old-ldce-as-w}. }
\begin{theorem}[\cite{HartlineWY25}]\label{thm:hwy}
    
    Let $p \in \PX$, $\mD$ be a distribution on $\X \times \zo$  and $\nu \in \cl^{= \tau(\mD)}$. For $\ell \in \mL$ and $\sigma \in (0,1]$, we have
    \[ \E_{\substack{(\p,\y) \sim p \circ \D, \\ \z \sim [\pm \sigma]}}[\ell(\p_\z, \y)] \leq \E_{(\q, \y)\sim \nu}[\ell(\q, \y)] + O\lt(\sigma + \fr{\sigma} W(\p \circ \mD,\nu)\rt).\]
\end{theorem}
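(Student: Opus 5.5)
We plan to prove this by comparing the loss of the smoothed predictor $\p_\z$ under $\mu = p\circ\D$ with the loss of $\q$ under $\nu$, through an optimal coupling $\pi$ of $\mu$ and $\nu$. Write $\W := W(\mu,\nu)$ only informally; in the argument we work directly with $\pi$, where $\E_\pi[\lvert \p-\q\rvert + \lvert \y - \y'\rvert] = W(\mu,\nu)$.

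\textbf{Step 1: reduce to a calibration-like identity.} A bounded proper loss has the Savage/Schervish representation: $\ell(q,y)$ is a mixture, up to an additive term depending only on $y$, of threshold losses $\ell_t(q,y) = (t - y)\,\ind{q > t}$ for $t\in[0,1]$, with total mixing mass $O(1)$. The additive $y$-term contributes equally to both sides, because $\nu \in \cl^{=\tau(\D)}$ gives $\Pr_\nu[\y=1] = \Pr_\mu[\y=1]$. By linearity it therefore suffices to prove, uniformly in $t$,
\[ \E_{\mu,\z}[(t-\y)\ind{\p_\z > t}] \le \E_\nu[(t-\y)\ind{\q>t}] + O\bigl(\sigma + W(\mu,\nu)/\sigma\bigr). \]
Because $\nu$ is calibrated, the right-hand side equals $\E_\nu[(t-\q)\ind{\q>t}] = -\E_\nu[(\q-t)_+]$.

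\textbf{Step 2: smoothing turns thresholds into Lipschitz tests.} For uniform noise, $\Pr_\z[\p+\z>t] = h_t(\p)$ is a ramp from $0$ to $1$ over $[t-\sigma,t+\sigma]$. It is $\frac{1}{2\sigma}$-Lipschitz, and truncation does not change this. The left-hand side becomes $\E_\mu[(t-\y)h_t(\p)]$.

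\textbf{Step 3: transport along $\pi$.} Compare $\E_\mu[(t-\y)h_t(\p)]$ with $\E_\nu[(t-\y')h_t(\q)]$. The function $(p,y)\mapsto (t-y)h_t(p)$ is $O(1/\sigma)$-Lipschitz in $p$ and $1$-Lipschitz in $y$, so the difference is at most $O(W(\mu,\nu)/\sigma)$. Calibration of $\nu$ then gives
\[ \E_\nu[(t-\y')h_t(\q)] = \E_\nu[(t-\q)h_t(\q)] \le \E_\nu[(t-\q)\ind{\q>t}] + O(\sigma), \]
since $(t-q)h_t(q)$ differs from $(t-q)\ind{q>t}$ only for $\lvert q-t\rvert\le\sigma$, and there the difference is at most $\sigma$. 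Combining the two bounds gives the claim.

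We expect the main obstacle to be Step 1. The threshold decomposition must have mixing measure of bounded total mass; this holds because the loss is bounded, via the standard bound on the total variation of $\ell(q,1)-\ell(q,0)$ in terms of the range. Integrating the per-$t$ error against that measure keeps the bound at $O(\sigma + W/\sigma)$.

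Note that the smooth calibration error of $\mu$ is not used. The dependence on $\mu$'s miscalibration is absorbed entirely into $W(\mu,\nu)$. This is consistent with the statement and with the footnote relating $W$ to $\lCE$.
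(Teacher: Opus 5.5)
Your proof is correct, but it takes a different route from the paper. The paper gives no standalone proof of this statement. It attributes it to \cite{HartlineWY25} and recovers it from its stronger Theorem~\ref{theorem:omniprediction-intro2}, using $\smCE(\mu)\le 2\demc(\mu)\le 2W(\mu,\nu)$ from \Cref{thm:emc-smce} together with $W\le W/\sigma$. That theorem rests on the loss-OI argument behind Theorem~\ref{theorem:omniprediction-intro}, which has four steps:
\begin{enumerate}
\item replace $\y$ by simulated labels $\y^{\p_\z}$ at cost $\sigma+\smCE(\mu)/\sigma$ (\Cref{lemma:main1});
\item use Bayes optimality of $\p_\z$ for those labels;
\item transport to $\q_\z$ along a same-label coupling (\Cref{thm:old-ldce-as-w});
\item remove the noise from the calibrated benchmark (\Cref{lemma:calibration-makes-smoothing-irrelevant}).
\end{enumerate}

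You bypass smooth calibration and the simulated-label step entirely. You transport the smoothed loss directly from $\mu$ to $\nu$ by Kantorovich--Rubinstein, paying for label mismatches in the metric, so no same-label coupling is needed. Your ramp-versus-step estimate is exactly \Cref{lemma:calibration-makes-smoothing-irrelevant} for a single threshold. The threshold decomposition is in fact only needed for that last step. The transport step holds for any bounded loss, since $(p,y)\mapsto\E_\z[\ell(p_\z,y)]$ is $O(1/\sigma)$-Lipschitz on $\mS$ by \Cref{lemma:smoothing}.

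For this statement your argument is shorter and more elementary. It also extends to benchmarks $\nu\in\cl$ with a different label marginal, since the linear $y$-term then costs at most $O(W(\mu,\nu))$. What it gives up is generality. It uses calibration of $\nu$ essentially, so it cannot yield Theorem~\ref{theorem:omniprediction-intro}. There the benchmarks are $\kappa(\q_\z)$ for uncalibrated $\nu$, and one needs the smoothed Foster--Vohra step (\Cref{corollary:smooth-cdl}), which is exactly what the $\smCE(\mu)/\sigma$ term pays for. On the other hand, your argument can replace the appeal to Theorem~\ref{theorem:omniprediction-intro} inside the proof of Theorem~\ref{theorem:omniprediction-intro2}. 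Apply it to the nearest calibrated $\nu^*\in\cl^{=\tau}$, whose distance from $\mu$ is $O(\smCE(\mu))$, and then invoke \Cref{lemma:calibration-implies-lipschitz-losses}; this gives the sharper $\sigma+\smCE(\mu)/\sigma+W(\mu,\nu)$ bound as well.
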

To connect this to smooth calibration, we observe the smallest that the earth mover's distance can be is $\lCE(p) = \Theta(\smCE(p))$, by the result of \cite{BlasiokGHN23}. We can take $\sigma = \sqrt{\smCE(p\circ \mD)}$, to get an omniprediction error bound of $\alpha = O(\sqrt{\smCE(p\circ \mD)})$. Thus the guarantee is particularly meaningful for smoothly calibrated predictors, where $\alpha$ goes to $0$. Since smooth calibration error can be efficiently estimated, one can estimate the level of noise to add to the predictor before making decisions.

Given these two incomparable results, there are two natural questions that arise:
\begin{itemize}
    \item Can one extend the Foster-Vohra guarantee (Theorem \ref{thm:fv}) and show an omniprediction guarantee for smoothly calibrated predictors, against the class of post-processings?
    \item Can one relax the assumption of calibration on $\q$ in Theorem \ref{thm:hwy} and show an omniprediction guarantee against all \preds in $\J^{=\tau(\mD)}$?
\end{itemize}

\subsection{Our Results}

In this work, we prove some new results and reprove some old results about the properties of smooth calibration, both as a replacement for perfect calibration that gives strong omniprediction guarantees, and as a measure of the distance to calibration. \begin{itemize}
    \item We present a general omniprediction result for smooth calibration, that strengthens and generalizes the Foster-Vohra result by allowing post-processings and the \cite{HartlineWY25} result by comparing with  all nearby predictors, whether or not they are calibrated.
    \item We present a crisp new characterization of the lower distance to calibration in terms of earthmover distance from the set $\cl$. We use this to present an arguably simpler proof of the tight connection between smooth calibration error and the lower distance to calibration.
    \item We show that the upper distance to calibration cannot be estimated to better than a quadratic factor, with sample complexity independent of the support size of the predictor.
\end{itemize}

Common to all our results is a view of calibrated predictors on a space being sandwiched between the larger set of all calibrated distributions on prediction-label pairs, and the  smaller set of calibrated post-processings, which is loosely inspired by convex relaxations in discrete optimization. This view informs our formulation of theorem statements, we believe it also results in more general and clarifying proofs.

\section{Technical Overview}

In this section, we present full technical statements of our results and the intuition behind them, without getting into proof details. 

\subsection{Omniprediction From Smooth Calibration.}\label{section:technical-overview-omniprediction}
We show an omniprediction guarantee for smoothly calibrated predictors, against the baseline class of post-processings of predictors in $\J^{=\tau(\mD)}$.\footnote{The guarantee stated would also hold for $\nu \in \J$, but changing the label distribution is not natural for omniprediction.} 
We write $f \lesssim g$ if $f = O(g)$ and $p_z$ for $[p + z]_0^1$, where $[\cdot]_0^1$ denotes projection onto the interval $[0, 1]$.

\begin{restatable}{theorem}{OmniMain}\label{theorem:omniprediction-intro}
Let $p \in \PX$, $\mD$ be a distribution on $\X \times \zo$, $\mu = p \circ \mD$ and $\nu \in \J^{=\tau(\mD)}$.
For any bounded proper loss $\ell \in \mL$, post-processing function $\kappa:[0,1] \to [0,1]$, and $\sigma \in (0,1]$,  
\[
        \E_{\substack{(\p, \y) \sim \mu \\ \z \sim [\pm \sigma]}}[ \ell(\p_\z,\y)] - \E_{\substack{(\q, \y) \sim \nu\\ \z \sim [\pm \sigma]}}[\ell(\kappa(\q_\z),\y)] \lesssim \sigma + \frac{1}{\sigma}\Bigr( \smce(\mu) + W(\mu, \nu)\Bigr). 
    \]
\end{restatable}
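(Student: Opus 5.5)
The plan is a triangle-inequality argument through a nearby \emph{perfectly calibrated} \pred $\mu^*$. The key fact used twice is that smoothing turns every bounded loss into a Lipschitz function of the prediction.

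\textbf{Step 1 (smoothing gives Lipschitz losses).} For any bounded $f:[0,1]\times\zo\to[-1,1]$, define $f_\sigma(q,y)=\E_{\z\sim[\pm\sigma]}[f(q_\z,y)]$. Differentiating the window average in $q$ shows that $f_\sigma$ is $O(1/\sigma)$-Lipschitz in $q$: the derivative is essentially $(f(q+\sigma,y)-f(q-\sigma,y))/(2\sigma)$, and truncation only helps. It also changes by at most $2$ when $y$ flips, so it is $O(1/\sigma)$-Lipschitz on $\mS$ with the $\ell_1$ metric, using $\sigma\le 1$. We apply this both to $f=\ell$ and to $f=\ell\circ(\kappa\times\mathrm{id})$. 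Since $\kappa$ is applied \emph{after} the noise, no regularity of $\kappa$ is needed. Kantorovich duality then gives, for any \preds $\mu,\mu'$,
\[
\bigl|\E_{\mu}[f_\sigma]-\E_{\mu'}[f_\sigma]\bigr|\lesssim W(\mu,\mu')/\sigma .
\]

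\textbf{Step 2 (pass to a calibrated neighbour).} Choose $\mu^*\in\cl$ with $W(\mu,\mu^*)\lesssim \smce(\mu)$. This should follow from the equivalence of $W(\cdot,\cl)$ with $\lCE$ (the result cited as \cref{thm:old-ldce-as-w}) together with \cref{thm:ldce-smce}. By Step 1 applied to $\ell$, the left-hand expectation changes by $O(\smce(\mu)/\sigma)$ when $\mu$ is replaced by $\mu^*$. On the right-hand side, the triangle inequality gives
\[
W(\mu^*,\nu)\le \smce(\mu)+W(\mu,\nu),
\]
so Step 1 applied to $\ell\circ\kappa$ replaces $\nu$ by $\mu^*$ at cost $O\bigl((\smce(\mu)+W(\mu,\nu))/\sigma\bigr)$. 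It then suffices to prove a smoothed Foster--Vohra inequality: for calibrated $\mu^*$,
\[
\E_{\mu^*,\z}[\ell(\p_\z,\y)]\le \E_{\mu^*,\z}[\ell(\kappa(\p_\z),\y)]+O(\sigma).
\]

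\textbf{Step 3 (the main obstacle).} Condition on $\p=p$. Calibration means $\E[\y\mid\p=p]=p$, and the noise $\z$ is independent of $\y$ given $\p$. Hence, writing $\ell(a;r)$ for the expected loss of prediction $a$ when $\y\sim\Ber(r)$, the right side is at least $\E_{\p}[\ell(\p;\p)]$ by properness, whatever $\kappa$ is. It therefore remains to bound the average regret
\[
\E_{\p\sim\mu^*}\E_\z\bigl[\ell(\p_\z;\p)-\ell(\p;\p)\bigr]
\]
by $O(\sigma)$. This is delicate: for a step-like proper loss, the pointwise regret $\ell(a;p)-\ell(p;p)$ with $|a-p|\le\sigma$ can be $\Omega(1)$. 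The plan is to use the Schervish mixture representation
\[
\ell(a;r)-\ell(r;r)=\int_{t\text{ between }a,r}|r-t|\,dw(t)
\]
and swap the order of integration. For a fixed threshold $t$, the regret is at most $|p-t|\le\sigma$, and it is incurred only when $|p-t|\le\sigma$ and the noise pushes $p_\z$ across $t$. After averaging over $\z$, each threshold contributes about
\[
\E_{\p}\bigl[\,|\p-t|\,(1-|\p-t|/\sigma)_+\,\bigr]\,dw(t).
\]
Boundedness of $\ell$ in $[-1,1]$ should control the total weight $\int dw$ suitably, since for bounded proper losses the relevant mass is essentially that of $\int \min(t,1-t)\,dw(t)$. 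The intended conclusion is an $O(\sigma)$ total. If the endpoint behaviour of $w$ causes trouble, the fallback is to exploit the truncation, under which $\p_\z$ equals $0$ or $1$ near the boundary. Alternatively one can re-smooth: the smoothed \pred $(\p_\z,\y)$ is within $W$-distance $O(\sigma)$ of its own recalibration, which again reduces the claim to an application of Step 1 and \cref{thm:fv}.
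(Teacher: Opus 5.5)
\paragraph{The gap: the rest of Step 3.}
Steps 1 and 2 are sound, and so is the properness reduction at the start of Step 3. Smoothing makes $(q,y)\mapsto\E_\z[\ell(\kappa(q_\z),y)]$ $O(1/\sigma)$-Lipschitz on $\mS$, using $\sigma\le 1$ for the label coordinate. Kantorovich--Rubinstein then moves both sides to a calibrated $\mu^*$ with $W(\mu,\mu^*)\le 2\smce(\mu)$. For example, take $\mu^*$ to be the law of $(\p,\tilde{\y})$ with $\tilde{\y}\sim\Ber(\p)$, as in \cref{thm:emc-smce}. Conditioning on $\p$ gives the lower bound $\E_{\p}[\ell(\p;\p)]$ on the benchmark side for every $\kappa$.

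The rest of Step 3 does not prove the $O(\sigma)$ regret bound. It rests on a false premise and leaves the decisive estimate at ``should''. The regret of a bounded proper loss at distance $\sigma$ is never $\Omega(1)$; only the loss for a \emph{fixed} label can jump. Write $\ell(a;r)=\ell(a,0)+r\,\partial\ell(a)$ with $\lvert\partial\ell(a)\rvert\le 2$. Then
\[
\ell(a;r)-\ell(r;r)=\bigl[\ell(a;r)-\ell(a;a)\bigr]+\bigl[\ell(a;a)-\ell(r;a)\bigr]+\bigl[\ell(r;a)-\ell(r;r)\bigr]\le 4\lvert a-r\rvert .
\]
The outer brackets equal $(r-a)\partial\ell(a)$ and $(a-r)\partial\ell(r)$, and the middle one is $\le 0$ by properness. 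Taking $a=\p_\z$, $r=\p$ and using $\lvert\p_\z-\p\rvert\le\lvert\z\rvert\le\sigma$ bounds the average regret by $4\sigma$, with no Schervish integral. This is exactly the total-variation step (\cref{lemma:tv}) in the paper.

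If you want to keep your threshold computation, what closes it is the following. $\partial\ell$ is a nonincreasing supergradient of the Bayes risk with values in $[-2,2]$, so the \emph{full} mass satisfies $\int_{[0,1]}dw\le 4$. The quantity $\int\min(t,1-t)\,dw(t)$ you point to is the wrong one. It would not give a uniform $O(\sigma)$, since clipped noise crosses thresholds near $0$ or $1$ just as readily as interior ones.

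Your re-smoothing fallback does not avoid this either. Comparing $(\p_\z,\y)$ with its recalibration involves the unsmoothed $\ell$, to which Step 1 does not apply, so it needs the same regret estimate.

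\paragraph{Comparison with the paper.}
With that line filled in, your argument is correct and differs from the paper's mainly on the benchmark side. The paper works at $\mu$ itself. It writes $\ell(p,y)=\ell(p,0)+y\,\partial\ell(p)$ and uses the definition of $\smce$ directly to swap $\y$ for $\y^{\p_\z}$ (\cref{lemma:main1}). That lemma holds for any bounded loss, hence also for $\ell\circ\kappa$. The paper then applies properness to $\y^{\p_\z}$ and moves from $\p$ to $\q$ through the same-label coupling of \cref{thm:old-ldce-as-w}. So it only needs Lipschitzness in the prediction coordinate and never invokes a nearby calibrated \pred. With $\mu^*=(\p,\tilde{\y})$, your left-hand replacement is literally \eqref{eq:smooth1}, and the fix above is \eqref{eq:smooth2}.

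Your route instead transports the whole comparison to $\mu^*$ via Kantorovich--Rubinstein. This is more modular. Because your test functions are Lipschitz in $y$ too, it proves the bound for every $\nu\in\J$, without the equal-label-marginal hypothesis or \cref{thm:old-ldce-as-w}; the paper only mentions this generalization in a footnote. The price is relying on the existence of a nearby calibrated \pred, namely the upper bound in \cref{thm:emc-smce}.
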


Our theorem allows for arbitrary \preds $\nu \in \J^{=\tau(\mD)}$, allowing the guarantee to decay with $W(\mu, \nu)$. It also allows for arbitrary post-processings $\kappa$. Thus it gives a common generalization of the baseline classes used in \cite{FosterV98} and \cite{HartlineWY25}. 

In the setting where $\q = \p$, our result implies the following bound: 

\begin{restatable}{corollary}{cor:qp}
\label{corollary:smooth-cdl}
    Let $(\p,\y)\sim\mu$ and let $\z$ be uniform over $[-\sigma,\sigma]$ and independent from $(\p,\y)$. Then, the following is true for any proper loss $\ell:[0,1]\times\{0,1\}\to [-1,1]$ and any $\kappa:[0,1]\to [0,1]$:
    \begin{align}
    \label{eq:q=p}
\E_{\substack{(\p, \y) \sim \mu \\ \z \sim [\pm \sigma]}}[ \ell(\p_\z,\y) - \ell(\kappa(\p_\z), \y)]  \lesssim \sigma + \frac{\smCE(\mu)}{\sigma}.  
    \end{align}
 \end{restatable}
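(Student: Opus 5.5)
The plan is to obtain the corollary as the special case $\nu = \mu$ of Theorem~\ref{theorem:omniprediction-intro}. First we check that the hypotheses hold. Realize $\mu$ as $p \circ \mD$ for some $p \in \PX$ and distribution $\mD$; for instance, take $\X = [0,1]$, let $\mD$ be the law of $(\p, \y)$ with $(\p,\y)\sim\mu$, and let $p$ be the identity map. Then $\tau(\mD) = \Pr_\mu[\y = 1]$, so $\nu := \mu$ lies in $\J^{=\tau(\mD)}$. Moreover $W(\mu,\mu) = 0$, witnessed by the diagonal coupling $(\p,\y,\p,\y)$.

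Next, we apply Theorem~\ref{theorem:omniprediction-intro} with this $\nu$, the given proper loss $\ell \in \mL$, the given post-processing $\kappa$, and the given $\sigma \in (0,1]$. The second expectation on the left-hand side is over $(\q,\y)\sim\nu=\mu$ with independent $\z \sim [\pm\sigma]$. Renaming $\q$ to $\p$, it becomes $\E_{(\p,\y)\sim\mu,\ \z}[\ell(\kappa(\p_\z),\y)]$.

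The two expectations are now over the same random triple $(\p,\y,\z)$. By linearity we may write their difference as the single expectation $\E[\ell(\p_\z,\y) - \ell(\kappa(\p_\z),\y)]$. This is exactly the left-hand side of \eqref{eq:q=p}. The right-hand side of the theorem becomes $\sigma + \frac{1}{\sigma}(\smCE(\mu) + 0)$, which is the claimed bound.

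There is essentially no obstacle beyond bookkeeping. The one point to watch is that the corollary is stated for an arbitrary \pred $\mu$ rather than for a predictor on a space. The canonical realization $(\X,\mD,p) = ([0,1]\times\zo,\ \mu,\ (p,y)\mapsto p)$ handles this, and it is legitimate because every quantity involved depends only on $\mu$. If the theorem is read directly as a statement about \preds (as the footnote on omniprediction suggests), even this step is unnecessary.
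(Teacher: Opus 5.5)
Your proposal is correct and matches the paper, which obtains the corollary by specializing Theorem~\ref{theorem:omniprediction-intro} to $\nu=\mu$ (equivalently $\q=\p$), so that $W(\mu,\nu)=0$. One small slip: in your closing paragraph the realization should be $\X=[0,1]$ with $\mD=\mu$ and $p$ the identity, exactly as in your first paragraph, because if $\X=[0,1]\times\zo$ then $\mD$ would have to be a distribution on $\X\times\zo$ rather than $\mu$ itself.
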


This can be interpreted as saying that smoothly calibrated predictors are omnipredictors with respect to arbitrary post-processings, provided we add noise to the predictions. This gives a {\em smoothed analysis} analogue for the Foster-Vohra result for smooth calibration. It is essential to smooth the comparison baseline $\q$ with random noise, the above statement is not true if we replace $\kappa(\p_\z)$ with $\kappa(\p)$, and only assume that $p$ is smoothly calibrated. 

A similar statement may be derived by combining the results of Blasiok and Nakkiran \cite{BlasiokN24}, showing that adding noise (from a different distribution) to a smoothly calibrated predictor yields small $\ECE$, with the results of \cite{KLST23, HuWu24}, showing that small $\ECE$ suffices for omniprediction guarantees against post-processings. 

In the setting of \cite{HartlineWY25}, where we restrict the benchmark class to calibrated \preds from $\cl^{=\tau(\mD)}$, we do not need to smooth $\q$, and can show a quantitatively stronger bound:

\begin{restatable}{theorem}{OmniCal}\label{theorem:omniprediction-intro2}
Let $p \in \PX$, $\mD$ be a distribution on $\X \times \zo$, $\mu = p \circ \mD$ and $\nu \in \cl^{=\tau(\mD)}$.
For any bounded proper loss $\ell \in \mL$, and $\sigma \in (0,1]$  
\[
        \E_{\substack{(\p, \y) \sim p\circ \mD \\ \z \sim [\pm \sigma]}}[ \ell(\p_\z,\y)] - \E_{(\q, \y) \sim \nu}[\ell(\q,\y)] \lesssim \sigma + \frac{\smce(\mu)}{\sigma} + W(\mu, \nu).
\]
\end{restatable}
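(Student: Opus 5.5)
We reduce to Theorem~\ref{theorem:omniprediction-intro} with $\kappa$ the identity. The extra observation is that a calibrated $\nu$ is barely hurt by smoothing.

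\textbf{Step 1: reduce to a smoothed comparison.} Applying Theorem~\ref{theorem:omniprediction-intro} with $\kappa=\mathrm{id}$ gives
\[
\E_{\mu,\z}[\ell(\p_\z,\y)] - \E_{\nu,\z}[\ell(\q_\z,\y)] \lesssim \sigma + \tfrac{1}{\sigma}\bigl(\smce(\mu)+W(\mu,\nu)\bigr).
\]
This has $W/\sigma$ rather than $W$, so it is too weak by itself. The fix is to route the comparison through an intermediate \pred that is close to $\mu$ in a finer sense.

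\textbf{Step 2: choose an intermediate point.} Let $\pi$ be an optimal coupling of $\mu$ and $\nu$. Since both have the same label marginal $\tau(\mD)$, we may assume the mass moving between labels is at most $W$. Because $\nu$ is calibrated, we can compare $\E_\nu[\ell(\q,\y)]$ with the loss of the coupled $\p$ directly. Using the identity $\E_\nu[\ell(\q,\y)] = \E_\nu[\underline{\ell}(\q)]$, where $\underline{\ell}(q)=\E_{\y\sim\Ber(q)}\ell(q,\y)$ is the concave Bayes risk, we get
\[
\E_\mu[\ell(\p_\z,\y)] - \E_\nu[\ell(\q,\y)] = \E_\mu[\ell(\p_\z,\y) - \underline{\ell}(\p_\z)] + \bigl(\E_\mu[\underline{\ell}(\p_\z)] - \E_\nu[\underline{\ell}(\q)]\bigr).
\]
The first term is bounded using the standard representation of proper losses as mixtures of threshold losses. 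For each threshold $t$, the integrand is $\pm(\y-\p_\z)$ times an indicator of $\p_\z$ versus $t$. Averaging over $\z$ turns each indicator into a $1/\sigma$-Lipschitz ramp in $\p$. Replacing $\p_\z$ by $\p$ inside $(\y-\p_\z)$ costs $O(\sigma)$, so this term is $\lesssim \sigma + \smce(\mu)/\sigma$.

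\textbf{Step 3: the concave term, which is the main obstacle.} It remains to show
\[
\E_\mu[\underline{\ell}(\p_\z)] - \E_\nu[\underline{\ell}(\q)] \lesssim \sigma + W(\mu,\nu),
\]
with no $1/\sigma$ factor. A naive Lipschitz bound fails, because $\underline{\ell}$ need not be Lipschitz near the endpoints. I would instead use concavity: $\underline{\ell}$ is concave with supergradient $\ell(q,1)-\ell(q,0)$, whose magnitude is at most $2$. Hence
\[
\underline{\ell}(\p_\z) \le \underline{\ell}(\q) + (\p_\z-\q)\bigl(\ell(\q,1)-\ell(\q,0)\bigr).
\]
Taking expectations under the coupling, the linear term is at most $2\,\E|\p_\z-\q| \le 2(\sigma + W)$. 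That proves the bound.

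The delicate point is that the Step~2 decomposition evaluates $\underline{\ell}$ at $\p_\z$ under $\mu$, whereas the calibration of $\nu$ is used at $\q$. I need to ensure that no $1/\sigma$ factor reappears there, and the supergradient argument is what avoids it. Summing the three pieces gives $\sigma + \smce(\mu)/\sigma + W(\mu,\nu)$.
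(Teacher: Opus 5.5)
Your proof is correct, and it takes a different route from the paper's.

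\textbf{The paper's route.} It passes through the nearest calibrated \pred $\nu^*\in\cl^{=\tau}$, for which $W(\mu,\nu^*)=\lCE(\mu)\le 2\smCE(\mu)$ (\Cref{cor:bghn-redux} with \Cref{thm:ldce-emc,thm:emc-smce}). It then proceeds in three steps:
\begin{enumerate}[(1)]
\item It applies \Cref{theorem:omniprediction-intro} with $\kappa=\mathrm{id}$ against $\nu^*$, so the $1/\sigma$ penalty is paid only on $W(\mu,\nu^*)$.
\item It removes the smoothing of $\q^*$ via \Cref{lemma:calibration-makes-smoothing-irrelevant}.
\item It moves from $\nu^*$ to $\nu$ using \Cref{lemma:calibration-implies-lipschitz-losses}, which is proved through the V-shaped decomposition, together with $W(\nu^*,\nu)\le W(\mu,\nu^*)+W(\mu,\nu)$.
\end{enumerate}

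\textbf{Your route.} You avoid $\nu^*$ altogether. Your first term is exactly \Cref{lemma:main1}, because $\E[\ell(\p_\z,\y^{\p_\z})]=\E[\underline{\ell}(\p_\z)]$. Your second term depends only on the prediction marginals. So it suffices to apply the supergradient inequality $\underline{\ell}(r)\le\underline{\ell}(q)+(r-q)\bigl(\ell(q,1)-\ell(q,0)\bigr)$ under the projection of an optimal coupling of $\mu$ and $\nu$; that coupling's $\ell_1$ cost dominates $\E\lvert\p-\q\rvert$. Combined with $\E\lvert\p_\z-\p\rvert\le\sigma$, this bounds the second term by $2\E\lvert\p_\z-\q\rvert\le 2\sigma+2W(\mu,\nu)$.

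\textbf{What each buys.} The paper's route gives a conceptual picture: the theorem falls out of \Cref{theorem:omniprediction-intro} plus the fact that the nearest calibrated \pred is at distance $\Theta(\smCE(\mu))$. Yours is shorter and self-contained. It needs no $\lCE$/$\demc$ machinery, and your Step~1 can simply be deleted. The threshold decomposition you invoke in Step~2 can also be replaced by \Cref{lemma:smoothing} applied to $\ell(\cdot,1)-\ell(\cdot,0)$, exactly as in \Cref{lemma:main1}; this also sidesteps the linear terms that decomposition leaves out. Finally, your argument never uses the label marginal, so it holds for every $\nu\in\cl$, not just $\nu\in\cl^{=\tau(\mD)}$.

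\textbf{Two harmless inaccuracies.} First, applying your supergradient inequality in both directions shows that $\underline{\ell}$ is $2$-Lipschitz for every bounded proper loss. So a ``naive Lipschitz bound'' works fine; non-Lipschitz behaviour at the endpoints only occurs for unbounded losses such as log loss. Second, the ``delicate point'' and the remark about mass moved between labels are non-issues, since the second term sees only the marginals of $\p_\z$ and $\q$.
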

Since $\nu$ is calibrated, post-processing does not reduce the loss. Hence the same bound holds for all post-processings $\kappa(\q)$. 
In comparison to Theorem \ref{thm:hwy}, our result replaces the term $W(\mu, \nu)/\sigma$ with $\smCE(\mu)/{\sigma} + W(\mu, \nu)$. Corollary \ref{cor:bghn-redux} shows that $\smCE(\mu)$ is within constant factors of the minimum value of $W(\mu, \nu)$ over all $\nu \in \calib^{=\tau(\mD)}$. For such $\nu$, the two bounds are similar up to constants. But when $W(\mu, \nu) \gg \smCE(\mu)$, we incur the smaller penalty of $W(\mu, \nu)$ rather than $W(\mu, \nu)/\sigma$, thus improving over Theorem \ref{thm:hwy}. 

Our proof uses techniques from the literature on loss outcome-indistinguishability, introduced in \cite{lossOI} and used in the calibration context by \cite{GopalanSTT26}. However, we believe a key contribution is identifying the right statement itself, which is rather delicate. To illustrate this, consider a simple example.

Consider a two point space $\X = \{x_0,x_1\}$. The distribution $\mD$ is uniform on $(x_0,0)$ and $(x_1, 1)$. We will consider the {\em proper} version of 0-1 loss, $\ell_{\zo}:[0,1] \times \zo \to \zo$ where 
\[ \ell_{\zo}(p,y) = |\mathrm{1}(p \geq 1/2) - y|.\]
In other words, if we are on the correct side of $1/2$, we suffer $0$ loss, else the loss is $1$. This is essentially a special case of the $v$-shaped losses which are studied in the literature. 
Now consider the following predictors:
\begin{itemize}
    \item The {\em good} predictor $g$ where $g(x_0) = 1/2 - \eps, g(x_1) = 1/2 + \eps$, 
    which has expected loss $0$.
    \item The {\em bad} predictor $b$ where $b(x_0) = 1/2 + \eps, b(x_1) = 1/2 - \eps$,
    which has expected loss $1$.
  \item The {\em uniform} predictor
    $u$ where
    $u(x_0) = u(x_1) = 1/2$,
    which has expected loss $1/2$.
\end{itemize}
Smooth calibration error does not distinguish between good and bad (or even uniform), since  $\smCE(b) = \smCE(g) = \eps$, while $u$ is perfectly calibrated.  We observe that for $\z \in [\pm \sigma]$, $\mathrm{dTV}(g_\z, b_\z) \leq 2\eps/\sigma$. Finally, $g = \kappa(b)$ where $\kappa(p) = 1 - p$. With these observations in hand, we can deduce the following (see \Cref{theorem:smoothing-necessity}):

\begin{itemize}
    \item The gap between expected loss of $p$ and $\kappa(p)$ can be $1$, even when $\smCE(p) \leq \eps$. To see this, take $p = b$, $q = \kappa(p) = g$. Thus one cannot hope for a direct analogue of the Foster-Vohra bound for smooth calibration without any smoothing. 
    \item The same example shows that the gap between the expected loss of $p_\z$ and $q = \kappa(p)$ can be $1/2$, even when $\smCE(p) \leq \eps$, and $W(p\circ \mD, q \circ \mD) \leq \eps$. Note that when $\sigma \gg \eps$, $p_\z$ is essentially the same as $u$, so it has expected loss $1/2$. Thus smoothing $p$ alone but not smoothing $q$ is not sufficient. 
    \item By taking $p = b$ and $q  = g$, we see that the gap between the expected loss of $p$ and $q_\z$ can be $1/2$, since now $q_\z$ is essentially the same as $u$ for $\sigma \gg \eps$. So smoothing $q$ alone without smoothing $p$ is also not sufficient.  
\end{itemize}

Further, in \Cref{theorem:lower-bound-1,theorem:lower-bound-2}, we give examples which illustrate that the omniprediction error must scale linearly with each of the three terms on the RHS of Theorem \ref{theorem:omniprediction-intro}.

\subsection{Lower Distance to Calibration}
\label{sec:dtc-results}

Our next set of results uses what we call the \emph{earth mover's distance to calibration}, denoted $\demc$, to prove new results regarding the upper and lower distance to calibration, as well as provide new proofs of existing results. We begin with the definition of $\demc$:
\begin{definition}[$\demc$]
   Given $\mu \in \J$, the \emph{earth mover's distance to calibration} is
   \[
    \demc(\mu) = \inf_{\nu \in \calib}\,W(\mu,\nu).
   \]
\end{definition}

To begin, we use $\demc$ to give a simple proof of \cref{thm:ldce-smce}, which states that the smooth calibration error approximates the lower distance to calibration up to a constant factor. This result originally appeared in \cite{BlasiokGHN23}, with an alternate proof in \cite{BlasiokN24}. Next, we show that the upper distance to calibration is hard to approximate within a quadratic factor in the \emph{prediction-only access model}, where we have access to samples of the form $(\p, \y)$. Specifically,  \cref{thm:udce-hard-intro} is a sample complexity lower bound that scales with $\Omega(\sqrt{k})$, where $k$ is the support size of the prediction distribution. This result complements prior work showing that the (ordinary) distance to calibration is \emph{impossible} to estimate from samples within a quadratic factor in the same model \cite{BlasiokGHN23}.

\paragraph{Lower Distance and Earth Mover's Distance}

Our new proof of \cref{thm:ldce-smce} is based on the following two lemmas. The first lemma relates smooth calibration error to the earth mover's distance to calibration, which the second lemma in turn equates to the lower distance to calibration. Interestingly, the second lemma can be viewed as an alternate and natural \emph{definition} of the lower distance to calibration.

\begin{restatable}{lemma}{EmcSmce}
\label[lemma]{thm:emc-smce}
    \(
        1/2 \le \demc(\mu)/\smce(\mu) \le 2
    \)
    for every \pred $\mu$.
\end{restatable}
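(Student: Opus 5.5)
We prove the two inequalities separately. The lower bound comes from the fact that the test functions defining $\smce$ are Lipschitz on $\mS$. The upper bound comes from Kantorovich--Rubinstein duality combined with a minimax exchange over the convex set $\cl$.

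\textbf{Lower bound $\smce(\mu)\le 2\,\demc(\mu)$.} Fix $\psi\in\Lip$ and set $f_\psi(p,y)=\psi(p)(y-p)$ on $\mS$. Since $|\psi|\le 1$, $\psi$ is $1$-Lipschitz and $|y-p|\le 1$, we get
\[
|f_\psi(p,y)-f_\psi(p',y')| \le |\psi(p)-\psi(p')|\,|y-p| + |\psi(p')|\bigl(|y-y'|+|p-p'|\bigr) \le 2\,d\bigl((p,y),(p',y')\bigr).
\]
So $f_\psi$ is $2$-Lipschitz with respect to the $\ell_1$ metric. For any calibrated $\nu\in\cl$ we have $\E_\nu[\psi(\p)(\y-\p)]=\E_\nu[\psi(\p)(\E[\y\mid\p]-\p)]=0$. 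By the coupling definition of $W$,
\[
\E_\mu f_\psi \le \E_\nu f_\psi + 2W(\mu,\nu) = 2W(\mu,\nu).
\]
Taking the supremum over $\psi$ and the infimum over $\nu\in\cl$ gives the claim.

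\textbf{Upper bound $\demc(\mu)\le 2\,\smce(\mu)$.} By Kantorovich--Rubinstein duality, $W(\mu,\nu)=\sup_{f}\bigl(\E_\mu f-\E_\nu f\bigr)$, where $f$ ranges over $1$-Lipschitz functions on $(\mS,d)$. We then exchange inf and sup:
\[
\demc(\mu)=\inf_{\nu\in\cl}\sup_f\bigl(\E_\mu f-\E_\nu f\bigr)=\sup_f\Bigl(\E_\mu f-\sup_{\nu\in\cl}\E_\nu f\Bigr).
\]
This is the step I expect to need the most care. I would justify it by Sion's minimax theorem:
\begin{itemize}
\item the objective is bilinear in $(f,\nu)$;
\item the Lipschitz class, normalized by $f(0,0)=0$, is convex and compact in sup norm by Arzel\`a--Ascoli;
\item $\cl$ is convex;
\item $\cl$ is weakly closed in the compact space of measures on $\mS$, because calibration is equivalent to $\E_\nu[\phi(\p)(\y-\p)]=0$ for all continuous $\phi$, and each such condition is weakly continuous.
\end{itemize}

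Next we compute $\sup_{\nu\in\cl}\E_\nu f$. Every calibrated \pred is a mixture, over $p\sim$ its marginal, of the distributions $(p,\y)$ with $\y\sim\Ber(p)$. Hence
\[
\sup_{\nu\in\cl}\E_\nu f=\max_{p\in[0,1]} g(p),\qquad g(p)=p f(p,1)+(1-p)f(p,0).
\]
Write $h(p)=f(p,1)-f(p,0)$. Then
\[
f(p,y)=g(p)+(y-p)h(p).
\]
Shifting $f$ by a constant changes neither side, so we may assume $\max_p g=0$, that is, $g\le 0$. This gives
\[
\E_\mu f-\max_p g \le \E_\mu\bigl[(\y-\p)h(\p)\bigr].
\]
Because $f$ is $1$-Lipschitz and $d((p,0),(p,1))=1$, we have $|h|\le 1$. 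Moreover $|h(p)-h(p')|\le 2|p-p'|$, so $h$ is $2$-Lipschitz. Therefore $h/2\in\Lip$, and
\[
\E_\mu\bigl[(\y-\p)h(\p)\bigr]=2\,\E_\mu\bigl[(h(\p)/2)(\y-\p)\bigr]\le 2\,\smce(\mu).
\]
Taking the supremum over $f$ yields $\demc(\mu)\le 2\,\smce(\mu)$.

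Combining the two bounds gives $1/2\le \demc(\mu)/\smce(\mu)\le 2$, as claimed in \Cref{thm:emc-smce}.
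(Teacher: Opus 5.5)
Your proof is correct. The lower bound is the paper's argument: the paper uses the same three-term decomposition over a near-optimal coupling with a calibrated $(\p',\y')$, which amounts to your observation that $(p,y)\mapsto\psi(p)(y-p)$ is $2$-Lipschitz on $\mS$ and has mean zero under every $\nu\in\cl$.

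The upper bound takes a detour the paper avoids. The paper never exchanges $\inf$ and $\sup$. It fixes a single witness $\nu^\star$, the law of $(\p,\ty)$ with $\ty\mid\p\sim\Ber(\p)$, and notes $\nu^\star\in\cl$. It then applies Kantorovich--Rubinstein duality to $W(\mu,\nu^\star)$ alone, using $\E_\mu f-\E_{\nu^\star}f=\E_\mu[(\y-\p)h(\p)]$ with $h=f(\cdot,1)-f(\cdot,0)$ being $2$-Lipschitz.

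Your own chain already contains this. Since $\E_{\nu^\star}f=\E_\mu[g(\p)]$, your step $\E_\mu f-\max_p g\le\E_\mu[(\y-\p)h(\p)]$ is exactly $\E_\mu f-\max_p g\le\E_\mu f-\E_{\nu^\star}f$. The resulting bound $2\smce(\mu)$ holds for one $\nu^\star$ chosen independently of $f$. Hence $\demc(\mu)\le W(\mu,\nu^\star)\le 2\smce(\mu)$ follows directly, with no need for Sion's theorem. You also do not need the Arzel\`a--Ascoli and weak-closedness checks you flagged as the delicate step, though you handled them correctly.

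Each route buys something:
\begin{itemize}
\item Yours yields an exact dual formula, $\demc(\mu)=\sup_f\bigl(\E_\mu f-\max_{p}\,[p f(p,1)+(1-p)f(p,0)]\bigr)$, a Kantorovich--Rubinstein duality relative to the set $\cl$. Both inequalities can be read off from it; for example, $f(p,y)=\tfrac12\psi(p)(y-p)$ gives $g\equiv 0$ and hence the lower bound.
\item The paper's is elementary and produces an explicit near-optimal calibrated \pred with the same $\p$-marginal as $\mu$. The paper reuses that witness for part (c) of \Cref{corollary:demc-equiv}.
\end{itemize}
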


\begin{restatable}{lemma}{LdceEmc}
\label[lemma]{thm:ldce-emc}
    \(
        \lCE(\mu) = \demc(\mu)
    \)
    for every \pred $\mu$.
\end{restatable}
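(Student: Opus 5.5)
The plan is to prove the two inequalities $\demc(\mu) \le \lCE(\mu)$ and $\lCE(\mu) \le \demc(\mu)$ separately, each by an explicit construction. In both directions the point is that a predictor on an extended domain and a coupling of two \preds carry essentially the same information.

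\emph{Step 1: $\demc(\mu) \le \lCE(\mu)$.} Fix any $(p',\D') \in \ext(\mu)$ and any $q:\X'\to[0,1]$ with $q\circ\D' \in \cl(\D')$. Sampling $(\x,\y)\sim\D'$ and outputting the pair $\bigl((p'(\x),\y),(q(\x),\y)\bigr)$ gives a coupling of $\mu = p'\circ\D'$ with $\nu = q\circ\D'$. The labels agree under this coupling, so its cost is $\E|p'(\x)-q(\x)| = \ell_1^{\D'}(p',q)$. Since $q$ is calibrated for $\D'$, the \pred $\nu$ lies in $\cl$. Hence $\demc(\mu) \le W(\mu,\nu) \le \ell_1^{\D'}(p',q)$. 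Taking the infimum over $q$ and then over $(p',\D')$ gives the first inequality.

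\emph{Step 2: $\lCE(\mu) \le \demc(\mu)$.} Fix $\nu\in\cl$ and a coupling $\pi$ of $\mu$ and $\nu$ with cost at most $W(\mu,\nu)+\eta$; taking a near-optimal coupling avoids any existence issues. Build the lift as follows:
\begin{itemize}
\item the domain is $\X' = \mS\times\mS$;
\item the point $\x = (p,y,q,y')$ is drawn from $\pi$ and assigned the label $y$;
\item the predictor is $p'(\x) = p$.
\end{itemize}
By construction $p'\circ\D' = \mu$, so $(p',\D')\in\ext(\mu)$.

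The main obstacle is that $q$ itself need not be calibrated on $\D'$, because the coupling may have moved labels ($y\neq y'$). I would fix this by coarsening to the conditional expectation of the true label:
\[ r(\x) = m(q) := \E_\pi[\y \mid \q = q]. \]
This $r$ is calibrated for $\D'$ because it is a function of $q$, so $\E[\y\mid r] = \E\bigl[\E[\y\mid \q]\bigm| r\bigr] = r$. Hence $r\circ\D'\in\cl(\D')$.

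\emph{Step 3: bounding the cost of $r$.} By the triangle inequality,
\[ \E|p-r| \le \E|\p-\q| + \E\bigl|\q - \E[\y\mid\q]\bigr|. \]
For the second term, calibration of $\nu$ gives $\E_\pi[\y'\mid\q] = \q$. Therefore
\[ \E\bigl|\q-\E[\y\mid\q]\bigr| = \E\bigl|\E[\y'-\y\mid\q]\bigr| \le \E|\y-\y'|. \]
Altogether, $\lCE(\mu) \le \dtc_{\D'}(p') \le \E|\p-\q| + \E|\y-\y'| \le W(\mu,\nu)+\eta$. Letting $\eta\to0$ and taking the infimum over $\nu\in\cl$ gives $\lCE(\mu)\le\demc(\mu)$.

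Combining Steps 1–3 yields $\lCE(\mu) = \demc(\mu)$. Throughout, I treat conditional expectations as defined up to measure-zero sets, as the paper does.
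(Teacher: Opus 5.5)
Your proof is correct, and it takes a different route from the paper's.

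\textbf{The paper's route.} It first invokes \cref{thm:triples} to recast $\lCE(\mu)$ as the infimum cost of transport plans from $\mu$ to $\cl$ that move no mass between the segments $S_0$ and $S_1$. This makes $\demc(\mu) \le \lCE(\mu)$ immediate. It then reduces to discrete distributions and factors an arbitrary plan into a ``horizontal'' part followed by a ``vertical'' part. Finally, it replaces each vertical shipment at a point $p$ by a within-segment move of equal cost: either to the corner $(0,0)$ or $(1,1)$, or by sliding both masses to $m_1/(m_0+m_1)$.

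\textbf{Your route.} You work directly from the lift definition. The near-optimal coupling itself serves as the feature space, and you recalibrate $\q$ to $\E[\y\mid\q]$. The extra cost is at most $\E\lvert\y-\y'\rvert$, by calibration of $\nu$ and Jensen. This is exactly the paper's parenthetical ``slide'' option (Figure~\ref{fig:slide}), carried out at all prediction values simultaneously. The tower property handles masses that land on the same recalibrated value.

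\textbf{What your version buys.} It is self-contained and rigorous in the continuous setting:
\begin{itemize}
\item no appeal to \cref{thm:triples}, since your Step~2 effectively reproves the direction you need;
\item no discretization and limiting step;
\item no need to justify the horizontal/vertical factorization.
\end{itemize}
It also yields \cref{cor:bghn-redux} directly. Both your Step~1 witness $q\circ\D'$ and your Step~2 witness $(r(\q),\y)$ are calibrated \preds with the same label marginal as $\mu$.

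\textbf{What the paper's version buys.} It gives a geometric picture and the structural statement that some optimal plan never moves mass between $S_0$ and $S_1$. The same factorization device is then reused in the proof of \cref{thm:old-ldce-as-w}.
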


It is clear that these two results yield \cref{thm:ldce-smce} when combined. It remains to prove them. The crucial step in the proof of \cref{thm:emc-smce} uses the apparent flexibility in the definition of $\demc$, which allows one to change the $\y$ distribution when designing a nearby calibrated predictor. In particular, given $\mu =(\p, \y)$, it is easy to see that $\nu =(\p,\ty)$ where $\ty \sim \Ber(\p)$ is calibrated, and $W(\mu, \nu) = \Theta(\smCE(\mu))$ (see \cite[Lemma 3.4]{GH-survey}). The proof of \cref{thm:ldce-emc}, in contrast, shows that this flexibility in the $\y$ distribution, while convenient, was not actually necessary and that we can restrict $\nu$ to lie in the set $\cl^{=\tau}$ where  $\tau = \Pr_\mu[\y =1]$.

In contrast to our proof strategy, which centers on the earth mover's distance to the set $\calib$, it turns out the prior proof of \cite{BlasiokGHN23} was effectively considering $\calib^{=\tau}$. 
To see this connection, we use a helpful interpretation of $\lCE$, proposed in \cite{BlasiokGHN23}, as the following infimum:
\begin{lemma}[\cite{BlasiokGHN23}]
\label[lemma]{thm:triples}
    $\lCE(\mu) = \inf \E\lvert \p-\q\rvert$, where the infimum is taken over all triples of joint random variables $(\p, \q, \y)$ such that $(\p, \y) \sim \mu$ and $(\q, \y)$ is calibrated. 
\end{lemma}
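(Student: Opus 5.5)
We prove the two inequalities separately, directly from \cref{def:dtc}. Write $T$ for the infimum of $\E\lvert \p-\q\rvert$ over triples $(\p,\q,\y)$ with $(\p,\y)\sim\mu$ and $(\q,\y)$ calibrated.

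\emph{Step 1: $T \le \lCE(\mu)$.} Fix $(p',\D')\in\ext(\mu)$ and a predictor $q:\X'\to[0,1]$ with $q\circ\D'\in\cl(\D')$. Sample $(\x,\y)\sim\D'$ and set $\p=p'(\x)$, $\q=q(\x)$. Then $(\p,\y)\sim p'\circ\D'=\mu$. Moreover $(\q,\y)$ is calibrated, since $\E[\y\mid q(\x)]=q(\x)$. Finally $\E\lvert\p-\q\rvert=\ell_1^{\D'}(p',q)$. Taking the infimum over $q$ and then over $\ext(\mu)$ gives $T\le\lCE(\mu)$.

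\emph{Step 2: $\lCE(\mu)\le T$.} Take any feasible triple $(\p,\q,\y)$ and let $\X'=[0,1]^2$. Let $\D'$ be the law of $((\p,\q),\y)$ on $\X'\times\zo$, and define $p'(a,b)=a$ and $q(a,b)=b$.
\begin{itemize}
\item The pair $(p',\D')$ lies in $\ext(\mu)$, because $(p'(\x),\y)=(\p,\y)\sim\mu$.
\item The predictor $q$ is calibrated for $\D'$, because $(\q,\y)$ is calibrated, i.e.\ $\E[\y\mid\q]=\q$. Calibration only requires conditioning on the prediction value, not on the full feature, so this is exactly what is needed.
\end{itemize}
Hence $\dtc_{\D'}(p')\le\ell_1^{\D'}(p',q)=\E\lvert\p-\q\rvert$, and so $\lCE(\mu)\le\dtc_{\D'}(p')\le\E\lvert\p-\q\rvert$. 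Taking the infimum over triples gives $\lCE(\mu)\le T$.

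The only point needing care is the definitional check in Step 2. We must confirm that $\ext$ allows an arbitrary feature space, which footnote~1 guarantees. We must also confirm that calibration of $q$ for $\D'$ means $\E[\y\mid q(\x)]=q(\x)$ rather than a condition involving $\x$. With both checks in place, the two steps combine to give $\lCE(\mu)=T$, which is the claim. Attainment of the infimum is not needed, since both directions compare infima.
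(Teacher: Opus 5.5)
The paper states this lemma without proof, citing \cite{BlasiokGHN23}, so there is no in-paper argument to compare against. Your argument is a correct and complete proof: pushing $\D'$ forward along $\x\mapsto(p'(\x),q(\x))$ gives $T\le\lCE(\mu)$, and realizing an arbitrary feasible triple on the canonical feature space $\X'=[0,1]^2$, with the coordinate projections as $p'$ and $q$, gives $\lCE(\mu)\le T$. The two definitional checks you flag are the right ones. In Step 1, the hypothesis $q\circ\D'\in\cl(\D')$ yields calibration of $(\q,\y)$ because calibration depends only on the \pred $q\circ\D'$.
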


The next lemma (proved in \cref{sec:ldce-smce}) shows how to move from such triples to earth mover's distance.

\begin{restatable}{lemma}{OldLdceAsW}
\label[lemma]{thm:old-ldce-as-w}
For any two $\mu,\nu\in\J^{= \tau}$ for $\tau \in [0,1]$, we have that $W(\mu,\nu) = \inf \E[\lvert\p-\q\rvert]$, where the infimum is taken over all triples of joint random variables $(\p,\q,\y)$ such that $(\p,\y)\sim\mu$ and $(\q,\y)\sim\nu$. 
\end{restatable}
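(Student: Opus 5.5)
We prove the two inequalities separately. Write $T$ for the infimum of $\E\lvert\p-\q\rvert$ over triples $(\p,\q,\y)$ with $(\p,\y)\sim\mu$ and $(\q,\y)\sim\nu$.

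\emph{The inequality $W(\mu,\nu)\le T$.} Any such triple induces a coupling $\pi$ of $\mu$ and $\nu$: sample $(\p,\q,\y)$ and output $((\p,\y),(\q,\y))$. Under this coupling both labels are equal, so the transport cost is $\E[\lvert\p-\q\rvert+\lvert\y-\y\rvert]=\E\lvert\p-\q\rvert$. Taking the infimum over triples gives $W(\mu,\nu)\le T$.

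\emph{The inequality $T\le W(\mu,\nu)$.} This is the direction where $\mu,\nu\in\J^{=\tau}$ is needed. Take any coupling $\pi$ of $\mu$ and $\nu$, with samples $((\p,\y),(\q,\y'))$. Its cost is $\E\lvert\p-\q\rvert+\Pr[\y\ne\y']$. Since $\Pr[\y=1]=\Pr[\y'=1]=\tau$, the mismatched mass splits evenly:
\[
a:=\Pr[\y=1,\y'=0]=\Pr[\y=0,\y'=1].
\]
We modify $\pi$ into a coupling with no label mismatch whose cost is at most that of $\pi$.

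Let $A$ be the conditional law of $(\p,\q)$ on the event $\{\y=1,\y'=0\}$, and $B$ the conditional law on $\{\y=0,\y'=1\}$; each event has mass $a$. Sample $(p_1,q_1)\sim A$ and $(p_2,q_2)\sim B$ independently, and reroute mass $a$ in each of two new pairings:
\begin{itemize}
\item pair $(p_1,1)$ with $(q_2,1)$, at cost $\lvert p_1-q_2\rvert$;
\item pair $(p_2,0)$ with $(q_1,0)$, at cost $\lvert p_2-q_1\rvert$.
\end{itemize}
Keep the matched part of $\pi$ unchanged. The marginals are preserved: in the first coordinate we still place mass $a$ at law$(p_1)$ with label $1$ and mass $a$ at law$(p_2)$ with label $0$, and the second coordinate is handled symmetrically.

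Now compare costs. On the rerouted mass, the new cost is $a\,\E[\lvert p_1-q_2\rvert+\lvert p_2-q_1\rvert]$. Because all values lie in $[0,1]$, we have $\lvert p_1-q_2\rvert\le 1$ and $\lvert p_2-q_1\rvert\le 1$, so the new cost is at most $2a$. The old cost of $\pi$ on these events was $a\,\E[\lvert p_1-q_1\rvert+1]+a\,\E[\lvert p_2-q_2\rvert+1]\ge 2a$. Hence the new coupling is no more expensive than $\pi$.

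In the new coupling the labels always agree, so it is exactly a triple $(\p,\q,\y)$ with $(\p,\y)\sim\mu$ and $(\q,\y)\sim\nu$, and its cost is $\E\lvert\p-\q\rvert$. Therefore $T\le$ cost of $\pi$. Taking the infimum over $\pi$ gives $T\le W(\mu,\nu)$.

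The only delicate point is checking that this rerouting preserves both marginals. That check reduces to the two mismatch events having equal mass, which holds precisely because $\mu$ and $\nu$ lie in the same $\J^{=\tau}$; the crude bound $\lvert\cdot\rvert\le1$ then absorbs the label-flip cost. If $a=0$ there is nothing to reroute.
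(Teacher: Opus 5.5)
Your proof is correct and uses the same exchange argument as the paper. Because $\mu,\nu\in\J^{=\tau}$, mass crossing from label $1$ to label $0$ is matched by equal mass crossing back, and swapping these two shipments into same-label shipments costs at most $1$ per unit, versus the label-flip cost of exactly $1$ per unit. Your coupling-level construction, which glues the conditional laws $A$ and $B$ independently, is a somewhat more rigorous version of the paper's horizontal/vertical factorization: it avoids the discretization step and also spells out the easy direction $W(\mu,\nu)\le T$, which the paper leaves implicit.
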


Putting these two lemmas together, it follows that the prior characterization of $\lCE$ can be viewed as the earth mover's distance to $\calib^{=\tau}$ for an appropriate choice of $\tau$:

\begin{corollary}
\label{cor:bghn-redux}
    $\lCE(\mu) = \inf_{\nu \in \calib^{=\tau}}\, W(\mu, \nu)$, where 
    $\tau = \Pr_{(\p,\y)\sim\mu}[\y=1]$. 
\end{corollary}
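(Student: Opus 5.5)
The plan is to chain \cref{thm:triples} with \cref{thm:old-ldce-as-w}, showing the two infima coincide by proving each inequality separately. Fix $\mu$ and set $\tau = \Pr_{(\p,\y)\sim\mu}[\y=1]$.

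For the direction $\inf_{\nu\in\calib^{=\tau}} W(\mu,\nu) \le \lCE(\mu)$, take any triple $(\p,\q,\y)$ admissible in \cref{thm:triples}: $(\p,\y)\sim\mu$ and $(\q,\y)$ is calibrated. Let $\nu$ be the law of $(\q,\y)$. It shares its label $\y$ with $\mu$, so $\Pr_\nu[\y=1]=\tau$, and it is calibrated; hence $\nu\in\calib^{=\tau}$. Since both $\mu$ and $\nu$ lie in $\J^{=\tau}$, \cref{thm:old-ldce-as-w} gives $W(\mu,\nu)\le \E\lvert\p-\q\rvert$. (Even more directly, the triple induces a coupling of $\mu$ and $\nu$ with identical labels, so the cost is exactly $\E\lvert \p-\q\rvert$.) Taking the infimum over triples yields the inequality.

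For the reverse direction, take any $\nu\in\calib^{=\tau}$. Both $\mu$ and $\nu$ lie in $\J^{=\tau}$, so \cref{thm:old-ldce-as-w} writes $W(\mu,\nu)$ as an infimum of $\E\lvert\p-\q\rvert$ over triples with $(\p,\y)\sim\mu$ and $(\q,\y)\sim\nu$. Each such triple has $(\q,\y)\sim\nu$ calibrated, so it is admissible in \cref{thm:triples}. Therefore $\lCE(\mu)\le \E\lvert\p-\q\rvert$ for each triple, and taking the infimum gives $\lCE(\mu)\le W(\mu,\nu)$. Taking the infimum over $\nu$ completes the proof.

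The only delicate point is checking the hypotheses of \cref{thm:old-ldce-as-w}, in particular that both distributions have the same label marginal $\tau$. Matching label marginals is exactly what allows a coupling that never moves labels, and this is guaranteed here because the triple shares a single $\y$. No step is a real obstacle, since the substantive work sits in the two cited lemmas.
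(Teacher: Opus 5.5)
Your proposal is correct and follows the paper's route exactly: the paper obtains the corollary by combining \cref{thm:triples} with \cref{thm:old-ldce-as-w}. Your two inequalities spell out that combination, and you rightly note that only the reverse direction needs the nontrivial content of \cref{thm:old-ldce-as-w}, since a shared-label triple already gives a coupling of cost $\E\lvert\p-\q\rvert$.
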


As the proof in the present paper shows, working with the earth mover's distance to $\calib$ \emph{directly}, rather than to $\calib^{=\tau}$, is equivalent and leads to a cleaner picture. Furthermore, the original proof of the lower bound of \Cref{thm:ldce-smce} from \cite{BlasiokGHN23} used an argument based on linear programming duality, which in hindsight resembles a modified version Kantorovich-Rubinstein duality for earth mover's distance. Our proof seems to circumvent this argument by directly using Kantorovich-Rubinstein duality, which we of course need not rederive, in the proof of \cref{thm:emc-smce}. The results of this section also yield the following interesting corollary, which we state purely in terms of earth mover's distance to calibration.

\begin{restatable}{corollary}{DemcEquiv}
\label[corollary]{corollary:demc-equiv}
    Given a \pred $\mu$, the following are equal up to constant factors:
    \begin{enumerate}[(a)]
        \item Its earth mover's distance to calibration, $\demc(\mu)$:
        \[
             \inf \,\bigl\{ W(\mu, \nu) \,\big|\, \text{$\nu \in \calib$}\bigr\},
        \]
        \item Its earth mover's distance to calibration while preserving the marginal of $\y$:\
        \[
            \inf \,\bigl\{ W(\mu, \nu) \,\big|\, \text{$\nu \in \calib^{=\tau}$}\bigr\},
        \]
        where $\tau = \Pr_{(\p,\y)\sim\mu}[\y=1]$.
        \item Its earth mover's distance to calibration while preserving the marginal of $\p$:
        \[
            \inf \,\bigl\{ W(\mu, \nu) \,\big|\, \text{$\nu \in \calib$ and $\Pr_{(\p, \y) \sim \mu}[\p \le t] = \Pr_{(\p, \y) \sim \nu}[\p \le t]$ for all $t \in [0, 1]$}\bigr\}.
        \]
    \end{enumerate}
    More precisely,
    \(
        a = b \le c \le 2a.
    \)
\end{restatable}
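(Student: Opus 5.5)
We prove the three relations $a = b$, $a \le c$, and $c \le 2a$ separately, using the lemmas stated earlier in this section as black boxes.

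\textbf{$a=b$.} By \cref{thm:ldce-emc}, $a = \demc(\mu) = \lCE(\mu)$. By \cref{cor:bghn-redux}, $\lCE(\mu)$ equals the infimum of $W(\mu,\nu)$ over $\nu \in \calib^{=\tau}$, which is exactly quantity (b). So $a=b$.

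\textbf{$a \le c$.} The feasible set in (c) is a subset of $\calib$. An infimum over a smaller set can only be larger, so $a \le c$.

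\textbf{$c \le 2a$.} This is the only step with content. We use the explicit construction mentioned after \cref{thm:ldce-emc}: given $(\p,\y)\sim\mu$, let $\nu$ be the law of $(\p,\ty)$ with $\ty \sim \Ber(\p)$ drawn independently of $\y$ given $\p$.
\begin{itemize}
    \item $\nu$ is calibrated, since $\E[\ty\mid\p]=\p$.
    \item $\nu$ has the same $\p$-marginal as $\mu$, so it is feasible for (c).
\end{itemize}
It remains to show $W(\mu,\nu)\le 2\,\smCE(\mu)$; combined with the lower bound $\smCE(\mu)\le 2\demc(\mu)$ from \cref{thm:emc-smce}, this would only give $c \le 4a$. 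So we instead show $W(\mu,\nu) \le \smCE(\mu)$ directly, via Kantorovich--Rubinstein duality.

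Since $\mu$ and $\nu$ share the $\p$-marginal, it suffices to test against $f:\mS\to\R$ that are $1$-Lipschitz in the $\ell_1$ metric. Write $f(p,y) = f(p,0) + y\,g(p)$ with $g(p) = f(p,1)-f(p,0)$. Then:
\begin{itemize}
    \item $\lvert g\rvert \le 1$, because the points $(p,0)$ and $(p,1)$ are at distance $1$;
    \item $g$ is $2$-Lipschitz, since it is a difference of two $1$-Lipschitz functions of $p$.
\end{itemize}
The $f(p,0)$ terms cancel because the $\p$-marginals agree, so
\[
\E_\mu f - \E_\nu f = \E\bigl[g(\p)\bigl(\y - \E[\ty\mid\p]\bigr)\bigr] = \E\bigl[g(\p)(\y-\p)\bigr].
\]
This expression is the smooth-calibration functional, except that $g$ is only $2$-Lipschitz and bounded by $1$, whereas $\Lip$ requires $1$-Lipschitz.

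The main obstacle is controlling this $2$-Lipschitz test function with the right constant. Our first attempt is to avoid the loss by coupling more cleverly: a $W$-coupling of $\mu$ and $\nu$ only needs to move mass vertically at each fixed $p$. The optimal such coupling costs exactly $\E\lvert \E[\y\mid\p]-\p\rvert$, which is $\ECE$ and can be far too large. Improving on this requires horizontal transport, which the $\p$-marginal constraint still allows if mass is permuted between different $p$'s.

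The fallback is the crude scaling $g/2 \in \Lip$. This gives $W(\mu,\nu)\le 2\smCE(\mu)$, and then \cref{thm:emc-smce} yields only $c\le 4a$, which already proves equivalence up to constant factors. To sharpen this to $c\le 2a$, we would couple with a near-optimal calibrated $\nu^*$ for $\demc$ instead of going through $\smCE$. In the triple picture of \cref{thm:triples}, take $(\p,\q,\y)$ with $(\q,\y)$ calibrated and $\E\lvert\p-\q\rvert \approx a$, and set $\ty \sim \Ber(\q)$. One then checks that $(\p,\ty)$ is calibrated after a suitable re-randomization. Alternatively, one bounds $W((\p,\y),(\p,\ty))$ by $\E\lvert\p-\q\rvert$ plus the $W$-distance $\E\lvert\p-\q\rvert$ incurred in moving from $(\q,\y)$ to $(\p,\ty)$, for a total of at most $2a$ by the triangle inequality.
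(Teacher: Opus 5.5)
Your steps $a=b$ and $a\le c$ are fine, and so is your fallback. Let $\nu_0$ be the law of $(\p,\ty)$ with $\ty\mid\p\sim\Ber(\p)$. It is the \emph{only} calibrated \pred with the $\p$-marginal of $\mu$, so $c=W(\mu,\nu_0)$, and you correctly get $c\le 2\smCE(\mu)\le 4a$. That already proves equivalence up to constants. It is also exactly the paper's argument: the paper applies the upper-bound construction of \cref{thm:emc-smce} to $\nu_0$, which yields $c\le 2\smCE(\mu)$. It then writes $c\le 2a$, which tacitly needs $\smCE(\mu)\le a$; \cref{thm:emc-smce} only gives $\smCE(\mu)\le 2a$.

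The genuine gap is your sharpening step, and neither sketched route works. In the first, $(\p,\ty)$ with $\ty\sim\Ber(\q)$ has $\E[\ty\mid\p]=\E[\q\mid\p]\neq\p$ in general. Since calibration plus the $\p$-marginal pins down $\nu_0$, any re-randomization that restores calibration just returns $\nu_0$. In the second, $W(\nu^*,\nu_0)$ is not bounded by $\E\lvert\p-\q\rvert$. Passing from $(\q,\y)$, where $\y\mid\q\sim\Ber(\q)$, to $(\p,\ty)$ also requires flipping the label with probability about $\lvert\p-\q\rvert$. So the natural coupling costs $2\E\lvert\p-\q\rvert$, and the triangle inequality only gives $c\le 3a$.

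This step cannot be repaired, because $c\le 2a$ is false. Let $\mu$ put mass $1-\delta$ on $(\delta,0)$ and mass $\delta$ on $(\delta+s,1)$. Sliding the label-$1$ atom to $(\delta,1)$ gives a calibrated \pred, so $a\le\delta s$. For $c$, take $g(p)=\max\{-1,-1+2(p-\delta)\}$ on $[0,1]$, and set $f(p,0)=-g(p)/2$, $f(p,1)=g(p)/2$. This $f$ is $1$-Lipschitz on $\mS$: within a segment because $g$ is $2$-Lipschitz, and across segments because $\lvert g\rvert\le 1$. Hence $c\ge\E_\mu f-\E_{\nu_0}f=\E_\mu[g(\p)(\y-\p)]=\delta(1-\delta)-\delta(1-2s)(1-\delta-s)=\delta s(3-2\delta-2s)$. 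This exceeds $2\delta s\ge 2a$ whenever $\delta+s<1/2$; for example, $\delta=s=1/10$ gives $a\le 0.01$ but $c\ge 0.026$.

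The correct sharp statement is $a=b\le c\le 3a$. By your own duality computation, $c$ is the supremum of $\E[g(\p)(\y-\p)]$ over $2$-Lipschitz $g$ with $\lvert g\rvert\le 1$. Take any triple $(\p,\q,\y)$ as in \cref{thm:triples}. Since $\E[g(\q)(\y-\q)]=0$, we have $\E[g(\p)(\y-\p)]=\E[(g(\p)-g(\q))(\y-\p)]+\E[g(\q)(\q-\p)]\le 3\E\lvert\p-\q\rvert$. Taking the infimum over triples and applying \cref{thm:ldce-emc} gives $c\le 3a$. The example above shows the constant $3$ cannot be improved.
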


\subsection{Intractability of the Upper Distance}
We have already defined the sets $\J$ and $\J^{=\tau(\mD)}$, which are supersets of the set $\J(\mD)$ of \preds induced by $\PX$. We now define the set of post-processings of a predictor, which are a subset of the space $\PX$ that will be relevant for our discussion of upper distance to calibration.

\begin{definition}[Post-processings of predictors and \preds]
Let $K = \{\kappa:[0,1] \to [0,1]\}$ denote the set of all possible post-processing functions. Let $K(p) = \{\kappa \circ p\}_{\kappa \in K}$ denote all post-processings of a predictor $p$, and observe that $K(p) \subseteq \PX$. Given a \pred $\mu = (\p, \y)$, let $K(\mu)$ denote the set of all \preds of the form $(\kappa(\p), \y)_{\kappa \in K}$, and let $\calib(\mu) = \calib \cap K(\mu)$. 
\end{definition}

We have the following inclusions among the set of \preds for any predictor $p$ and distribution $\mD$:
\[  \J(\mD) \supseteq K(p \circ \mD), \ \calib(\mD) \supseteq \calib(p \circ \mD).\]
\eat{
The sets $\J, \J^{=\tau}$ and $\calib, \calib^{=\tau}$ are convex sets, and are nicer to reason about.  When we wish to prove a claim about every \pred in $\J(\mD)$, it is typically easier to reason about one of the larger sets $\J$ or $\J^{=\tau(\mD)}$. $\J$ comes equipped with a natural metric, which is the earth mover's distance on distributions on $S$ (with the underlying metric on $S$ being $\ell_1$). In  this paper, we will show that the earth mover's distance to $\calib$ is central to the theory of distance to calibration. }
In order to prove the existence of a \pred  in $\J(\mD)$ with a certain property, one typically proves the existence of a \pred in $K(p \circ \mD)$. This is equivalent to only considering predictors in $K(p)$ rather than all predictors from $\PX$. However, $K$ is not a particularly tractable set, since it consists of all post-processing functions. Optimizing over it efficiently can be hard, as was  recently shown in the context of Calibration Decision Loss by \cite{GopalanSTT26}. We show a similar barrier to efficiently estimating the upper distance to calibration. 

We show that accurate estimates of the upper distance to calibration, which are asymptotically better than what is implied by the quadratic relation to the lower distance (Equation \eqref{eq:lower-upper}) are hard to obtain in the prediction-only access model. Specifically, we prove a sample complexity lower bound that scales with the size of the support of the distribution, which may be very large, even if finite. Our result complements prior work showing that the (ordinary) distance to calibration is not just hard, but \emph{impossible} to estimate in this model with the same level of accuracy. As we shall soon see, both of these results are most easily understood from the perspective of the earth mover's distance to calibration.

A key concept in the proof of the result of this section, as well as later results in this paper, is the following \emph{almost balanced} \pred. The distribution corresponds to a predictor which always predicts $1/2\pm\eps$. The direction of the deviation from $1/2$, however, is uncorrelated with the true label, which is a uniformly random bit.

\begin{definition}[Almost Balanced \Pred]
\label{def:almost-balanced}
    Given $\eps > 0$, the \emph{$\eps$-almost balanced \pred} is the distribution on $\mathcal{S} = [0, 1] \times \{0, 1\}$ with mass $1/4$ on each of $(1/2-\eps, 0)$, $(1/2-\eps, 1)$, $(1/2+\eps, 0)$, and $(1/2+\eps, 1)$.
\end{definition}

Our next result will construct several examples, each comprising a domain $\mathcal{X}$, a predictor $p : \mathcal{X} \to [0, 1]$, and a distribution $\mathcal{D}$ over $\mathcal{X} \times \{0, 1\}$. Roughly speaking, in each example, the prediction-label distribution $p \circ \mathcal{D}$ is similar to the $\eps$-almost balanced \pred, but distinguishing the scenarios from samples of the form $(p(\x), \y)$ may be hard. Our proof of the theorem will make use of intuitive visual arguments enabled by the earth mover's perspective.
Before we state the theorem, we briefly recall the following view of $\uCE$ in terms of $\calib(\mu)$ from \cite{BlasiokGHN23}:
\[
    \uCE(\mu) = \inf_\kappa \,\E_{(\p, \y)\sim\mu}\abs{\kappa(\p)-\p},
\]
where $\kappa:[0,1]\to[0,1]$ ranges over all calibrated post-processings, meaning $\E[\y|\kappa(\p)]= \kappa(\p)$.

\begin{theorem}[Succinct Version of \Cref{thm:udce-hard}]
\label{thm:udce-hard-intro}
    Fix parameters $\eps > 0$, $k \in \mathbb{N}$, and $\delta_{ij} \in\R$ for $(i,j) \in \{0,1\}\times[k]$. There exist four $4$-tuples $(\X_a, \D_{\X_a}, p^*_a, p_a)$, $(\X_b, \D_{\X_b}, p^*_b, p_b)$, $(\X_c, \D_{\X_c}, p^*_c, p_c)$, and $(\X_d, \D_{\X_d}, p^*_d, p_d)$ with the following properties. First, $|\X_c|=|\X_d|=k$. Second, cases (a) and (b) do not depend on the $\delta_{ij}$ parameters. Finally, if $\delta_{ij}$ are sampled i.i.d. from an appropriate continuous distribution, then:
    \begin{enumerate}[label=(\roman*)]
        \item Case~(a), which has $\dCE \ge \Omega(\eps)$, is impossible to distinguish with any positive advantage from case~(b), which has $\dCE \le O(\eps^2)$, given any finite number of prediction-label samples.
        \item It requires at least $\Omega(\sqrt{k})$ prediction-label samples to distinguish (with constant advantage in expectation over the choice of $\delta_{ij}$) case~(c), which has $\uCE \ge \Omega(\eps)$, from case~(d), which has $\uCE \le O(\eps^2)$ (with probability $1$ over the choice of $\delta_{ij}$).
    \end{enumerate}
In particular, $\dCE$ cannot be estimated within a better-than-quadratic factor from prediction-label samples, and $\uCE$ cannot be estimated within a better-than-quadratic factor from any number of prediction-label samples that is independent of the support size of the distribution of predictions.
\end{theorem}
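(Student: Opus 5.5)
The plan is to build all four examples around the $\eps$-almost balanced \pred of \Cref{def:almost-balanced}. Part (i) will rest on the fact that $\dCE$ depends on the lift and not only on the \pred. Part (ii) will use a birthday-paradox argument in which the two cases differ only in how labels behave at a single repeated feature point.

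\textbf{Cases (a) and (b).} Both cases will induce exactly the $\eps$-almost balanced \pred $\mu$, so no number of prediction-label samples distinguishes them with positive advantage.
\begin{itemize}
\item For case (a), take two points $x_\pm$ of mass $1/2$ with $p^*_a\equiv 1/2$ and $p_a(x_\pm)=1/2\pm\eps$. Any union of points has label mean $1/2$, so the only calibrated predictor is $q\equiv 1/2$. Hence $\dCE=\eps$.
\item For case (b), I would realize the optimal lift directly. By \cref{thm:triples} there is a triple $(\p,\q,\y)$ with $(\p,\y)\sim\mu$, $(\q,\y)$ calibrated, and $\E|\p-\q|=\lCE(\mu)$, up to an arbitrarily small additive slack. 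Set $\X_b$ to be the support of $(\p,\q)$, let $p_b(p,q)=p$, and let $p^*_b(p,q)=\E[\y\mid \p=p,\q=q]$. Then $\q$ is a predictor on $\X_b$, and it is calibrated for $\D_b$ because $\E[\y\mid\q]=\q$. This gives $\dCE\le \E|\p-\q|\approx\lCE(\mu)$.
\item By \cref{thm:ldce-smce}, $\lCE(\mu)\le 2\smce(\mu)$. A direct computation shows $\smce(\mu)\le\eps^2$: for any $\psi\in\Lip$ the quantity is $\tfrac{\eps}{2}(\psi(1/2-\eps)-\psi(1/2+\eps))\le\eps^2$. So case (b) has $\dCE=O(\eps^2)$.
\end{itemize}

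\textbf{Cases (c) and (d).} Here $\uCE$ depends only on the \pred, so the two \preds must actually differ. I would make them differ only through whether the label at a single support point is fixed or random.
\begin{itemize}
\item Use $2k$ prediction values $v_{ij}=1/2+(2i-1)\eps+\delta_{ij}$, where the $\delta_{ij}$ are tiny, generic, and pairwise distinct. This keeps the \pred close to $\mu$.
\item Choose the single-sample law of $(\p,\y)$ to be identical in (c) and (d). A sample is then informative only when some value $v_{ij}$ is observed twice, at which point one can compare whether labels agree.
\item With $k$ atoms of roughly equal mass, a collision among $m$ samples has probability $O(m^2/k)$. Conditioned on no collision, the transcripts have the same law. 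This gives the $\Omega(\sqrt{k})$ lower bound.
\item Averaging over the $\delta_{ij}$ is used to make the per-atom structure, for example which atoms carry which label bias, invisible from single samples.
\end{itemize}

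\textbf{Arranging the gap in $\uCE$.} I would use the formula $\uCE(\mu)=\inf_\kappa\E|\kappa(\p)-\p|$ over calibrated post-processings.
\begin{itemize}
\item Case (d) will be essentially calibrated atom by atom: each $v_{ij}$ carries a $\Ber(v_{ij})$ label, up to $O(\eps^2)$ adjustments. Then $\kappa$ near the identity is calibrated, giving $\uCE=O(\eps^2)$. The same argument as for case (b) also shows the marginal correlation between the sign of $\p-1/2$ and $\y$ must be about $\eps$.
\item Case (c) must reproduce the same single-sample conditional law $\Pr[\y=1\mid\p=v]$. It should do so by spreading each atom's bias across a hidden random choice encoded by the $\delta_{ij}$, for instance atomwise biases far from $v_{ij}$ that balance only on average. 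Any calibrated $\kappa$ must then merge atoms into level sets whose label means equal the level.
\item For the lower bound $\uCE\ge\Omega(\eps)$ in (c), I would argue that for generic continuous $\delta_{ij}$ and weights, no union of atoms has label mean within $o(\eps)$ of its predictions while carrying most of the mass. Every calibrated $\kappa$ would then move $\Omega(1)$ mass by $\Omega(\eps)$. The earth-mover picture should make this visual: the mass must be transported to $\calib(\mu)$, and that set is far away because exact balancing of generic weights fails.
\end{itemize}

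I expect the main obstacle to be this last step. Making case (c) simultaneously indistinguishable from case (d) on single samples and provably far under every calibrated post-processing, with probability $1$ over the $\delta_{ij}$, requires a careful choice of atom biases and weights, and I have not yet verified that such a choice exists. I would first try biases $\Pr[\y=1]\in\{1/4,3/4\}$ shifted by $\eps$, and show that every level set of a calibrated $\kappa$ must average to about $1/2\pm\eps$ only at cost $\Omega(\eps)$.
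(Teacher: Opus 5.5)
Your part (i) works. Case (a) is the paper's. For case (b) you get a near-optimal lift existentially, from the triple characterization of $\lCE$ and $\lCE(\mu)\le 2\smce(\mu)\le 2\eps^2$, whereas the paper writes down an explicit four-point lift. Your collision framework for part (ii) is also the paper's. The genuine gap is the choice of hidden label biases in (c) and (d). This is not a step that merely needs checking: it fails, because the roles are reversed. A calibrated post-processing may merge an \emph{arbitrary} set of atoms into one level set, since $\kappa$ need not be monotone or continuous. So heterogeneous per-atom biases make $\uCE$ smaller, not larger. Your (d) is calibrated atom by atom near $1/2\mp\eps$. Matching single-sample laws therefore forces the left atoms of (c) to have biases averaging about $1/2-\eps$. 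The no-collision argument needs the hidden biases of distinct atoms to be (pairwise) uncorrelated, so the realized average over the $k/2$ left atoms is within $O(1/\sqrt{k})$ of this value with high probability; likewise on the right. Sending every left atom to the exact average bias of the left atoms, and every right atom to that of the right atoms, is then a calibrated $\kappa$ of cost $O(\max_{ij}\lvert\delta_{ij}\rvert+1/\sqrt{k})$. Hence case (c) also has $\uCE=o(\eps)$. With biases like $1/4-\eps$ and $3/4-\eps$, merging even one atom of each type already gives a level set with mean $1/2-\eps$. So the claim you hoped for, that no heavy union of atoms has label mean within $o(\eps)$ of its predictions, is false: the whole left side is such a union.

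The missing idea is to make the shared single-sample law the almost balanced one. Its side-average bias should be exactly $1/2$, which is $\Omega(\eps)$ away from where the atoms sit. In (c), give every atom bias exactly $1/2$. Then every union of atoms has mean $1/2$, the only calibrated post-processing is $\kappa\equiv 1/2$, and $\uCE\ge\eps-\max_{ij}\lvert\delta_{ij}\rvert=\Omega(\eps)$. In (d), spread out the explicit lift of (b): left atoms of total mass $\eps'$ get bias $1$, right atoms of total mass $\eps'$ get bias $0$, and all other atoms get bias $1/2\mp\eps$ according to their side. Merge the bias-$1$ and bias-$0$ atoms into the level set $1/2$, and send every other atom to $1/2\mp\eps$. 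This is calibrated with cost $O(\eps^2)$, provided you take $\lvert\delta_{ij}\rvert=O(\eps^2)$ so that the latter atoms move only $O(\eps^2)$. The choice $\eps'=\eps/(1+2\eps)$ makes each side's average bias exactly $1/2$. The $\delta_{ij}$ are i.i.d.\ and independent of which atom has which type. So after averaging over the $\delta_{ij}$, a single sample has the same law in (c) and (d), and your $O(s^2/k)$ collision bound then gives the $\Omega(\sqrt{k})$ lower bound.
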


Observe that part (i) of \cref{thm:udce-hard} is precisely the prior result of \cite{BlasiokGHN23} regarding the inapproximability of $\dCE$ in the prediction-only access model. Part (ii) is our new result regarding $\uCE$. Working with \preds and earth mover's distance greatly simplifies the task of finding explicit examples which exhibit separations between various measures; rather than conjuring up a clever space $\X'$, we simply find \preds in $\J$ or $K(p \circ \mD)$ that exhibit the desired separation.

\section{Omniprediction From Smooth Calibration}

In this section, we prove Theorems \ref{theorem:omniprediction-intro} and \ref{theorem:omniprediction-intro2}. 
Recall that $\z \sim [\pm \sigma]$ denotes $\mathbf z \sim \mathrm{Unif}[-\sigma,\sigma]$. Both theorems consider the smoothed predictor 
$p_{\mathbf z} = [p+\mathbf z]_0^1$, 
An equivalent way to interpret the smoothing operation 
is as follows. Draw $\mathbf w \sim \mathrm{Unif}[0,2\sigma]$ and round $p$ to the nearest point in the randomly shifted grid
\[
\mathcal I_{\mathbf w}
= \{0,1\} \cup \{[\mathbf w + 2 i \sigma]_0^1 : i \in \mathbb{Z}\}.
\]
For any fixed $p \in [0,1]$, the two procedures produce random variables with the same distribution.

We start with some simple lemmas.

\begin{lemma}
\label{lemma:tv}
    Let $z\in[-\sigma,\sigma]$ and $p \in [0, 1]$. Then we have $d_{\mathrm{TV}}(\mathrm{Bern}(p), \mathrm{Bern}(p_z)) \le \sigma$.
\end{lemma}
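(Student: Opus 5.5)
The plan is a direct computation. The total variation distance between two Bernoulli distributions equals the absolute difference of their parameters:
\[
d_{\mathrm{TV}}(\mathrm{Bern}(a),\mathrm{Bern}(b)) = \tfrac12\bigl(\lvert a-b\rvert + \lvert (1-a)-(1-b)\rvert\bigr) = \lvert a-b\rvert .
\]
So it suffices to show that $\lvert p - p_z\rvert \le \sigma$, where $p_z = [p+z]_0^1$.

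For the second step I will use that projection onto the convex set $[0,1]$ is $1$-Lipschitz and fixes every point of $[0,1]$. Since $p\in[0,1]$, we have $p = [p]_0^1$, and therefore
\[
\lvert p_z - p\rvert = \bigl\lvert [p+z]_0^1 - [p]_0^1\bigr\rvert \le \lvert z\rvert \le \sigma .
\]
Alternatively, one can split into the three cases $p+z\in[0,1]$, $p+z>1$ and $p+z<0$. In the first case the difference is exactly $\lvert z\rvert$. In the other two the clipped value lies between $p$ and $p+z$, so the difference is at most $\lvert z\rvert$.

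Combining the two steps gives the bound $\sigma$. I expect no real obstacle; the only point needing care is that the clipping never increases the distance to $p$, and this holds precisely because $p$ itself lies in $[0,1]$.
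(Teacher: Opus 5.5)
Your proof is correct: the total variation distance between two Bernoulli distributions equals the difference of their parameters, and clipping to $[0,1]$ is $1$-Lipschitz and fixes $p$, so $\lvert p_z - p\rvert \le \lvert z\rvert \le \sigma$. The paper omits this proof as simple, and your argument is the natural one it has in mind.
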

We skip the simple proof. The next lemma shows that adding noise makes bounded functions Lipschitz. 

\begin{lemma}
\label{lemma:smoothing}
    If $f : [0, 1] \to [-1, 1]$ is measurable and $\z$ is uniform in $[-\sigma,\sigma]$, then the function $f_\sigma$ with $f_\sigma(p) = \E[f(p_\z)]$ is $O(1/\sigma)$-Lipschitz in $p$.
\end{lemma}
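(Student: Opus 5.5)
The plan is to write $f_\sigma$ as a convolution and bound how much it changes under a small shift of $p$. The projection $[\cdot]_0^1$ complicates things, so we handle it with a change of variables. Define $g:\R\to[-1,1]$ by $g(t)=f([t]_0^1)$. This is measurable and bounded by $1$ in absolute value. Since $p_z=[p+z]_0^1$, we get
\[
f_\sigma(p)=\frac{1}{2\sigma}\int_{-\sigma}^{\sigma} g(p+z)\,dz=\frac{1}{2\sigma}\int_{p-\sigma}^{p+\sigma} g(t)\,dt .
\]
So $f_\sigma$ is exactly the moving average of $g$ over a window of width $2\sigma$, for every $p\in[0,1]$. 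No case analysis near the endpoints is needed, because the projection has been absorbed into $g$.

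Next, take $p,p'\in[0,1]$ with $p<p'$. The two windows $[p-\sigma,p+\sigma]$ and $[p'-\sigma,p'+\sigma]$ differ by a symmetric difference of total length at most $2\lvert p-p'\rvert$. This holds whether or not the windows overlap: if $\lvert p-p'\rvert\ge 2\sigma$, the symmetric difference has length $4\sigma\le 2\lvert p-p'\rvert$. Hence
\[
\lvert f_\sigma(p)-f_\sigma(p')\rvert\le \frac{1}{2\sigma}\int_{[p-\sigma,p+\sigma]\,\triangle\,[p'-\sigma,p'+\sigma]}\lvert g(t)\rvert\,dt\le \frac{1}{2\sigma}\cdot 2\lvert p-p'\rvert=\frac{\lvert p-p'\rvert}{\sigma}.
\]
This gives the Lipschitz constant $1/\sigma=O(1/\sigma)$.

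The only step that needs care is the reduction to a moving average. Because the projection maps the whole tail segment of $p+z$ outside $[0,1]$ to a single endpoint, one might worry about point masses. Working with $g$ on all of $\R$ removes that issue. Measurability of $g$ follows since $[\cdot]_0^1$ is continuous. No earlier results from the paper are needed.
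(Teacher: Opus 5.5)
Your proof is correct and is essentially the paper's argument. The paper bounds the total variation distance between the uniform densities of $p+\z$ and $q+\z$ by $O(|p-q|/\sigma)$ and then uses that post-processing by $t\mapsto f([t]_0^1)$ cannot increase it; this is exactly your symmetric-difference computation for the moving average of $g=f\circ[\cdot]_0^1$, and your direct version also gives the explicit constant $1/\sigma$.
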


\begin{proof}
    Let $p,q\in[0,1]$. If $\z\sim [-\sigma,\sigma]$, then the total variation distance between the random variables $p_\z$ and $q_\z$ is $O(|p-q|/\sigma)$. This is because the density of $p+\z$ is $D_1(t) = \ind{|t-p|\le \sigma}/2\sigma$ and the density of $q+\z$ is $D_2(t) = \ind{|t-q|\le \sigma}/2\sigma$. Therefore:
    \[
        d_{\mathrm{TV}}(p+\z, q+\z) = \frac{1}{2} \int_{t=-\infty}^\infty |D_1(t) - D_2(t)| \, dt \lesssim \frac{|p-q|}{\sigma}.
    \]
    The result follows from the fact that post-processing (i.e., clipping and applying $f$) does not increase the total variation distance.
\end{proof}

\subsection{Proof of Theorem \ref{theorem:omniprediction-intro}}

We now prove our main omniprediction result for smooth calibration, which we restate below. 

\OmniMain*

We use the notation $\y^{p}$ to denote the random variable following the Bernoulli distribution with probability of success $p$. Our proof will follow the loss OI technique introduced by \cite{lossOI}, where we start from a label distribution where the desired omniprediction guarantee holds true by Bayes optimality, and then bound the cost of modifying the label distribution. Since the predictor we wish to show omniprediction guarantees for is $\p_\z$, the appropriate label distribution is $\y^{\p_\z}$ where $\E[\y^{\p_\z}|\p_\z] = \p_\z$. Switching between $\y^{\p_\z}$ and $\y$ is enabled by the following lemma:

    \begin{lemma}\label{lemma:main1}
        Let $\ell:[0,1] \times \zo \to [-1,1]$ be a bounded, but not necessarily proper loss function. Then
        \begin{align}    \label{eq:smooth0}
        \lt|\E[\ell(\p_\z,\y)] - \E[\ell(\p_\z,\y^{\p_\z})]\rt| \lesssim \sigma + \frac{\smCE(\mu)}{\sigma}. 
        \end{align}
    \end{lemma}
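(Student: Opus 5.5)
Since $\ell(q,y) = \ell(q,0) + y\,(\ell(q,1)-\ell(q,0))$ is affine in $y$, set $g(q) = \ell(q,1)-\ell(q,0)$, a measurable function with values in $[-2,2]$. Then
\[
\E[\ell(\p_\z,\y)] - \E[\ell(\p_\z,\y^{\p_\z})] = \E\bigl[g(\p_\z)(\y - \p_\z)\bigr],
\]
because $\E[\y^{\p_\z}\mid \p_\z] = \p_\z$. The task therefore reduces to bounding $\lvert \E[g(\p_\z)(\y-\p_\z)]\rvert$ for a bounded measurable $g$, with $\z$ independent of $(\p,\y)$.

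First I will replace $\p_\z$ by $\p$ in the factor $(\y - \p_\z)$. Since $\lvert \p_\z - \p\rvert \le \lvert \z\rvert \le \sigma$ and $\lvert g\rvert \le 2$, this costs at most $2\sigma$; this is the same estimate as \cref{lemma:tv}. We are left with $\E[g(\p_\z)(\y-\p)]$. Next I condition on $(\p,\y)$ and average over $\z$ first, which gives $\E_{(\p,\y)\sim\mu}[g_\sigma(\p)(\y-\p)]$ with $g_\sigma(p) = \E_\z[g(p_\z)]$. By \cref{lemma:smoothing}, applied to $g/2$, the function $g_\sigma$ is bounded by $2$ and is $O(1/\sigma)$-Lipschitz.

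Finally I relate this to $\smCE(\mu)$ by rescaling. Let $L = C/\sigma$ be the Lipschitz constant, and set $\psi = g_\sigma/\max(2, L)$. Since $\sigma \le 1$ we may take $C \ge 2$, so $\max(2,L) = C/\sigma$. Then $\psi$ is $1$-Lipschitz with values in $[-1,1]$, so $\psi \in \Lip$. Applying the definition \eqref{eq:smce} to both $\psi$ and $-\psi$ gives
\[
\bigl\lvert\E[g_\sigma(\p)(\y-\p)]\bigr\rvert \le \frac{C}{\sigma}\,\smCE(\mu).
\]
Combining this with the $2\sigma$ error from the first step yields \eqref{eq:smooth0}.

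The only step needing any care is the ordering: the $\z$-average has to be taken before applying the smooth calibration test. This matters because $g$ itself may be discontinuous, as for the proper $0$-$1$ loss. The decoupling of $\z$ from the factor $(\y - \p_\z)$ is exactly what makes this averaging possible. The one subtlety I expect is the measurability of $g$, which follows from the measurability of $\ell$. The rest is routine.
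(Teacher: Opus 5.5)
Your proof is correct and is essentially the paper's argument. Your intermediate quantity $\E[g(\p_\z)(\y-\p)]$ is exactly the paper's hybrid $\E[\ell(\p_\z,\y)]-\E[\ell(\p_\z,\y^{\p})]$, and the two steps match one-for-one: the $O(\sigma)$ step (your direct bound $\lvert \p_\z-\p\rvert\le\sigma$, the paper's \cref{lemma:tv}), and the $O(\smCE(\mu)/\sigma)$ step (average over $\z$ first, then \cref{lemma:smoothing} and the definition of $\smCE$). Your explicit rescaling into $\Lip$, and your use of $\pm\psi$ to control the absolute value, are slightly more careful than the paper's write-up, which glosses over both points.
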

    \begin{proof}
    Following \cite{lossOI}, defining $\partial \ell(p) = \ell(p, 1) - \ell(p,0)$, we can write
    \[ \ell(p,y) = \ell(p, 0) + y\partial \ell(p).\]
    Define the function $w(p) = \E_\z[\partial \ell(p_\z)]$. Since $|\ell| \leq 1$, $|\partial \ell| \leq 2$. Hence Lemma \ref{lemma:smoothing} implies that $w(p)$ is $O(1/\sigma)$-Lipschitz. Hence we have
    \begin{align}
    \label{eq:smooth1}
    \bigl\lvert\E[\ell(\p_\z,\y)]-\E[\ell(\p_\z,\y^{\p})]\bigr\rvert = \E[\partial \ell(\p_\z)(\y - \y^p)] = \E[w(\p)(\y - \p)] \leq O\lt(\frac{\smCE(\mu)}{\sigma}\rt). 
    \end{align}
where the last inequality is from the definition of smooth calibration, since $\sigma w(p)$ is $1$-Lipschitz. Further we have
\begin{align}
    \label{eq:smooth2}
    \lvert \E[\ell(\p_\z,\y^\p) - \ell(\p_\z,\y^{\p_\z})]\rvert \lesssim \sigma
\end{align}
This is an application of \Cref{lemma:tv} to the bounded function $\ell(\p_\z, y)$, and then sampling $y$ according to $\y^\p$ and $\y^{\p_\z}$, and then taking expectations over $\p_\z$. 

Equation \eqref{eq:smooth0} follows from Equations \eqref{eq:smooth1} and \eqref{eq:smooth2} and the triangle inequality.
    \end{proof}

\eat{
\begin{theorem}\label{theorem:omniprediction-main-2}
    Let $\mu$ be the distribution of $(\p,\y)$.  For any proper loss $\ell \in \mL$ and any post-processing  $\kappa:[0,1]\to [0,1]$:
    \[
        \E[ \ell(\p_\z,\y)] - \E[\ell(\kappa(\q_\z),\y)] \lesssim \sigma + \frac{1}{\sigma}\Bigr( \smce(\mu) + \E\lvert\p-\q\rvert \Bigr) 
    \]
\end{theorem}
}
\begin{proof}[Proof of \Cref{theorem:omniprediction-intro}]
\Cref{thm:old-ldce-as-w}, implies the existence of a joint distribution $(\p,\q,\y)$ of random variables in $[0,1]\times[0,1]\times\{0,1\}$ such that $(\p, \y) \sim \mu$, $(\q, \y)\sim \nu$ and $\E[|\p - \q|] = W(\mu, \nu)$.

Let $\z \in [\pm \sigma]$ for $\sigma\in(0,1]$. Recall that $\y^{\p_\z} \in \zo$ is such that $\E[\y^{\p_\z}|\p_\z] = \p_\z$, in other words, $\p_\z$ is the Bayes optimal predictor for $\y^{\p_\z}$. Since $\ell$ is proper, for any $\kappa:[0,1]\to [0,1]$:
    \begin{align}
    \label{eq:bayes-opt} \E[\ell(\p_\z,\y^{\p_\z})] \le \E[\ell(\kappa(\p_\z),\y^{\p_\z})].
    \end{align}

    By applying Lemma \ref{lemma:main1} to the loss function $\ell$,    \begin{equation}
\left|\E[\ell(\p_\z,\y)] - \E[\ell(\p_\z,\y^{\p_\z})]\right| \lesssim \sigma + \frac{\smce(\mu)}{\sigma}.\label{equation:omni-main-0}
    \end{equation}
    Since Lemma \ref{lemma:main1} holds for any bounded loss, even if it is not proper,  we may apply it for $\tilde\ell(p,y)=\ell(\kappa(p),y)$ which is clearly bounded. We obtain the following:
    \begin{equation}
\left|\E[\ell(\kappa(\p_\z),\y^{\p_\z})] - \E[\ell(\kappa(\p_\z),\y)] \right| \lesssim \sigma + \frac{\smce(\mu)}{\sigma}. \label{equation:omni-main-1}
    \end{equation}
    By \Cref{lemma:smoothing}, for $y \in \zo$, the function 
    \[f_y(p) = \E_{\z \sim [\pm \sigma]}[\ell(\kappa(p_\z),y)]\] 
    is $O(1/\sigma)$-Lipschitz in $p$, hence $|f_y(p) -f_y(q)| \leq O(|p - q|/\sigma)$. 
    Taking expectations over $\p,\q$ and $\y$, we get:
    \begin{equation}
        \bigl\lvert\E[\ell(\kappa(\p_\z),\y)] - \E[\ell(\kappa(\q_\z),\y)]\bigr\rvert \lesssim \frac{ \E[\lvert\p-\q\rvert]}{\sigma}.\label{equation:omni-main-2}
    \end{equation}
    Using Equations 
    \eqref{equation:omni-main-1} and \eqref{equation:omni-main-2}, 
the triangle inequality, and the fact that $\E[|\p - \q|] = W(\mu, \nu)$
    \begin{equation}
    \label{eq:triangle-ineq} \bigl\lvert\E[\ell(\kappa(\p_\z),\y^{\p_\z})] - \E[\ell(\kappa(\q_\z),\y)]\bigr\rvert \lesssim \sigma + \frac{\smce(\mu)}{\sigma} + \frac{W(\mu, \nu)}{\sigma}. 
    \end{equation}
    The desired result follows by starting from Equation \eqref{eq:bayes-opt} and
    \begin{itemize}
        \item 
    Replacing $\E[\ell(\p_\z, \y^{\p_\z})]$ on the LHS with $\E[\ell(\p_\z, \y)]$ by  Equation \eqref{equation:omni-main-0}
    \item Replacing 
$\E[\ell(\kappa(\p_\z), \y^{\p_\z})]$ on the RHS with $\E[\ell(\kappa(\q_\z), \y)]$ by Equation \eqref{eq:triangle-ineq}.
    \end{itemize}
\end{proof}

\subsection{Omniprediction With Respect to Nearby Calibrated Predictors}\label{section:nearby-calibrated-omni}

In this section, we
prove \Cref{theorem:omniprediction-intro2}, where we restrict the baselines class to calibrated predictors in $\cl^{=\tau(\mD)}$. We restate the theorem below.

\OmniCal*

\eat{
In fact, we will show the following stronger statement.

\begin{theorem}\label{corollary:calibrated-omniprediction-general}
    Let $p \in \PX$, $\mD$ be a distribution on $\X \times \zo$, $\mu = p \circ \mD$ and $\nu \in \J^{=\tau(\mD)}$.
For any bounded proper loss $\ell \in \mL$,  and $\sigma \in (0,1]$, we have:
    \[
        \E_{\substack{(\p, \y) \sim \mu\\ \z \sim [\pm \sigma]}}[\ell(\p_\z,\y)] - \E_{(\q, \y) \sim \nu}[\ell(\q,\y)] \lesssim \sigma + \frac{\smce(\mu)}{\sigma} + W(\mu,\nu).
    \]
\end{theorem}
}
Our proof will use the $V$-shaped losses introduced in \cite{KLST23}. These are losses of the form  
\[ \ell_v(p,y) = -(y-v)\sign(p-v) , \ v\in[0,1].\] 
It is easy to see that $\ell_v \in \mL$. In fact, these functions form a basis for $\mL$: \cite{KLST23} show that every loss $\ell \in \mL$ can be written as
\begin{align}
\label{eq:klst}
 \ell(p,q) = \sum_v \lambda_v \ell_v(p,q), \ \  \lambda_v \geq 0, \sum_v \lambda_v \leq 2.
 \end{align}
For a precise statement that accounts for convergence issues, see \cite[Lemma 3.5]{GopalanSTT26}. This statement ignores linear terms of the form $ay + b$ for $a, b \in \R$. Since the contribution of such terms is independent of the prediction $p$, we can ignore them as long as we compare \preds which have the same marginal distribution over $\y$.

For a loss $\ell_v$ and a perfectly calibrated \pred $\mu \in \cl$, we can compute the expected loss as follows:
    \begin{align}
    \label{eq:v-cal}
    \E_{(\p, \y) \sim \mu}[\ell_v(\p,\y)] &= \E[-(\y- v)\sign(\p -v)] = \E[-(\p - v)\sign(\p -v)] = - \E[\lvert\p-v\rvert] 
    \end{align}

Next we prove some helpful lemmas. The first shows that when we restrict our attention to fully calibrated predictors, the smoothing operation does not significantly change the value of any proper loss. 

\begin{lemma}\label{lemma:calibration-makes-smoothing-irrelevant}
    Let $(\q^*,\y)\in\calib$ and let $\z \sim [\pm \sigma]$. Then  for any proper loss $\ell \in \mL$:
    \[
        \bigl\lvert\E[\ell(\q^*,\y)] - \E[\ell(\q^*_\z,\y)] \bigr\rvert \lesssim \sigma.
    \]
\end{lemma}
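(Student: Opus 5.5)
We reduce to $V$-shaped losses via the decomposition \eqref{eq:klst}. Both expectations are taken over the same label $\y$, so the linear terms $ay+b$ that \eqref{eq:klst} ignores contribute equally to both sides and cancel. By \eqref{eq:klst}, $\ell=\sum_v\lambda_v\ell_v$ with $\lambda_v\ge 0$ and $\sum_v\lambda_v\le 2$. It therefore suffices to show that for every $v\in[0,1]$,
\[
\bigl\lvert \E[\ell_v(\q^*,\y)]-\E[\ell_v(\q^*_\z,\y)]\bigr\rvert \lesssim \sigma ,
\]
uniformly in $v$. Summing against the weights $\lambda_v$ then costs only a factor $2$. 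For the precise version handling convergence we would invoke \cite[Lemma 3.5]{GopalanSTT26}.

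Fix $v$. Since $\z$ is independent of $(\q^*,\y)$ and $\ell_v(p,y)=-(y-v)\sign(p-v)$, condition on $\q^*=q$ and use calibration, $\E[\y\mid\q^*]=\q^*$. This gives
\[
\E[\ell_v(\q^*_\z,\y)] = -\E\bigl[(\q^*-v)\,\sign(\q^*_\z-v)\bigr],
\]
while \eqref{eq:v-cal} gives $\E[\ell_v(\q^*,\y)]=-\E\lvert\q^*-v\rvert=-\E[(\q^*-v)\sign(\q^*-v)]$. The difference is therefore
\[
\E\bigl[(\q^*-v)\bigl(\sign(\q^*-v)-\sign(\q^*_\z-v)\bigr)\bigr].
\]

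We bound this pointwise in $q$ and $z$. The factor $\sign(q-v)-\sign(q_z-v)$ has absolute value at most $2$, and it is nonzero only when $q$ and $q_z$ lie on different sides of $v$ (or one equals $v$). Since $\lvert q_z-q\rvert\le\lvert z\rvert\le\sigma$ (clipping to $[0,1]$ only moves $q+z$ toward $q$), this forces $\lvert q-v\rvert\le\sigma$. Hence each term is at most $2\sigma$ in absolute value, and so the difference is at most $2\sigma$.

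The only delicate points are bookkeeping. The first is that conditioning on $\q^*$ is legitimate inside $\sign(\q^*_\z-v)$, which holds because $\z$ is independent of $(\q^*,\y)$. The second is the convergence and linear-term caveats of the $V$-shaped decomposition. Neither is a genuine obstacle. The proof uses only that $\E[\y\mid\q^*]=\q^*$, and the pointwise bound $2\sigma$ shows the smoothing perturbs each $V$-loss by at most $O(\sigma)$.
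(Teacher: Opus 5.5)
Your proof is correct and follows the paper's argument: reduce to the $V$-shaped losses $\ell_v$, use calibration and the independence of $\z$ to write $\E[\ell_v(\q^*_\z,\y)] = -\E[(\q^*-v)\sign(\q^*_\z-v)]$, and compare this with $-\E\lvert \q^*-v\rvert$. The only difference is cosmetic and lies in the last step. The paper proves a one-sided inequality and then applies the triangle inequality to the split $\q^*-v = (\q^*-\q^*_\z)+(\q^*_\z-v)$, whereas you bound the integrand pointwise by noting that a sign change forces $\lvert q-v\rvert \le \lvert z\rvert$; both give the same $O(\sigma)$ bound.
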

\begin{proof}
It suffices to show the desired result for $\ell_v$, for $v \in [0,1]$. Since $(\q^*, \y) \in \cl$, by Equation \eqref{eq:v-cal}, $\E[\ell_v(\q^*, \y)] = -\E[|\q -v|]$. 
    Similarly, we can write
    \begin{align*}
\E[\ell_v(\q^*_\z,\y)] &= \E[(\y- v)\sign(\q^*_\z -v)]  = -\E[(\q^*-v)\cdot\sign(\q^*_\z-v)].
    \end{align*}
    It follows that
    \[ \E[\ell_v(\q^*_\z,\y)] = -\E[(\q^*-v)\sign(\q^*_\z-v)] \geq -\E[|\q^* -v|] = \E[\ell_v(\q^*,\y)].\]
    For the other direction, observe that $\q^*-v = \q^*-\q^*_\z + \q^*_\z-v$. Therefore:
    \begin{align*}
\E[\ell_v(\q^*_\z,\y)] &=  -\E[(\q^*-v)\cdot\sign(\q^*_\z-v)]\\ 
        &=  - \E[\lvert\q^*_\z-v\rvert] -\E[(\q^*-\q^*_\z)\cdot\sign(\q^*_\z-v)] \\
        &\le - \E[\lvert\q^*_\z-v\rvert] + \E[\lvert\q^*-\q^*_\z\rvert] \\
        &\le - \E[\lvert\q^*-v\rvert] + 2\E[\lvert\q^*-\q^*_\z\rvert] \\
        &\le \E[\ell_v(\q^*,\y)] + 2 \E[\lvert\z\rvert]\\
        &\le \E[\ell_v(\q^*,\y)] + 2 \sigma.
    \end{align*}
\end{proof}

Our next lemma shows that when restricted to the space $\cl$ of calibrated \preds, every loss in $\mL$ is Lipschitz in the earth mover distance between them. 

\begin{lemma}\label{lemma:calibration-implies-lipschitz-losses}
     Let $\mu, \nu \in \cl^{=\tau}$. Then, for any $\ell \in \mL$,
    \[
        \bigl\lvert \E_{(\q, \y) \sim \mu} [\ell(\q,\y)] - \E_{(\q', \y') \sim \nu} [\ell(\q',\y')] \bigr\rvert \leq W(\mu,\nu).
    \]
\end{lemma}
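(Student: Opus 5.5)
The plan is to first reduce to the $V$-shaped losses via the decomposition in Equation~\eqref{eq:klst}, and then handle each $\ell_v$ using the closed form for calibrated \preds.

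\textbf{Reduction.} Since $\mu,\nu\in\cl^{=\tau}$ have the same marginal on $\y$, the linear terms $ay+b$ contribute the same amount under both distributions. Equation~\eqref{eq:klst} writes $\ell=\sum_v\lambda_v\ell_v$ with $\lambda_v\ge 0$. So it suffices to bound the difference for each $\ell_v$ and sum, weighted by $\lambda_v$.

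\textbf{The per-$v$ bound.} By Equation~\eqref{eq:v-cal}, calibration gives
\[
\E_\mu[\ell_v]=-\E_\mu|\q-v| \quad\text{and}\quad \E_\nu[\ell_v]=-\E_\nu|\q'-v|.
\]
Take any coupling of the $\q$ and $\q'$ marginals, for instance the one from \cref{thm:old-ldce-as-w}, which achieves $\E|\q-\q'|=W(\mu,\nu)$. The reverse triangle inequality then gives
\[
\bigl|\E|\q-v|-\E|\q'-v|\bigr|\le \E|\q-\q'|= W(\mu,\nu).
\]
Note that $p\mapsto|p-v|$ is 1-Lipschitz, so the prediction-marginal $W_1$ would suffice here, and that is at most $W(\mu,\nu)$ anyway.

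\textbf{The main obstacle: the constant.} The naive sum gives $\sum_v\lambda_v W\le 2W$, not $W$. To recover constant $1$ I would argue more carefully. For calibrated \preds, $\E[\ell(\q,\y)]=\E[L(\q)]$, where $L(p)=p\,\ell(p,1)+(1-p)\,\ell(p,0)$ is the Bayes risk, which is concave for a proper loss. Its derivative is $L'(p)=\partial\ell(p)=\ell(p,1)-\ell(p,0)$, which lies in $[-2,2]$ for $\ell$ valued in $[-1,1]$.

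Writing $\E_\mu L-\E_\nu L=\int L'(t)\,(F_\nu(t)-F_\mu(t))\,dt$ gives a bound of $\sup|L'|\cdot W_1$ on the $\q$-marginals. To get $1$ rather than $2$, I would exploit the equal label marginal: subtracting the linear function $c\,(p-\tau)$ costs nothing, because both \preds have mean prediction $\tau$, which follows from calibration together with the common $\Pr[\y=1]=\tau$. That lets me shift $L'$ to lie in an interval of length $2$ centered at $0$. Concavity makes $L'$ monotone with range inside $[-2,2]$, and I would try to show its total range is at most $2$ using boundedness of $\ell$ at the endpoints, so that after centering $|L'|\le 1$.

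If this sharp constant fails, I would settle for the bound $\lesssim W(\mu,\nu)$, which is all that \Cref{theorem:omniprediction-intro2} needs.
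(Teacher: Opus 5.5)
Your reduction to the $V$-shaped losses and the per-$v$ coupling bound are exactly the paper's argument. The paper takes an optimal coupling of the prediction marginals and then uses $W(\q,\q')\le W(\mu,\nu)$, which is the remark you make. You are also right that this only gives $\sum_v\lambda_v\,W(\mu,\nu)\le 2W(\mu,\nu)$. The paper's proof passes over the weights $\lambda_v$ when it says it suffices to treat $\ell_v$.

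The step that fails is your planned sharpening to constant $1$, and it has to fail, because the statement with constant $1$ is false. Take $\ell=2\ell_{1/2}$, i.e.\ $\ell(p,y)=-(2y-1)\sign(p-1/2)$, which is proper with values in $[-1,1]$.
\begin{itemize}
\item Its Bayes risk is $L(p)=-2\lvert p-1/2\rvert$, so $L'$ jumps from $2$ to $-2$. Its range is $4$, even though $\ell$ is bounded at the endpoints, and no linear correction $c(p-\tau)$ brings it inside $[-1,1]$.
\item Concretely, let $\mu$ have $\q\equiv 1/2$ and $\y\sim\Ber(1/2)$. Let $\nu$ have $\q'$ uniform on $\{1/2-\eps,1/2+\eps\}$ and $\y'\mid\q'\sim\Ber(\q')$. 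Both lie in $\cl^{=1/2}$.
\item $W(\mu,\nu)=\eps$: keep labels fixed and shift every unit of mass horizontally by $\eps$. The prediction marginals alone already force cost $\eps$.
\item Yet $\E_\mu[\ell(\q,\y)]=0$, while $\E_\nu[\ell(\q',\y')]=-\E\lvert 2\q'-1\rvert=-2\eps$. The gap is $2W(\mu,\nu)$.
\end{itemize}

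So the correct, and tight, form of the lemma has $2W(\mu,\nu)$ on the right. That is what your naive summation proves; equivalently, combine $\lvert L'(p)\rvert=\lvert\ell(p,1)-\ell(p,0)\rvert\le 2$ with $W_1$ of the prediction marginals. Your fallback is therefore the right endpoint. It is also all that \Cref{theorem:omniprediction-intro2} needs: the extra factor only turns $2\smCE(\mu)+W(\mu,\nu)$ into $4\smCE(\mu)+2W(\mu,\nu)$ inside the $\lesssim$.
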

\begin{proof}
    By the result of \cite{KLST23}, it is sufficient to show the claim for $\ell_v$. By Equation \eqref{eq:v-cal}, 
    \begin{align}
        \label{eq:ell-abs} \E[\ell_v(\q,\y)] = -\E\lvert\q-v\rvert, \ \  \E[\ell_v(\q', \y')] = -\E\lvert\q' -v\rvert
    \end{align}
    By the definition of earth mover's distance, there exists a coupling 
$(\q,\q')$ of $\q$ and $\q'$ such that $\E[\lvert\q-\q'\rvert] = W(\q, \q')$ (and this is minimum possible over all couplings). By the triangle inequality, 
\[ \left| \E[|\q -v|] - \E[|\q' -v|] \right| \leq \E[|\q - \q'|] = W(\q, \q'). \]
To finish the proof, we observe that
\[ W(\q, \q') \leq W((\q, \y), (\q', \y')) = W(\mu, \nu).\]
This holds true because a coupling of $(\q, \y)$ and $(\q', \y')$ gives a coupling of $\q|\y = b$ and $\q'|\y' =b$ for $b \in \zo$. Hence it induces a coupling of $\q, \q'$ of the same cost.   
\end{proof}

We are now ready to prove 
\Cref{theorem:omniprediction-intro2}.

\begin{proof}[Proof of \Cref{theorem:omniprediction-intro2}]
    Consider the joint distribution $(\p,\q^*,\y)$ such that $(\q^*,\y) \sim \nu^*$ for $\nu^* \in \calib^{=\tau}$, $(\p,\y) \sim \mu$ and
    $\E[\lvert\p-\q^*\rvert]$ is the minimum possible. By  \Cref{thm:triples} and Corollary \ref{cor:bghn-redux}, this minimum is $\lCE(\mu) = W(\mu, \nu^*)$.
    Note that
    the distribution $\nu^*$ of $(\q^*, \y)$ need not be the same as $\nu$, it could be that $W(\mu, \nu^*) \leq W(\mu, \nu)$.
    We invoke \Cref{theorem:omniprediction-intro} with $\kappa$ being the identity function to obtain the following:
    \begin{align}
        \label{eq:thm-id} \E[\ell(\p_\z,\y)] - \E[\ell(\q^*_\z,\y)] \lesssim \sigma + \frac{1}{\sigma}(\smce(\mu) + E[\lvert\p-\q^*\rvert]) \lesssim  \sigma + \frac{\smce(\mu)}{\sigma}.
    \end{align}
    By \Cref{lemma:calibration-makes-smoothing-irrelevant}, we get:
    \[
    \E[\ell(\q^*_\z,\y)] - \E[\ell(\q^*,\y)] \lesssim \sigma .
    \]
    By \Cref{lemma:calibration-implies-lipschitz-losses} applied to the calibrated \preds $\nu^*$ and $\nu$, 
    \begin{align*}
        |\E_{(\q^*, \y) \sim \nu^*}[\ell(\q^*,\y)]-\E_{(\q, \y) \sim \nu} [\ell(\q,\y)]| & \le W(\q^*,\q)\\ 
        & \leq 
        W(\nu^*, \nu)\\ 
        &\leq W(\mu, \nu^*) + W(\mu, \nu)\\ 
        &\leq 2\smCE(\mu) + W(\mu, \nu).
    \end{align*}
    We obtain the desired result by replacing $\ell(\q^*_\z, \y)$ on the LHS of Equation \eqref{eq:thm-id} with $\ell(\q, \y^\q)$, at a further cost of $O(\sigma + \smCE(\mu)) + W(\mu,\nu)$.
\end{proof}

\subsection{Tightness of Our Result}

We show that \Cref{theorem:omniprediction-intro} cannot be strengthened in any substantial way. 

\paragraph{Necessity of Smoothing} We begin by showing that if a predictor $\p$ is merely guaranteed to be smoothly calibrated, then one cannot hope to obtain an omniprediction guarantee even with respect to nearby post-processings $\q=\kappa(\p)$ of $\p$, unless both $\p$ and $\q$ are smoothed. Therefore, smoothing is necessary for the bound of \Cref{theorem:omniprediction-intro}.

\begin{theorem}\label{theorem:smoothing-necessity}
    For any $\varepsilon\in (0,1),\sigma\in[0,1]$, there are joint random variables $(\p,\q,\y) \in [0,1]\times[0,1]\times\{0,1\}$ and a proper loss $\ell:[0,1]\times\{0,1\}\to [-1,1]$ such that:
    \begin{itemize}
        \item $\q = \kappa(\p)$ for $\kappa(p) = 1-p$.
        \item $\E[\lvert\q-\p\rvert] \le 2\varepsilon$ and $\smce(\mu) \le  2\varepsilon$, where $(\p, \y) \sim \mu$.
        \item We have that $\E[\ell(\p,\y)] =1/2$, $ \E[\ell(\q,\y)] = - 1/2$ and, moreover:
        \[
            \E[\ell(\p_\z,\y)] \ge 0 \ge \E[\ell(\q_\z,\y)]
        \]
    \end{itemize}
    In particular, we have:
    \begin{align*}
        \E[\ell(\p,\y)] - \E[\ell(\q_\z,\y)] \ge \frac{1}{2} \;\text{ and }\;
        \E[\ell(\p_\z,\y)] - \E[\ell(\q,\y)] \ge \frac{1}{2}
    \end{align*}
\end{theorem}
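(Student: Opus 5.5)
The plan is to instantiate the two-point example from the technical overview with the V-shaped loss centered at $1/2$. Let $\X=\{x_0,x_1\}$ with $\mD$ uniform on $(x_0,0)$ and $(x_1,1)$. Define the ``bad'' predictor $\p$ by $p(x_0)=1/2+\eps$ and $p(x_1)=1/2-\eps$. Set $\q=\kappa(\p)=1-\p$, which is the ``good'' predictor. For the loss, take $\ell=\ell_{1/2}$, that is $\ell(p,y)=-(y-1/2)\sign(p-1/2)$, with some fixed convention for $\sign(0)$ under which $\sign$ is non-decreasing. This loss lies in $\mL$ as noted earlier, and takes values in $\{\pm 1/2\}\subseteq[-1,1]$. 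The first bullet then holds by construction.

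For the second bullet, note that $\lvert\q-\p\rvert=2\eps$ pointwise. For smooth calibration, fix $\psi\in\Lip$ and compute
\[
\E[\psi(\p)(\y-\p)] = \tfrac12\bigl[-(1/2+\eps)\psi(1/2+\eps) + (1/2+\eps)\psi(1/2-\eps)\bigr] = \tfrac{1/2+\eps}{2}\bigl(\psi(1/2-\eps)-\psi(1/2+\eps)\bigr).
\]
Since $\psi$ is $1$-Lipschitz, this is at most $(1/2+\eps)\eps\le 2\eps$ because $\eps<1$.

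For the unsmoothed losses, a direct evaluation suffices. At $x_0$ we have $y=0$ and $p>1/2$, giving loss $1/2$. At $x_1$ we have $y=1$ and $p<1/2$, again giving loss $1/2$. Hence $\E[\ell(\p,\y)]=1/2$, and the symmetric computation for $\q$ gives $\E[\ell(\q,\y)]=-1/2$.

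The only step that needs care is the smoothed inequality $\E[\ell(\p_\z,\y)]\ge 0\ge\E[\ell(\q_\z,\y)]$, which must hold for every $\sigma\in[0,1]$. I would first observe that clipping to $[0,1]$ is monotone and fixes $1/2$, so $\sign(a_z-1/2)=\sign(a+z-1/2)$ for any $a\in[0,1]$. Then, for a fixed $z$, averaging over the two points gives
\[
\E[\ell(\p_\z,\y)\mid\z=z]=\tfrac14\bigl[\sign(\eps+z)-\sign(z-\eps)\bigr]\ge 0,
\]
because $\sign$ is non-decreasing and $\eps+z>z-\eps$. The same computation for $\q$ swaps the roles of $\pm\eps$ and gives $\tfrac14[\sign(z-\eps)-\sign(z+\eps)]\le 0$. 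Taking expectations over $\z$ (the case $\sigma=0$ is included) yields the claim.

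The two displayed ``in particular'' bounds follow immediately from these facts: $1/2-\E[\ell(\q_\z,\y)]\ge 1/2$ and $\E[\ell(\p_\z,\y)]+1/2\ge 1/2$.
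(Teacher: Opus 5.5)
Your proof is correct and uses the same construction as the paper: the two-point space with the ``bad'' predictor, $\kappa(p)=1-p$, and the loss $\ell_{1/2}$. Your direct Lipschitz computation of $\smce(\mu)$ and your pointwise-in-$\z$ sign-monotonicity argument are more explicit versions of the paper's earth-mover bound to the constant-$1/2$ predictor and its informal ``never biased toward the correct label'' reasoning.

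One small fix is needed, and the paper's proof needs it too. For $\eps>1/2$ the predictions $1/2\pm\eps$ leave $[0,1]$, so run the construction with $\min(\eps,1/2)$ in place of $\eps$; this only makes the required upper bounds easier.
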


\begin{proof}
    Consider a pair of random variables $(\x,\y)$ that takes the value $(x_0,0)$ with probability $1/2$ and the value $(x_1,1)$ with probability $1/2$. Let $\p,\q$ be as follows:
    \begin{align*}
        \p = \begin{cases}
            1/2+\varepsilon,\text{ if }\x=x_0 \\
            1/2-\varepsilon,\text{ if }\x=x_1
        \end{cases}, \qquad
        \q = \begin{cases}
            1/2-\varepsilon,\text{ if }\x=x_0 \\
            1/2+\varepsilon,\text{ if }\x=x_1
        \end{cases}
    \end{align*}
    Observe that $\q = 1-\p$ and that $\smce(\mu)\lesssim \varepsilon$, because the distribution $\nu$ of $(\p^*,\y)$ where $\p^* \equiv 1/2$ satisfies $\nu\in\calib$ and $W(\mu,\nu)\lesssim \varepsilon$ (see \Cref{thm:emc-smce} and \Cref{corollary:smooth-cdl}). Moreover, we have that $|\q-\p|=2\varepsilon$ with probability $1$ and therefore $\E\lvert\q-\p\rvert= 2\varepsilon$.

    The random variable $\p$ is always biased in the wrong direction and therefore $\sign(\q-1/2) = -\sign(\y-1/2)$ with probability $1$.
    On the other hand, the variable $\q$ is biased in the right direction and therefore $\sign(\q-1/2) = \sign(\y-1/2)$ with probability $1$. Recall that $\ell_{1/2}(p,y) = -\frac{1}{2}\cdot \sign(y-1/2)\cdot \sign(p-1/2)$, and $\ell_{1/2}$ is a proper loss. It follows that:
    \begin{align*}
        \E[\ell_{1/2}(\p,\y)] &= \frac{1}{2}\\
        \E[\ell_{1/2}(\q,\y)] &= -\frac{1}{2}
    \end{align*}

    The random variable $\p_\z$ achieves worse misclassification error than $\p^*$, because it is never biased toward predicting the correct label. In particular, even if $\sigma = 1$, the variable $\p_\z$ does not beat random guessing. Similarly, $\q_\z$ is always at least as good as random guessing. Overall, we have:
    \[
        \E[\ell_{1/2}(\p_\z,\y)] \ge \E[\ell_{1/2}(\p^*,\y)] = 0\ge \E[\ell_{1/2}(\q_\z,\y)],
    \]
    which concludes the proof of \Cref{theorem:smoothing-necessity}.
\end{proof}

\begin{remark}
    The predictors $\p,\q$ in \Cref{theorem:smoothing-necessity} correspond to the bad ($b$) and good ($g$) predictors over the two point space $\{x_0,x_1\}$ defined in \Cref{section:technical-overview-omniprediction}. Namely, if $\x\sim\{x_0,x_1\}$, then $\p = b(\x)$ and $\q = g(\x)$. For the loss $\ell$, we choose $\ell_{1/2}$, which is a V-shaped loss as described in \Cref{section:nearby-calibrated-omni} and we have $\ell_{1/2} = \ell_{\{0,1\}} - 1/2$, where $\ell_{\{0,1\}}$ is the proper version of the 0-1 loss, as defined in \Cref{section:technical-overview-omniprediction}.
\end{remark}

\paragraph{Omniprediction Error with respect to Nearby Predictors.} Using a similar construction as the one for \Cref{theorem:smoothing-necessity} we show the necessity of the term $W(\mu,\nu)/\sigma$ in \Cref{theorem:omniprediction-intro}, even if $\mu$ is calibrated and $\smCE(\mu) = 0$.

\begin{theorem}\label{theorem:lower-bound-1}
    Let $0<\varepsilon\le\sigma\le 1$. There exists \preds $\mu \in \cl^{=1/2}, \nu \in \J^{=1/2}$ such that $W(\mu, \nu) \leq \varepsilon$, and for $\z \sim [\pm \sigma]$, 
    \[
        \E_{\substack{(\p, \y) \sim \mu\\ \z \sim [\pm \sigma]}} [\ell_{1/2}(\p_\z,\y)] - \E_{\substack{(\q, \y) \sim \nu\\ \z \sim [\pm \sigma]}} [\ell_{1/2}(\q_\z,\y)] \geq  \frac{\varepsilon}{2\sigma}.
    \]
\end{theorem}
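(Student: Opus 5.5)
The plan is to reuse the two-point construction behind \Cref{theorem:smoothing-necessity}, but now with the \emph{calibrated} side uninformative and the nearby \pred $\nu$ only slightly informative. The loss is $\ell_{1/2}(p,y) = -(y-1/2)\sign(p-1/2)$, so its expected value depends only on how often $\sign(p-1/2)$ agrees with $\sign(\y-1/2)$. The construction is as follows.
\begin{itemize}
\item Let $\mu$ be the \pred of $(\p,\y)$ with $\p \equiv 1/2$ and $\y$ a uniform bit. Then $\E[\y\mid\p]=1/2=\p$, so $\mu \in \cl^{=1/2}$.
\item Let $\nu$ be the \pred of $(\q,\y)$ with the same $\y$, where $\q = 1/2+\eps$ if $\y=1$ and $\q = 1/2-\eps$ if $\y=0$. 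Then $\nu \in \J^{=1/2}$.
\item Coupling $\mu$ and $\nu$ through the shared label $\y$ moves every point a distance exactly $\eps$ in the $p$-coordinate and not at all in the $y$-coordinate. Hence $W(\mu,\nu)\le \eps$.
\end{itemize}

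First we compute the smoothed loss of $\mu$. Since $\p_\z = [1/2+\z]_0^1$ is independent of $\y$ and $\E[\y-1/2]=0$, we get $\E[\ell_{1/2}(\p_\z,\y)] = -\E[\y-1/2]\,\E[\sign(\p_\z-1/2)] = 0$. The tie $\p_\z = 1/2$ occurs with probability zero, so the convention for $\sign(0)$ does not matter.

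Next we compute the smoothed loss of $\nu$. Clipping to $[0,1]$ never changes which side of $1/2$ a point lies on, so $\sign(\q_\z - 1/2) = \sign(\q+\z-1/2)$ almost surely.
\begin{itemize}
\item Condition on $\y=1$. The sign is $+1$ exactly when $\z > -\eps$. Because $\eps\le\sigma$, this event has probability $(\sigma+\eps)/(2\sigma)$, so $\E[\sign(\q_\z-1/2)\mid \y=1] = \eps/\sigma$.
\item Condition on $\y=0$. By symmetry, $\E[\sign(\q_\z-1/2)\mid\y=0] = -\eps/\sigma$.
\item In both cases $(\y-1/2)\sign(\q_\z-1/2)$ has conditional expectation $\tfrac12\cdot\eps/\sigma$.
\end{itemize}
Therefore $\E[\ell_{1/2}(\q_\z,\y)] = -\eps/(2\sigma)$.

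Subtracting the two computations gives a gap of exactly $\eps/(2\sigma)$, which is the claimed bound. The only points needing care are the hypothesis $\eps\le\sigma$, which guarantees the interval $[-\sigma,\sigma]$ actually contains the threshold $-\eps$ so the probability formula holds, and the fact that clipping preserves sign relative to $1/2$. Neither is a genuine obstacle.
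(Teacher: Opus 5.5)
Your proposal is correct and follows the paper's proof. You use the same pair: $\mu$ with $\p \equiv 1/2$ and uniform $\y$, and $\nu$ with $\q = 1/2 \pm \eps$ according to $\y$; the same independence argument giving zero smoothed loss for $\mu$; and the same sign-probability computation giving $-\eps/(2\sigma)$ for $\nu$. The only cosmetic differences are that you bound $W(\mu,\nu)$ directly via the label-sharing coupling rather than citing \Cref{thm:old-ldce-as-w}, and that you state the loss of $\nu$ as an equality rather than an inequality.
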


\begin{proof}
    Suppose that $\y \in \zo$ is uniformly random, that $\p = 1/2$ with probability $1$, and that 
    \[
        \q =
        \begin{cases}
            \frac{1}{2}+\varepsilon\ \text{ if }\y=1 \\
            \frac{1}{2}-\varepsilon\ \text{ if }\y=0
        \end{cases}
    \]
    By \Cref{thm:old-ldce-as-w}, $W(\mu, \nu) \leq\E[\lvert\p-\q\rvert] = \varepsilon$ and $\p$ is calibrated. Note that $\y$ is independent of  $\p$ and $\p_\z$. In particular, we have $\E[\y|\p_\z] = 1/2$, hence 
    \[
        \E[\ell_{1/2}(\p_\z,\y)] = -\E[(\y - 1/2)\sign(\p_\z - 1/2)] = 0.
    \]
    For $\q_\z$, we can write
    \begin{align*} 
    \E[\ell_{1/2}(\q_\z, \y)] &= -\E[(\y - 1/2)\sign(\q_\z - 1/2)].
    \end{align*}
    Observe that the loss is positive if and only if $\z\cdot \sign(\y-1/2) \le -\varepsilon$, which happens with probability $(\sigma-\varepsilon)/2\sigma$, and results in a loss of $1/2$, else the loss is $-1/2$. Overall, we have:
    \[
        \E[\ell(\q_\z,\y)] \leq \fr{2}\lt(\frac{\sigma-\varepsilon}{2\sigma} -\frac{\sigma+\varepsilon}{2\sigma}\rt) = -\frac{\varepsilon}{2\sigma}\,,
    \]
    which completes the proof.
\end{proof}

\paragraph{Tightness of Post-Processing Omniprediction Gap} Finally, we show that the bound of \Cref{corollary:smooth-cdl} is tight up to constant multiplicative factors.

\begin{theorem}\label{theorem:lower-bound-2}
    Let $0\le \varepsilon\le \sigma\le 1/12$. There is a PLD $\mu$ with $\smce(\mu) \lesssim \varepsilon$, a proper loss $\ell \in \mL$ and a post-processing $\kappa:[0,1]\to [0,1]$ such that if $(\p,\y)\sim\mu$ and $\z \sim [\pm \sigma]$:
    \[
        \E[\ell(\p_\z,\y)] - \E[\ell(\kappa(\p_\z),\y)] \gtrsim \sigma + \frac{\varepsilon}{\sigma}.
    \]
\end{theorem}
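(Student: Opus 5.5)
The plan is to use $\sigma + \varepsilon/\sigma \le 2\max(\sigma, \varepsilon/\sigma)$ and give two separate constructions, one witnessing each term. Whichever term is larger, we take the corresponding instance. In both instances the loss is a $V$-shaped loss $\ell_v$, which lies in $\mL$ by \Cref{section:nearby-calibrated-omni}. The assumption $\sigma \le 1/12$ ensures that all noisy predictions stay well inside $(0,1)$, so the clipping $[\cdot]_0^1$ never activates and $\p_\z = \p + \z$.

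\textbf{The $\varepsilon/\sigma$ term.} Reuse the bad predictor from \Cref{theorem:smoothing-necessity}. Let $\y$ be a uniform bit, with $\p = 1/2+\varepsilon$ when $\y = 0$ and $\p = 1/2-\varepsilon$ when $\y=1$. As argued there, $\smce(\mu)\lesssim\varepsilon$: the calibrated \pred $(1/2,\y)$ is at earth mover's distance $\varepsilon$, so \Cref{thm:emc-smce} applies. Take $\ell = \ell_{1/2}$ and $\kappa(t) = 1-t$.

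Conditioned on $\y=0$, the prediction $\p_\z$ lies on the wrong side of $1/2$ exactly when $\z > -\varepsilon$, which has probability $(\sigma+\varepsilon)/2\sigma$. The case $\y=1$ is symmetric. Since $\ell_{1/2}$ takes the value $+1/2$ on the wrong side and $-1/2$ on the right side, this gives $\E[\ell_{1/2}(\p_\z,\y)] = \varepsilon/(2\sigma)$.

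Because $\kappa(\p_\z) - 1/2 = -(\p_\z - 1/2)$, applying $\kappa$ flips every sign. Hence $\E[\ell_{1/2}(\kappa(\p_\z),\y)] = -\varepsilon/(2\sigma)$, and the gap is $\varepsilon/\sigma$.

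\textbf{The $\sigma$ term.} Here $\mu$ is perfectly calibrated, so $\smce(\mu)=0\le\varepsilon$. Let $\p \equiv 1/2$ and $\y\sim\mathrm{Ber}(1/2)$, take $\kappa \equiv 1/2$ (so $\kappa(\p_\z) = \p$), and use $\ell_v$ with $v = 1/2+\sigma/2$. By Equation \eqref{eq:v-cal}, $\E[\ell_v(\kappa(\p_\z),\y)] = -\lvert 1/2 - v\rvert = -\sigma/2$.

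For the smoothed prediction, $\y$ is independent of $\z$, so
\[
\E[\ell_v(\p_\z,\y)] = -\E[(\y-v)\,\sign(\z-\sigma/2)] = \frac{\sigma}{2}\,\E[\sign(\z - \sigma/2)] = \frac{\sigma}{2}\Bigl(\frac14-\frac34\Bigr) = -\frac{\sigma}{4}.
\]
The gap is therefore $\sigma/4$.

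Combining the two instances by taking the larger term yields $\gtrsim \sigma+\varepsilon/\sigma$. The only real care needed is bookkeeping the sign conventions of $\ell_v$ and confirming that no clipping occurs. Both are routine given $\varepsilon\le\sigma\le 1/12$, since the predictions stay in $[1/2-2\sigma,\,1/2+2\sigma]$.
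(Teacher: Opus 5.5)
Your proof is correct, but it is organized differently from the paper's. The paper builds a single instance: the mixture $\mu = \frac12\mu^{(0)} + \frac12\mu^{(1)}$, with loss $\frac12(\ell_{1/2} + \ell_{1-\sigma/2})$ and a piecewise-constant $\kappa$. It then needs two auxiliary claims to show the two gaps add without interfering: that $\kappa(p_z) = \kappa(p)$ on the support, and that each $V$-shaped loss contributes nothing on the other component.

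You instead use $\sigma + \varepsilon/\sigma \le 2\max(\sigma, \varepsilon/\sigma)$ and exhibit whichever instance witnesses the larger term. Since the statement is an existence claim for each fixed pair $(\varepsilon,\sigma)$, this is legitimate, and it removes all of the non-interference bookkeeping.

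Your individual gadgets also differ from the paper's.
\begin{itemize}
\item For the $\varepsilon/\sigma$ term, the paper places a $\delta = \varepsilon/\sigma$ fraction of mislabeled mass at $1/2 \pm 2\sigma$. This is far enough that the noise never moves $\p_\z$ across $1/2$, so $\kappa$ can be constant on each noisy cluster. You reuse the two-point bad predictor with $\kappa(t) = 1-t$. You extract $\varepsilon/(2\sigma)$ per side from the probability $(\sigma+\varepsilon)/2\sigma$ that the noise leaves $\p_\z$ on the wrong side of $1/2$, and the sign flip of $\kappa$ doubles this to $\varepsilon/\sigma$.
\item For the $\sigma$ term, the paper uses a point mass at $(1,1)$ with $\ell_{1-\sigma/2}$ and $\kappa \equiv 1$, which relies on the clipping at $1$. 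Your version is centered at $1/2$ with $v = 1/2 + \sigma/2$ and $\kappa \equiv 1/2$. It avoids clipping altogether and gives the same gap $\sigma/4$.
\end{itemize}

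The paper's route buys one fixed construction exhibiting both terms simultaneously, with the two mechanisms separated in prediction space. Yours is shorter, computes every expectation exactly, and needs only $\sigma \le 1/4$ to rule out clipping.
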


\begin{proof}
    Consider the mixture PLD $\mu = (1/2)\mu^{(0)} + (1/2)\mu^{(1)}$, where $\mu^{(1)}(1,1) = 1$ and $\mu^{(0)}$ is defined as follows for $\delta = \varepsilon/\sigma$:
    \begin{align*}
        \mu^{(0)}(1/2,y) &= \frac{1-\delta}{2}\text{ for }y\in\{0,1\},\qquad \mu^{(0)}\Bigr(\frac{1}{2}-2\sigma, 1\Bigr) = \mu^{(0)}\Bigr(\frac{1}{2}+2\sigma, 0\Bigr) = \frac{\delta}{2}
    \end{align*}
    Clearly $\mu^{(1)}\in \calib$. Further, we claim that $\smce(\mu^{(0)})  \leq 2 \delta \sigma = 2\varepsilon$, since the earth mover's distance between the distribution $\mu^{(0)}$ and the trivially calibrated distribution $(1/2, \mathsf{Be}(1/2))$ is $\Theta(\delta \sigma)$, which is attained by moving the mass of the points $(1/2-2\sigma,1), (1/2+2\sigma,0)$ to the points $(1/2,1), (1/2,0)$. Overall, we have that $\smce(\mu) \leq \varepsilon$.

    \paragraph{Proof Structure} For the following, we let $\E_{i}$ denote the expectation over $\z\sim [-\sigma,\sigma]$ and $(\p,\y)\sim \mu^{(i)}$. We will first show the following:
    \begin{itemize}
        \item There is a proper loss $\ell^{(0)}$ and a post-processing function $\kappa^{(0)}$ such that:
        \[
            \E_{\mu^{(0)}}[\ell^{(0)}(\p_\z,\y)] - \E_{\mu^{(0)}}[\ell^{(0)}(\kappa^{(0)}(\p_\z),\y)] \gtrsim \frac{\varepsilon}{\sigma},
        \]
        \item There is a proper loss $\ell^{(1)}$ and a post-processing function $\kappa^{(1)}$ such that:
        \[
            \E_{\mu^{(1)}}[\ell^{(1)}(\p_\z,\y)] - \E_{\mu^{(1)}}[\ell^{(1)}(\kappa^{(1)}(\p_\z),\y)] \gtrsim {\sigma}.
        \]
    \end{itemize}
    Then, we will combine these constructions to obtain a proper loss $\ell$ and post-processing function $\kappa$ with the desired properties with respect to $\mu$.

    \paragraph{The First Term} We let $\ell^{(0)}(p,y) = \ell_{1/2}$, and define $\kappa^{(0)}$ as follows:
    \[
        \kappa^{(0)}(p) = 
        \begin{cases}
            \frac{1}{2}-\sigma, \text{ if }p>\frac{1}{2}+\sigma \\
            \frac{1}{2}, \text{ if }p\in [\frac{1}{2}-\sigma, \frac{1}{2}+\sigma]\\
            \frac{1}{2}+\sigma, \text{ if }p < \frac{1}{2}-\sigma
        \end{cases}
    \]

    We wish to lower bound the quantity $\E_{\mu^{(0)}}[\ell^{(0)}(\p_\z,\y)] - \E_{\mu^{(0)}}[\ell^{(0)}(\kappa^{(0)}(\p_\z),\y)]$. Note that if $\p = 1/2$, then the terms corresponding to $\y=0$ and $\y=1$ cancel out by symmetry. Therefore, it suffices to account for the terms corresponding to $\p = 1/2\pm 2\sigma$, since the support of $\mu^{(0)}$ does not contain any other points. Note that conditioned on these cases, i.e., $|\p-1/2| = 2\sigma$, we have: 
    \[\sign(\p_z-1/2) = \sign(\p-1/2) \neq \sign(\y-1/2)=\sign(\kappa^{(0)}(\p_z)-1/2)\;\text{  for any } z\in (-\sigma,\sigma).\]
    Therefore, we obtain the following by observing that $\ell^{(0)}$ is essentially the binary misclassification error, and the probability that $|\p-1/2| = 2\sigma$ under $\mu^{(0)}$ is $\delta$:
    \begin{equation}
        \E_{\mu^{(0)}}[\ell^{(0)}(\p_\z,\y) - \ell^{(0)}(\kappa^{(0)}(\p_\z),\y)] \gtrsim \delta = \frac{\varepsilon}{\sigma}. \label{equation:term-1}
    \end{equation}

    \paragraph{The Second Term} We let 
    \[ \ell^{(1)}(p,y) = \ell_{1-\sigma/2}(p,y)=-(y-1+\frac{\sigma}{2})\cdot \sign(p-1+\frac{\sigma}{2}).\]
    We define $\kappa^{(1)}(p) = 1$ for all $p\in[0,1]$. Observe that $\ell^{(1)}(p,1) = -(\sigma/2) \sign(p-1+\sigma/2)$. Therefore, we have that $\ell^{(1)}(\kappa(\p),\y) = -\sigma/2$, while for $\p_\z$ we have:
    \[
        \E_{\mu^{(1)}}[\ell^{(1)}(\p_\z,\y)] = \frac{\sigma}{2} \cdot \Bigr( \frac{1}{4} - \frac{3}{4} \Bigr) = - \frac{\sigma}{4}\,,
    \]
    which follows from the fact that for $p=1$, $\sign(\p_\z-1+\sigma/2) = -1$ if and only if $\z<-\sigma/2$. Therefore, 
    \begin{equation}
        \E_{\mu^{(1)}}[\ell^{(1)}(\p_\z,\y) - \ell^{(1)}(\kappa(\p),\y)] \gtrsim \sigma.\label{equation:term-2}
    \end{equation}

    \paragraph{Putting Everything Together} We let: 
    \[\ell(p,y) = \frac{1}{2}\ell_{1/2}(p,y)+ \frac{1}{2}\ell_{1-\frac{\sigma}{2}}(p,y)\] Note that $\ell$ is proper, because it is a convex combination of proper losses. We define $\kappa:[0,1]\to [0,1]$ as follows:
    \[
        \kappa(p) = 
        \begin{cases}
            \kappa^{(1)}(p), \text{ if }p\ge \frac{3}{4} \\
            \kappa^{(0)}(p), \text{ otherwise}
        \end{cases}.
    \]
We establish two properties of $\kappa$:
    \begin{claim}\label{claim:tightness-claim-1}
        For any $z\in (-\sigma,\sigma)$ and any $p$ in the support of $\mu$ we have: $\kappa(p) = \kappa(p_z)$.
    \end{claim}
    The above claim is true because each point $p$ in the support of $\mu$ is at least $\sigma$-far from the boundaries of the regions where $\kappa$ takes constant values. For example, if $\p = \frac{1}{2}+2\sigma$, then $\p_z\in(1/2+\sigma, 3/4)$. Hence, $\kappa(\p_z) = \kappa(\p) = 1/2-\sigma$.

    \begin{claim}\label{claim:tightness-claim-2}
        Let $\ell^{(0)} = \ell_{1/2}$ (with $v^{(0)} = 1/2$) and $\ell^{(1)}=\ell_{1-\sigma/2}$ (with $v^{(1)} = 1- \sigma/2$). Then we have:
        \[
            \E_{\z\sim [-\sigma,\sigma]}\E_{(\p,\y)\sim\mu^{(1-i)}}\Bigr[\ell^{(i)}(\p_\z,\y) - \ell^{(i)}(\kappa(\p),\y)\Bigr] = 0\text{ for }i\in\{0,1\}
        \]
    \end{claim}
    The above claim is true because the support of $\mu^{(1-i)}$ is at least $\sigma$-far from $v^{(i)}$ and the function $\kappa$ does not move points that are near $v^{(i)}$ to points near $v^{(1-i)}$. For example, consider the case $i=1$. In this case, we have $\p\in[1/2-2\sigma,1/2+2\sigma]$ with probability $1$ over $\mu^{(i-1)} = \mu^{(0)}$. For any fixed $y$, the value of $\ell^{(1)}(p,y)$ is determined by the value of $\sign(p-1+\sigma/2)$. However, for any $p\in[1/2-2\sigma,1/2+2\sigma]$, and any $z\in(-\sigma,\sigma)$ we have $\sign(p-1+\sigma/2) = \sign(p_z-1+\sigma/2) = \sign(\kappa(p)-1+\sigma/2) = -1$. Therefore, $\ell^{(1)}(p_z,y)-\ell^{(1)}(\kappa(p),y) = 0$. The case $i=0$ follows from a similar case analysis.

    Recall that we wish to give a lower bound on the following quantity:
    \begin{align*}
        \E[\ell(\p_\z,\y)] - \E[\ell(\kappa(\p_\z),\y)] = \frac{1}{2} \E_{\mu^{(0)}}[\ell(\p_\z,\y)-\ell(\kappa(\p_\z),\y)] + \frac{1}{2} \E_{\mu^{(1)}}[\ell(\p_\z,\y) - \ell(\kappa(\p_\z),\y)]\,,
    \end{align*}
    where we have used the fact that $\mu$ is the balanced mixture of $\mu^{(0)}$ and $\mu^{(1)}$. By using \Cref{claim:tightness-claim-1} we further obtain the following:
    \begin{align*}
        \E[\ell(\p_\z,\y)] - \E[\ell(\kappa(\p_\z),\y)] = \frac{1}{2} \E_{\mu^{(0)}}[\ell(\p_\z,\y)-\ell(\kappa(\p),\y)] + \frac{1}{2} \E_{\mu^{(1)}}[\ell(\p_\z,\y) - \ell(\kappa(\p),\y)]\,,
    \end{align*}

    Finally, recall that $\ell = (\ell^{(0)} + \ell^{(1)})/2$. Due to \Cref{claim:tightness-claim-2} we may ignore the term $\ell^{(0)}$ under $\mu^{(1)}$ as well as the term $\ell^{(1)}$ under $\mu^{(0)}$. We obtain the following:
    \begin{align}
        \E[\ell(\p_\z,\y)] - \E[\ell(\kappa(\p_\z),\y)] \gtrsim \E_{\mu^{(0)}}[\ell^{(0)}(\p_\z,\y) - \ell^{(0)}(\kappa(\p),\y)] + \E_{\mu^{(1)}}[\ell^{(1)}(\p_\z,\y) - \ell^{(1)}(\kappa(\p),\y)]\label{equation:putting-all-together}
    \end{align}
    The desired result follows by combining the above bound \eqref{equation:putting-all-together} with the bounds \eqref{equation:term-1} and \eqref{equation:term-2}.
\end{proof}

\section{Smooth Calibration Error Approximates Lower Distance}
\label{sec:ldce-smce}

In this section, we prove \cref{thm:emc-smce,thm:ldce-emc}, which immediately imply \cref{thm:ldce-smce} when combined. To begin, we recall their respective statements:

\EmcSmce*

\LdceEmc*

Of the two proofs, the proof of \cref{thm:emc-smce} is significantly simpler:

\begin{proof}[Proof of \Cref{thm:emc-smce}]
    To prove the lower bound, we must show that $\smce(\mu) < 2\eps$ whenever $\demc(\mu) < \eps$. First, by the definition of $\demc$, we can construct jointly distributed pairs $(\p, \y)$ and $(\p', \y')$ such that the former is distributed according to $\mu$, the latter is perfectly calibrated, and the two are close in expectation:
    \[
        \E\big[d\bigl((\p, \y), (\p', \y')\bigr)\bigr] = \E\lvert \p-\p' \rvert + \E\lvert \y-\y'\rvert < \eps.
    \]
    Next, to bound $\smce(\mu)$, consider any $1$-Lipschitz function $w : [0, 1] \to [-1, 1]$. Then,
    \[
        \E\bigl[w(\p)(\y-\p)\bigr] = \underbrace{\E\bigl[w(\p)\bigl((\y-\p) - (\y'-\p')\bigr)\bigr]}_{<\eps} + \underbrace{\E\bigl[\bigl(w(\p) - w(\p')\bigr)(\y'-\p')\bigr]}_{<\eps} + \underbrace{\E\bigl[w(\p')(\y'-\p')\bigr]}_{=0}.
    \]
    In the above equation, the first two terms on the right are $< \eps$ by the $\eps$-closeness of $(\p, \y)$ and $(\p', \y')$ in expectation. The third term is $0$ by calibration of $(\p', \y')$, which means that $\E[\y'|\p']=\p'$.

    To prove the upper bound, we use the observation from \cite{GH-survey} that when the smooth calibration error is small, it is easy to construct a nearby perfectly calibrated predictor by altering the conditional distribution of the label. Specifically, if $(\p, \y) \sim \mu$, then define $\tilde{\y}|\p \sim \Ber(\p)$ and let $\nu$ be the distribution of $(\p, \tilde{\y})$. Then, $\nu \in \calib$ and for any function $f : \mathcal{S} \to [-1, +1]$, we have
    \[
        \bigl\lvert \E[f(\p, \y)] - \E[f(\p, \tilde{\y})]\bigr\rvert = \Bigl\lvert\E\bigl[(f(\p, 1) - f(\p, 0))(\y - \p)\bigr]\Bigr\rvert.
    \]
    By Kantorovich-Rubinstein duality, the supremum of the left side over all $1$-Lipschitz $f$ is equal to $W(\mu, \nu)$. The right side is at most $2\smce(\mu)$ since the Lipschitz constant of $f(\cdot, 1) - f(\cdot, 0)$ is at most twice that of $f$. We conclude that $\demc(\mu) \le 2\smce(\mu)$.
\end{proof}

Next, we prove \Cref{thm:ldce-emc}, which equates $\demc$ and $\lCE$ via a novel but simple exchange argument. In what follows, a \emph{transport plan} $\pi$ from $\mu$ to $\calib$ is simply a coupling of $\mu$ with some $\nu \in \calib$. When we say that a plan $\pi$ \emph{moves} mass $m$ from $A \subseteq \mathcal{S}$ to $B \subseteq \mathcal{S}$, we mean that the associated coupling assigns mass $m$ to the set $A \times B$. Similarly, the \emph{cost} of a transport plan $\pi$ is
\[
    \E_{\substack{\bigl((\p, \y), (\p', \y')\bigr) \sim \pi}} \Bigl[d\bigl((\p, \y), (\p', \y')\bigr)\Bigr].
\]

\begin{figure*}
    \centering
    \begin{subfigure}[t]{0.5\textwidth}
        \centering
        \includegraphics[height=1.5in]{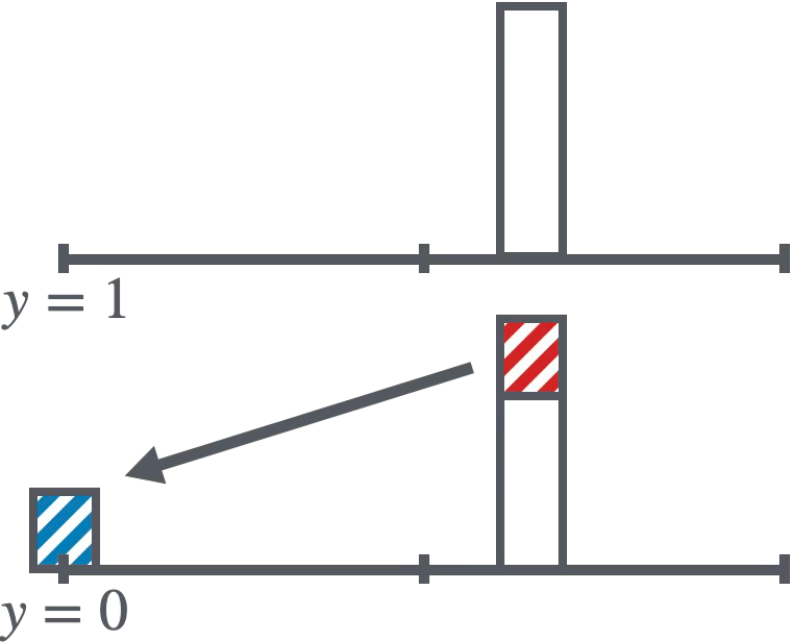}
        \caption{Move some mass to corner.}
        \label{fig:corner}
    \end{subfigure}
    ~ 
    \begin{subfigure}[t]{0.5\textwidth}
        \centering
        \includegraphics[height=1.5in]{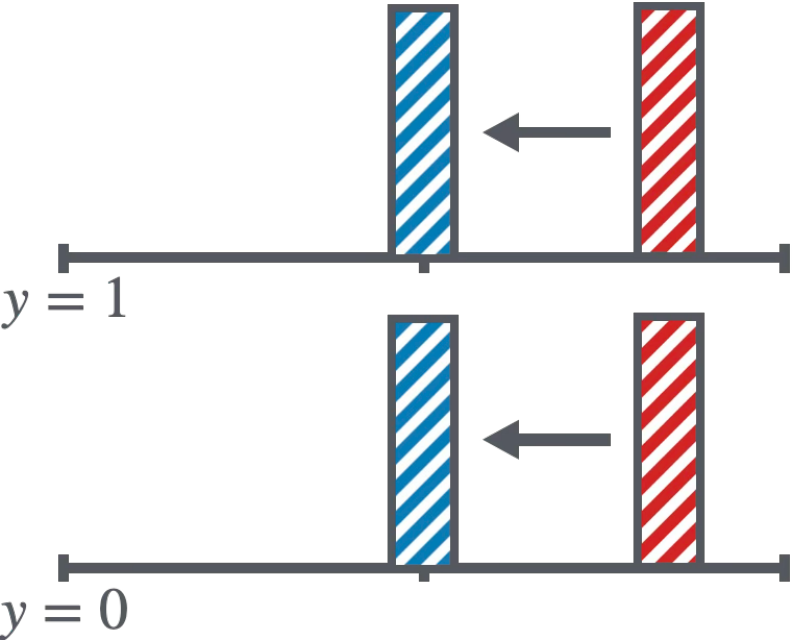}
        \caption{Move all mass where ratio dictates.}
        \label{fig:slide}
    \end{subfigure}
    \caption{Visualization of the proof of \Cref{thm:ldce-emc}.}
    \label{fig:within-segments}
    \label{fig:ldce-emc}
\end{figure*}

\begin{proof}[Proof of \Cref{thm:ldce-emc}]
    Let $S_0 = [0, 1] \times \{0\}$ and $S_1 = [0, 1] \times \{1\}$. As subsets of the domain $\mS$, these line segments correspond to the events $\y = 0$ and $\y = 1$.
    By definition, $\demc(\mu)$ is the infimum cost of all transport plans from $\mu$ to $\calib$, and $\lCE(\mu)$ is the infimum cost of transport plans from $\mu$ to $\calib$ that move no mass between $S_0$ and $S_1$. In more detail: We know from \cref{thm:triples} that $\lCE$ is the infimum cost over triples $(\p, \q, \y)$, as opposed to generic couplings of $(\p, \y)$ with $(\q, \y')$ for some $\y'$. This precisely means enforcing $\Pr[\y\neq\y'] = 0$ under the coupling, a.k.a. no mass is moved between $S_0$ and $S_1$. (Recall that just before the proof started, we clarified that ``moving mass from $A$ to $B$'' means the coupling placing mass on $A \times B$.)
    
    Thus, to prove that $\demc(\mu) = \lCE(\mu)$, it suffices to show that for every transport plan $\pi$ from $\mu$ to $\calib$, there exists a transport plan $\pi'$ from $\mu$ to $\calib$ that moves no mass between $S_0$ and $S_1$ and is no costlier than $\pi$. By taking limits, it suffices to prove the claim in the case that all distributions under consideration are discrete.
    
    To prove the claim, we first observe that $\pi$ can be viewed as the composition of two consecutive plans $\pi_1$ and $\pi_2$ such that $\pi_1$ only moves mass within each segment and $\pi_2$ only moves mass between $(p, 0)$ and $(p, 1)$ for various values $p \in [0, 1]$. Visually, $\pi_1$ moves mass ``horizontally,'' and $\pi_2$ moves mass ``vertically,'' as depicted in Figure~\ref{fig:within-segments}.
    
    Next, suppose that $\pi_2$ moves some mass from $(p, 0)$ to $(p, 1)$ for some value $p \in [0, 1]$. If $m_0$ and $m_1$ were the masses at these points just before the move, then $\pi_2$ must have moved precisely $c = pm_0 - (1-p)m_1$ mass between these two points to achieve calibration. Indeed, after such a move, which costs precisely $c$, the two points would have masses $m_0 - c = (1-p)(m_0 + m_1)$ and $m_1 + c = p(m_0 + m_1)$, respectively, which are in the correct ratio for calibration.
    
    Alternatively, as shown in Figure~\ref{fig:corner}, we could have achieved calibration by moving $m' = m_0 - (1/p-1)m_1$ mass from $(p, 0)$ to $(0, 0)$, which does not cross segments and has the same total cost of $pm' = c$. Indeed, after such a move, the two points would have masses $(1/p-1)m_1$ and $m_1$, respectively, which are in the correct ratio for calibration. (Yet another option, shown in Figure~\ref{fig:slide}, would be to move all the mass at $(p, 0)$ and $(p, 1)$ to the points $(p', 0)$ and $(p', 1)$, respectively, where $p' = m_1/(m_0 + m_1)$. This also has total cost $(m_0 + m_1)(p-p') = c$ and satisfies calibration.)
    
    At this point, we have shown that any plan $\pi$ can be transformed into a plan $\pi'$ with the same cost that moves no mass from $S_0$ to $S_1$. A similar argument applies to shipments from $S_1$ to $S_0$. Thus, there exists an optimal transport plan that moves no mass between segments in \emph{either} direction, as claimed.
\end{proof}

Next, we prove \cref{corollary:demc-equiv}. We recall the statement here for convenience.

\DemcEquiv*

\begin{proof}
    The inequalities $a \le b$, $a \le c$, and $b \le \underline{\mathrm{dCE}}(\mu)$ are all clear, since each inequality compares the infimum over a set to the infimum over a subset. \Cref{thm:ldce-emc} states that $\underline{\mathrm{dCE}}(\mu) = a$. Finally, our proof of the upper bound in \Cref{thm:emc-smce} implies that $c \le 2a$ since the only calibrated prediction-label distribution which preserves the marginal of $\p$ is that of $(\p, \tilde{\y})$, where $\tilde{\y}|\p \sim \mathrm{Bernoulli}(\p)$.
\end{proof}

Finally, we prove \cref{thm:old-ldce-as-w}. The proof is similar in spirit to the proof of \cref{thm:ldce-emc} but simpler. Again, we recall the relevant statement for convenience.

\OldLdceAsW*

\begin{proof}
    As in the proof of \cref{thm:ldce-emc}, consider two \preds $\mu$ and $\nu$. In this case, however, we do not assume that either $\mu$ or $\nu$ is calibrated, but instead require that $\Pr_{(\p,\y)\sim\mu}[\y=1]=\Pr_{(\p,\y)\sim\nu}[\y=1] = \tau$. Phrased differently, $\mu,\nu\in\J^{=\tau}$ for the same value $\tau \in [0, 1]$.
    
    Recall that $W(\mu,\nu)$ is, by definition, the infimum cost among all transport plans  $\pi$ from $\mu$ to $\nu$ (i.e. couplings of $\mu$ and $\nu$), which may or may not move mass between the segments $S_0$ and $S_1$. Observe that in the case that $\pi$ \emph{does} move mass from $S_0$ to $S_1$, it must also move the same amount of mass from $S_1$ back to $S_0$, from our assumption that both $\mu$ and $\nu$ place exactly $\tau$ mass on $S_1$. Similarly, if $\pi$ moves mass from $S_1$ to $S_0$, it must move the same amount of mass from $S_0$ back to $S_1$. 
    
    Ultimately, what we want to prove is the following claim: $W(\mu, \nu)$ is in fact equal to the infimum cost among the \emph{restricted} set of couplings of $(\p, \y) \sim \mu$ with $(\q, \y')\sim\nu$ such that $\Pr[\y=\y'] = 1$, a.k.a. triples $(\p, \q, \y)$ with $(\p,\y)\sim\mu$ and $(\q,\y)\sim\nu$. Viewing the coupling as a transport plan, this is equivalent to the restriction that the plan $\pi$ moves \emph{no} mass from $S_0$ to $S_1$ and \emph{no} mass from $S_1$ to $S_0$.

    To prove the claim, let $\pi$ be any transport plan from $\mu,\nu\in\J^{=\tau}$, and factor $\pi$ into two consecutive plans $\pi_1$ and $\pi_2$ as in the proof of \cref{thm:ldce-emc} and as depicted in Figure~\ref{fig:ldce-emc}, where $\pi_1$ only moves mass within segments (``horizontally''), and $\pi_2$ only moves mass between pairs of points of the form $(p, 0)$ and $(p, 1)$ (``vertically'').
    
    Suppose for the sake of contradiction that $\pi_2$ moved some positive mass $c > 0$ from $S_0$ to $S_1$ or vice versa. By the preceding discussion, we know that $\pi_2$ must in fact move $c$ mass from \emph{both} $S_0$ to $S_1$ and from $S_1$ to $S_0$. Recall that the metric under consideration is $\ell_1$:
    \[
        d\bigl((p, y), (p', y')\bigr) = \lvert p-p'\rvert + \lvert y-y'\rvert.
    \]
    Therefore, the cost of $\pi_2$ is $2c$. Had we instead exchanged these two shipments and moved them to their final destinations within segments, the above equality shows that we would have incurred a cost of $\le 2c$, proving the claim.
\end{proof}

\section{Inapproximability of Upper Distance}
\label{sec:udce-hard}

In this section, we state and prove the full version of \cref{thm:udce-hard-intro}.

\begin{theorem}
\label{thm:udce-hard}
    Fix $\eps,\eps' > 0$, $k \in \mathbb{N}$, and $\delta_{0j}, \delta_{1j} \in\R$ for $j \in [k]$. Consider the following tuples $(\X, \D_\X, p^*, p)$:
\begin{enumerate}[label=(\alph*)]
    \item Let $\X_a = \{L, R\}$ and define $\D_{\X_a}$, $p_a^*$, and $p_a$ as follows:
    \begin{center}
        \begin{tabular}{||c|c|c|c||} 
        \hline
             $x$ & $\D_{\X_a}(x)$ & $p_a^*(x)$ & $p_a(x)$ \\ [0.5ex] 
             \hline\hline
             $L$ & $1/2$ & $1/2$ & $1/2 - \eps$  \\ [0.5ex]
             \hline
             $R$ & $1/2$ & $1/2$ & $1/2+\eps$\\ [0.5ex]
             \hline
        \end{tabular}
    \end{center}
    \item Let $\X_b = \{L_0, L_1, R_0, R_1\}$ and define $\D_{\X_b}$, $p_b^*$, and $p_b$ as follows:
    \begin{center}
        \begin{tabular}{||c|c|c|c||} 
        \hline
             $x$ & $\D_{\X_b}(x)$ & $p_b^*(x)$ & $p_b(x)$ \\ [0.5ex] 
             \hline\hline
             $L_0$ & $\eps'$ & $1$ & $1/2 - \eps$  \\ [0.5ex]
             \hline
             $L_1$ & $1/2-\eps'$ & $1/2-\eps$ & $1/2-\eps$\\ [0.5ex]
             \hline
             $R_0$ & $\eps'$ & $0$ & $1/2+\eps$\\ [0.5ex]
             \hline
             $R_1$ & $1/2-\eps'$ & $1/2+\eps$ & $1/2+\eps$\\ [0.5ex]
             \hline
        \end{tabular}
    \end{center}
    \item Building on (a), let $\X_c = \{L, R\} \times [k/2]$ and define $\D_{\X_c}$, $p_c^*$, and $p_c$ as follows:
    \begin{center}
        \begin{tabular}{||c|c|c|c||} 
        \hline
             $x$ & $\D_{\X_c}(x)$ & $p_c^*(x)$ & $p_c(x)$ \\ [0.5ex] 
             \hline\hline
             $(L, j)$ & $1/k$ & $1/2$ & $1/2 - \eps + \delta_{0j}$  \\ [0.5ex]
             \hline
             $(R, j)$ & $1/k$ & $1/2$ & $1/2+\eps + \delta_{1j}$\\ [0.5ex]
             \hline
        \end{tabular}
    \end{center}
    \item Building on (b), let \[\X_d = \bigl(\{L_0, R_0\} \times \{1, \ldots, \eps' k\}\bigr) \cup \bigl(\{L_1, R_1\} \times \{\eps'k + 1, \ldots, k/2\}\bigr)\] and define $\D_{\X_d}$, $p_d^*$, and $p_d$ as follows:
    \begin{center}
        \begin{tabular}{||c|c|c|c||} 
        \hline
             $x$ & $\D_{\X_d}(x)$ & $p_d^*(x)$ & $p_d(x)$ \\ [0.5ex] 
             \hline\hline
             $(L_0,j)$ & $1/k$ & $1$ & $1/2 - \eps + \delta_{0j}$  \\ [0.5ex]
             \hline
             $(L_1,j)$ & $1/k$ & $1/2-\eps$ & $1/2-\eps + \delta_{0j}$\\ [0.5ex]
             \hline
             $(R_0,j)$ & $1/k$ & $0$ & $1/2+\eps + \delta_{1j}$\\ [0.5ex]
             \hline
             $(R_1,j)$ & $1/k$ & $1/2+\eps$ & $1/2+\eps + \delta_{1j}$\\ [0.5ex]
             \hline
        \end{tabular}
    \end{center}
\end{enumerate}
Suppose that $\eps' = \eps/(1+2\eps)$, that $\eps' k$ and $k/2$ are both integers, and that the parameters $\delta_{ij}$ are sampled i.i.d. from a continuous distribution over $[-\eps/2, \eps/2]$. Then,
\begin{enumerate}[label=(\roman*)]
    \item Case~(a), which has $\dCE \ge \Omega(\eps)$, is impossible to distinguish with any positive advantage from case~(b), which has $\dCE \le O(\eps^2)$, given any finite number of prediction-label samples.
    \item It requires at least $\Omega(\sqrt{k})$ prediction-label samples to distinguish (with constant advantage in expectation over the choice of $\delta_{ij}$) case~(c), which has $\uCE \ge \Omega(\eps)$, from case~(d), which has $\uCE \le O(\eps^2)$ (with probability $1$ over the choice of $\delta_{ij}$).
\end{enumerate}

In particular, $\dCE$ cannot be estimated within a better-than-quadratic factor from prediction-label samples, and $\uCE$ cannot be estimated within a better-than-quadratic factor from any number of prediction-label samples that is independent of the support size of the distribution of predictions.
\end{theorem}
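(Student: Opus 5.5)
The proof has three parts: verify that the pairs have identical (or nearly identical) prediction-label distributions, compute the distance values, and prove a birthday-style sample lower bound for part (ii). In every case the label is drawn as $\y \sim \Ber(p^*(\x))$, so $p^*$ is the Bayes predictor.

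For (i), I compute $p_a\circ\D_a$ and $p_b\circ\D_b$ and check that they coincide. In case (a), each of $1/2\pm\eps$ carries mass $1/2$ with conditional label mean $1/2$. In case (b), the prediction $1/2-\eps$ carries mass $1/2$, and its conditional label mean is
\[
2\bigl(\eps'\cdot 1+(1/2-\eps')(1/2-\eps)\bigr)=2\eps'(1/2+\eps)+1/2-\eps .
\]
This equals $1/2$ exactly when $\eps'=\eps/(1+2\eps)$, and the point $1/2+\eps$ is symmetric. Since the two \preds are identical, no finite number of samples distinguishes the cases.

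For $\dCE$ in case (b), I take the calibrated predictor $q=p_b^*$. It is calibrated because $\E[\y\mid p^*(\x)]=p^*(\x)$ for the Bayes predictor. It differs from $p_b$ only on $L_0,R_0$, which have total mass $2\eps'=O(\eps)$, and by $O(1)$ there. That gives only $O(\eps)$, so I need a better witness. Instead, on $L_0$ and $L_1$ jointly I will take $q=1/2-\eps$ on $L_1$ and $q=1$ on $L_0$. I will check whether some merged level sets give $O(\eps^2)$ cost; the intended witness is presumably $q\equiv$ the Bayes value on each atom after merging, and I will verify the cost $O(\eps^2)$ by direct computation.

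For case (a), any $q$ calibrated for $\D_a$ has $\E[\y\mid q]=q$. Since $\y$ has conditional mean $1/2$ at both points, every calibrated $q$ takes values whose level sets average to $1/2$, which forces $q\equiv 1/2$. Hence $\dCE=\eps$.

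For (ii), I first compute $\uCE$ using the formula $\inf_\kappa \E\lvert\kappa(\p)-\p\rvert$ over calibrated post-processings. With probability $1$ the $\delta_{ij}$ are distinct, so each prediction value identifies a single atom $x$, and every post-processing of $p$ is an arbitrary function of $x$. In case (d), $\kappa(p_d(x))=p_d^*(x)$ is therefore a calibrated post-processing, and I bound its cost. As before, I will organize the witness so that the cost is $O(\eps^2)$, possibly by grouping the $L_0$ atoms with nearby $L_1$ atoms. I flag this as a delicate computation that I need to redo, since the naive $p^*$ witness gives $O(\eps)$.

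In case (c), each atom has label mean $1/2$. Any calibrated $\kappa$ has level sets with label mean $1/2$, so $\kappa\equiv 1/2$, giving $\uCE\ge\eps/2$ because $\lvert\delta\rvert\le\eps/2$.

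For indistinguishability in (ii), a sample consists of the prediction value, which is a uniform random atom's label, together with a label. If no prediction value repeats, then averaging over the random $\delta$, the prediction values are i.i.d. draws from the same continuous mixture in both cases. The labels then have the same marginal: for a fresh value near $1/2-\eps$ the label is $\Ber(1/2)$ in both cases, by the computation in (i). Thus, conditioned on no collision, the sample distributions coincide. By the birthday bound, a collision among $m$ draws from $k$ atoms has probability at most $m^2/k$. Distinguishing therefore requires $m=\Omega(\sqrt k)$.

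The main obstacle I expect is the $O(\eps^2)$ upper bounds on $\dCE$ and $\uCE$ in cases (b) and (d).
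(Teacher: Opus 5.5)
Your PLD identity in (i), the lower bounds ($\dCE=\eps$ in (a), $\uCE\ge\eps/2$ in (c)), and the collision argument match the paper. The genuine gap is the other half of both separations. You never produce the $O(\eps^2)$ witnesses for (b) and (d), and the route you suggest, grouping $L_0$ atoms with nearby $L_1$ atoms, cannot work.

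Here is why same-side grouping fails. The left side $\{L_0,L_1\}$ has total mass $1/2$ and label mean exactly $1/2$; that is what $\eps'=\eps/(1+2\eps)$ was chosen for. Its predictions, however, average $1/2-\eps+\bar\delta_0$, where $\bar\delta_0$ is the mean of the $\delta_{0j}$ (zero in (b)). A calibrated post-processing satisfies $\sum_{x\in G}\D(x)\kappa(p(x))=\sum_{x\in G}\D(x)p^*(x)$ on each level set $G$. So if no level set mixes left and right atoms, the cost on the left side alone is at least $\tfrac12(\eps-\bar\delta_0)\ge\eps/4$. In (b), merging $L_0$ with $L_1$ simply recreates case (a)'s level set at $1/2$, at cost $\eps/2$.

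The missing idea is to merge across sides. $L_0$ and $R_0$ have equal mass $\eps'$ and label means $1$ and $0$, so $\{L_0,R_0\}$ is a calibrated level set at value $1/2$, only $\eps$ from their predictions. Meanwhile $L_1$ and $R_1$ keep their Bayes values $1/2\mp\eps$, which in (b) equal their predictions. This is the paper's $p_b'$, and it costs $2\eps'\eps=2\eps^2/(1+2\eps)$.

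In (d), the analogous $\kappa_d$ sends $L_0$- and $R_0$-type predictions to $1/2$, $L_1$-type predictions to $1/2-\eps$, and $R_1$-type predictions to $1/2+\eps$. It is a genuine post-processing because the $\delta_{ij}$ are a.s.\ distinct. It is calibrated because the $\eps'k$ atoms of type $L_0$ and the $\eps'k$ atoms of type $R_0$ again balance to $1/2$.

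Two cautions for the write-up. First, $\kappa_d$ moves each $L_1$ and $R_1$ atom by $\lvert\delta_{ij}\rvert$, which adds $(1-2\eps')$ times their average $\lvert\delta_{ij}\rvert$ to the cost. That term is $O(\eps^2)$ only if the perturbations are $O(\eps^2)$; for uniform perturbations on $[-\eps/2,\eps/2]$ it is roughly $\eps/4$. The paper's ``essentially the same analysis as in part (b)'' passes over this term. The fix is to draw the $\delta_{ij}$ from a continuous distribution on, say, $[-\eps^2,\eps^2]$. Then (c) still has $\uCE\ge\eps-\eps^2$, and nothing else in the argument changes.

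Second, conditioned on no collision, the $L_0/L_1$ sub-types in (d) are drawn without replacement, so the labels there are not exactly i.i.d.\ $\Ber(1/2)$ as you claim. The correct statement compares both cases to an idealized model in which every sample uses a fresh atom type and a fresh perturbation. Each case is within total variation $\binom{m}{2}/k$ of that model, and the two cases coincide under it. This still yields $m=\Omega(\sqrt k)$.
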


\begin{figure}[t]
     \centering
     \begin{subfigure}[b]{\textwidth}
         \centering
         \includegraphics[height=1.5in]{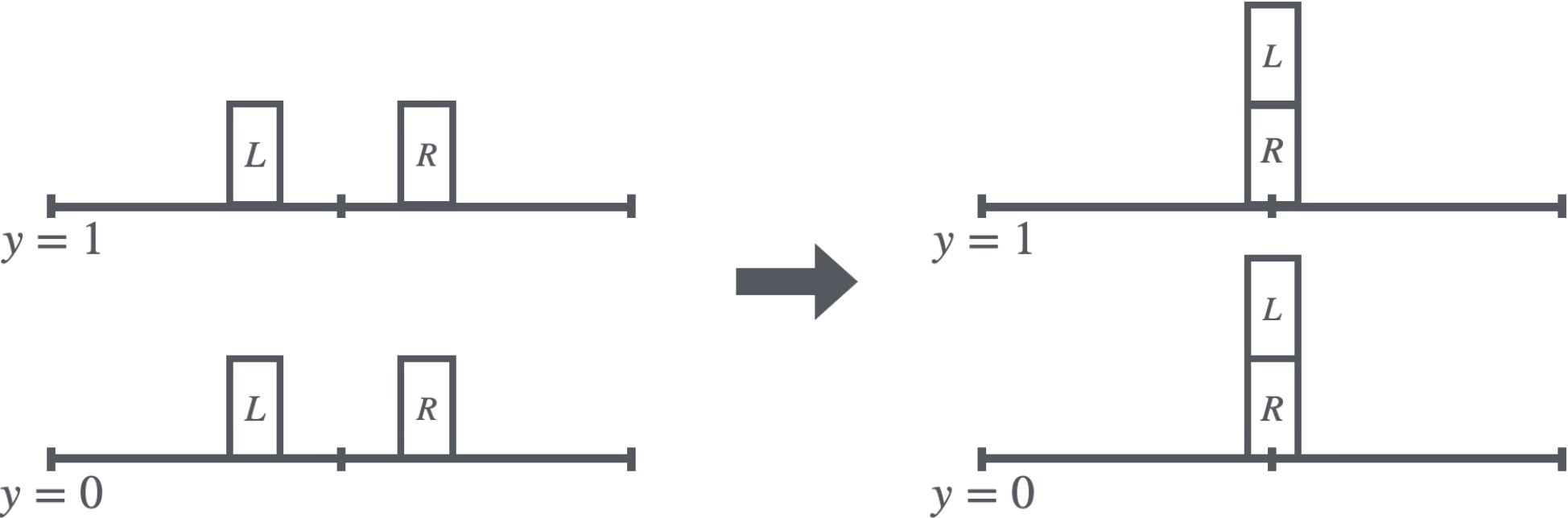}
         \caption{Cases (a) and (c) of \cref{thm:udce-hard}}
         \label{fig:lower-bounds-ac}
     \end{subfigure}
     \;
     \vspace{0.125in}
     \;
     \begin{subfigure}[b]{\textwidth}
         \centering
         \includegraphics[height=1.5in]{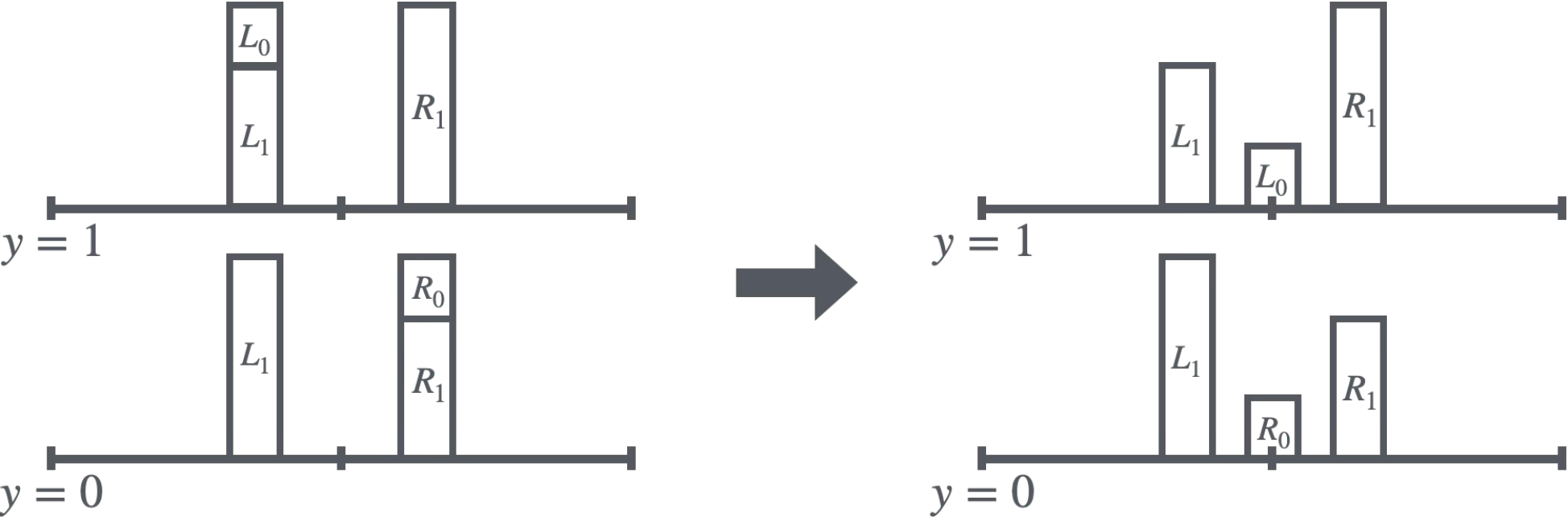}
         \caption{Cases (b) and (d) of \cref{thm:udce-hard}}
         \label{fig:lower-bounds-bd}
     \end{subfigure}
    \caption{Optimal transport to calibration for each case of \cref{thm:udce-hard}. In cases (a) and (b), the predictions are concentrated entirely on the points $1/2\pm \eps$, but in cases (c) and (d), they are instead scattered nearby.}
    \label{fig:lower-bounds}
\end{figure}

Observe that part (i) of \cref{thm:udce-hard} is precisely the prior result of \cite{BlasiokGHN23} regarding the inapproximability of $\dCE$ in the prediction-only access model. Part (ii) is our new result regarding $\uCE$. The proofs of both parts are most easily illustrated by Figure~\ref{fig:lower-bounds}.

\begin{proof}[Proof of \Cref{thm:udce-hard}]\;
\begin{enumerate}[label=(\roman*)]
    \item To see that case (a) has $\dCE \ge \Omega(\eps)$, observe that the Bayes optimal predictor $p_a^*$ is the constant $1/2$ function. Therefore, the only calibrated predictor on the domain $\X_a$ is the constant $1/2$ function itself. Since $p_a$ always outputs values exactly $\eps$-far away from $1/2$, it follows that $\dCE = \eps$ in case (a). In Figure~\ref{fig:lower-bounds-ac}, this corresponds to the movement of all mass (total of $1$) across a distance of length $\eps$, for a total cost of $1 \cdot \eps \le O(\eps)$.
    
    In contrast, the fact that case (b) has $\dCE \le O(\eps^2)$ is witnessed by calibrated predictor
    \[
        p_b'(x) = \begin{cases}
            1/2 &\text{if }x \in \{L_0, R_0\}\\
            1/2-\eps&\text{if }x=L_1,\\
            1/2+\eps&\text{if }x=R_1.
        \end{cases}
    \]
    Visually, $p_b'$ is obtained from $p_b$ by ``moving'' the two $\eps'$ masses at $x \in \{L_0, R_0\}$, each over a distance of length $\eps$, to the point $1/2$, as shown in Figure~\ref{fig:lower-bounds-bd}, for a total cost of $2\eps' \cdot \eps \le O(\eps^2)$. More formally,
    \[
        \E_{\x \sim \mD_{\X_b}} \bigl\lvert p_b(\x) - p'_b(\x) \bigr\rvert = 2\eps' \cdot \eps + (1-2\eps') \cdot 0 = \frac{2\eps^2}{1+2\eps} \le O(\eps^2).
    \]
    
    At this point, all that remains is to show that cases (a) and (b) give rise to identical \preds. This will imply that they cannot be distinguished from any finite number of prediction-label samples, as claimed. For this, we simply observe that in case (b), the expected label $\y$ given $p_b(\x) = 1/2-\eps$ is
    \[
        \E\Bigl[\y \,\Big|\,p_b(\x)=\frac{1}{2}-\eps\Bigr] = \frac{\eps' \cdot 1 + \Bigl(\frac{1}{2}-\eps'\Bigr)\cdot\Bigl(\frac{1}{2}-\eps\Bigr)}{\eps' + \Bigl(\frac{1}{2}-\eps'\Bigr)} = \frac{1}{2}.
    \]
    Similarly, one can check that $\E[\y|p_b(\x)=1/2+\eps] = 1/2$. Therefore, cases (a) and (b) give rise to the same \preds, as claimed, completing the proof of part (i).

    \item First, note that cases (c) and (d) are generalizations of cases (a) and (b), respectively, where instead of predicting $1/2\pm \eps$, we make predictions that are small random perturbations of $1/2\pm \eps$. Thus, our proof strategy for part (ii) will be similar in spirit to our strategy for part (i).

    To see that case (c) has $\uCE \ge \Omega(\eps)$, observe that the Bayes optimal predictor $p_c^*$ is the constant $1/2$ function. Therefore, the only calibrated predictor on the domain $\X_c$ is the constant $1/2$ function itself. Since $\lvert \delta_{ij} \rvert \le \eps/2$, the predictor $p_c$ always outputs values at least $\eps/2$-far away from $1/2$. It follows that $\uCE \ge \dCE \ge \eps/2$ in case (c) (with probability $1$ over the choice of $\delta_{ij}$). We also have $\uCE \le 3\eps/2$, and this is again depicted in Figure~\ref{fig:lower-bounds-ac}.

    The situation in case (d) is somewhat subtler. Intuitively, in case (d), there always \emph{exists} a ``cheap'' post-processing $\kappa$ certifying $\uCE \le O(\eps^2)$, but such a $\kappa$ is hard to actually \emph{find} from a handful of prediction-label samples. This is in contrast to parts (a) and (c), where such a post-processing does \emph{not} exist, and part (b), where such a post-processing exists \emph{and} is easy to construct from samples.

    More formally, the fact that case (d) has $\uCE \le O(\eps^2)$ is witnessed by the following post-processing function $\kappa_d$. The following definition of $\kappa_d$ mimics the transition from the predictor $p_b$ to $p_b'$ that we used in case (b) and visualized in Figure~\ref{fig:lower-bounds-bd}. In defining $\kappa_d$, we crucially use the fact that all perturbation terms $\delta_{ij}$ are distinct. This occurs with probability $1$ since they are drawn i.i.d. from a continuous distribution:
    \[
        \kappa_d(p) = \begin{cases}
            1/2&\text{if }p = 1/2-\eps + \delta_{0j}\text{ for some }j \le \eps'k,\\
            1/2-\eps &\text{if }p = 1/2-\eps+\delta_{0j} \text{ for some }j > \eps'k,\\
            1/2&\text{if }p = 1/2+\eps + \delta_{1j}\text{ for some }j \le \eps'k,\\
            1/2 + \eps &\text{if }p = 1/2+\eps + \delta_{1j}\text{ for some }j > \eps'k.
        \end{cases}
    \]
    Indeed, essentially the same analysis as in part (b) shows that $\kappa \circ p_d$ is calibrated and
    \[
        \E_{\x \sim \mD_{\X_d}} \bigl\lvert \kappa(p_d(\x)) - p_d(\x))\bigr\rvert \le O(\eps^2).
    \]
    Finally, we argue that cases (c) and (d) are hard to distinguish from few i.i.d. prediction-label samples $(p(\x_1), \y_1), \ldots, (p(\x_s), \y_s)$. For this, observe that in a sample of size $s$, the probability that we see a collision (i.e. sample $(p(\x), \y)$ for the same point $\x = x \in \X$ twice) is at most $\binom{s}{2}/k \le O(s^2/k)$. Moreover, by construction, conditional on seeing no collisions, each conditional distribution $\y_i|p(\x_i)$ is uniform on $\{0, 1\}$ in both cases (c) and (d) over the randomness in the choice of the parameters $\delta_{ij}$. This is clear for case (c), whereas for case (d), it follows from the same calculation as in part (b):
    \[
        \frac{\eps' \cdot 1 + \Bigl(\frac{1}{2}-\eps'\Bigr)\cdot\Bigl(\frac{1}{2}-\eps\Bigr)}{\eps' + \Bigl(\frac{1}{2}-\eps'\Bigr)} = \frac{1}{2}.
    \]
    We conclude that to distinguish cases (c) and (d) with at least constant advantage over the randomness in $\delta_{ij}$, we require at least $s \ge \Omega(\sqrt{k})$ prediction-label samples. \qedhere
\end{enumerate}
\end{proof}

\bibliography{refs}
\bibliographystyle{alpha}
\end{document}